\newtheorem*{theorem*}{Theorem}
\newtheorem{theorem}{Theorem}
\newtheorem{lemma}{Lemma}
\newtheorem{proposition}{Proposition}
\newcommand{\E}{\mathbb{E}}
\newcommand{\euclid}{\mathbb{R}}
\newcommand{\A}{\mathcal{A}}
\newcommand{\steinset}{\mathcal{C}_k}
\newcommand{\rkhs}{\mathcal{K}_{k}}
\newcommand{\kccsd}{\mathcal{S}(q, \A_p, \steinset)}
\newcommand{\norm}[1]{\left\lVert#1\right\rVert}
\newcommand{\prob}{\mathbb{P}}
\newcommand{\bmx}{\bm{x}}
\newcommand{\bmy}{\bm{y}}
\title{Kernelized Complete Conditional Stein Discrepancies}
\author[1]{\textbf{Raghav Singhal}}
\author[2]{\textbf{Xintian Han}}
\author[2]{\textbf{{Saad Lahlou}}}
\author[1,2]{\textbf{{Rajesh Ranganath}}}
\affil[1]{Courant Institute of Mathematical Sciences, New York University}
\affil[2]{Center for Data Science, New York University}
\begin{document}
\maketitle

\begin{abstract}
  Much of machine learning relies on comparing distributions with
  discrepancy measures. Stein's method creates discrepancy
  measures between two distributions that require only the
  unnormalized density of one and samples from the
  other. Stein discrepancies can be combined with kernels to
  define kernelized Stein discrepancies (\textsc{ksd}s).
  While kernels make Stein discrepancies tractable, they
  pose several challenges in high dimensions. We introduce
  kernelized complete conditional Stein discrepancies (\textsc{kcc-sd}s).
  Complete conditionals turn a multivariate
  distribution into multiple univariate distributions.
  We show that \textsc{kcc-sd}s
  distinguish distributions. To show the efficacy of \textsc{kcc-sd}s in
  distinguishing distributions, we introduce a
  goodness-of-fit test using \textsc{kcc-sd}s. We empirically show that
  \textsc{kcc-sd}s have higher power over baselines and use
  \textsc{kcc-sd}s to assess sample quality in Markov chain Monte Carlo.
\end{abstract}

\section{Introduction}\label{sec:intro}
Discrepancy measures that compare a distribution $p$,
known up to normalization, with a distribution $q$, known
via samples from it,
can be used for finding good variational approximations \citep{ranganath2016operator, liu2016stein},
checking the quality of \textsc{mcmc}
samplers \citep{gorham2015measuring, gorham2017measuring}, goodness-of-fit testing \citep{liu2016kernelized}, parameter estimation \citep{barp2019kernel} and multiple model comparison \citep{lim2019kernel}.
There are several difficulties with using traditional discrepancies
like Wasserstein metrics or total variation distance for these tasks.
Mainly, $p$ can be hard to sample so expectations under $p$ cannot
be computed. These challenges lead to the following desiderata for a
discrepancy $D$ \citep{gorham2015measuring}.
\begin{enumerate}
\item\label{desiderata_1}
  \textbf{Tractable} $D$ uses samples
  from $q$, and evaluations of (unnormalized) $p$.
\item\label{desiderata_2}
  \textbf{Distinguishing Distributions} $D(p, q) = 0$ if and only if $p$ is
  equal in distribution to $q$.
\end{enumerate}
These desiderata ensure that the discrepancy is non zero when $p$
does not equal $q$ and that it can be easily computed. To meet these desiderata,
\citet{chwialkowski2016kernel, oates2017control, gorham2017measuring, liu2016kernelized}
developed kernelized Stein discrepancies (\textsc{ksd}s).
\textsc{ksd}s measure the expectation of functions under $q$
that have expectation zero
under $p$. These functions are constructed by applying Stein's operator to a reproducing
kernel Hilbert space (\textsc{rkhs}).

In high dimensions, many popular kernels evaluated on a pair of points
are near zero. Thus, \textsc{ksd}s in high dimensions can
be near zero, making detecting differences between high dimensional
distributions difficult. The median heuristic can be used to address this to
some extent, but \textsc{ksd}s with the median heuristic can still have low
power in moderately high dimensions (see Figure 1, \citet{jitkrittum2017linear}).
We develop kernelized complete
conditional Stein discrepancies (\textsc{kcc-sd}s). These discrepancies use
complete conditionals: the distribution of one variable
given the rest. Complete conditionals are univariate distributions.
Rather than using multivariate kernels, \textsc{kcc-sd}s use
univariate kernels to ensure the complete conditionals match,
making it easier to compare distributions in high dimensions.

A given Stein discrepancy relies on a supremum over a class of
test functions called the Stein set. \textsc{kcc-sd}s differ from
\textsc{ksd}s in that \textsc{kcc-sd}s compute a separate supremum for each
complete conditional. An immediate question is whether there is a computable
closed form and whether the discrepancy can be used to distinguish distributions.
We show that \textsc{kcc-sd}s have a closed form and distinguish between
distributions. Computing \textsc{kcc-sd} requires sampling from a complete
conditional of $q$, which can be infeasible in some instances. To address this,
we introduce approximate \textsc{kcc-sd} that uses a learned sampler for the complete
conditional.

To show the efficacy of \textsc{kcc-sd} and approximate \textsc{kcc-sd} in distinguishing
distributions we introduce a goodness-of-fit test
\citep{chwialkowski2016kernel}. We show that \textsc{kcc-sd} and
approximate \textsc{kcc-sd} have
higher power than \textsc{ksd} and other baselines.
We empirically show
that approximate \textsc{kcc-sd} does not suffer from a loss in power due to an
increase in dimension. We also demonstrate that \textsc{kcc-sd} and
approximate \textsc{kcc-sd} can be used to select sampler hyperparameters and
can be used to assess sample quality in a Gibbs sampler.

\paragraph{Related Work.}
There have been several lines of work which use factorizations of the
distribution $p$ to address the curse of dimensionality.
\citet{wang2017stein, zhuo2017message} use the Markov blanket of each node
to define a graphical version of \textsc{ksd} to alleviate the curse of
dimensionality. Our approach does not presume a graphical structure of $p$ or
$q$. \citet{wang2017stein} shows that unless the graphical structure for $p, q_n$
match, the graph based \textsc{ksd} converging to zero does not imply that
$q_n$ converges in distribution to $p$.

\citet{gong2020sliced} introduce the maximum
sliced kernelized Stein discrepancy (\textsc{MAXsksd}), which also uses
low-dimensional kernels by projecting into a $1$-dimensional
space. Computing \textsc{MAXsksd} requires
optimizing a projection direction that is specific to both
sampling distribution $q$ and the unnormalized distribution $p$.
This can be expensive when testing multiple distributions
or when changing the parameters of an
unnormalized model to fit a collection of
samples. Approximate \textsc{kcc-sd} requires learning conditional
distributions specific only to the sampling distribution $q$.
Similar to approximate \textsc{kcc-sd} with parametric
conditional estimates, the closed form for \textsc{MAXsksd} depends on
the optimal direction, therefore the power of their method depends on the
quality of the optimization, which can be difficult to guarantee for
arbitrary log probabilities.

\textsc{ksd}s suffer from a computational cost that is quadratic in the number
of samples. \citet{huggins2018random} develop random feature Stein
discrepancies \textsc{r}$\Phi$\textsc{sd}, which run in linear time
and perform as well as or better than quadratic-time \textsc{ksd}s; these
ideas can be applied to \textsc{kcc-sd}s. \citet{chen2018stein} introduces
the Stein points method which introduces a method to
select points to minimize the Stein discrepancy between the
empirical distribution supported at the selected points and the
posterior.

\citet{chwialkowski2016kernel} introduced \textsc{ksd} as a test statistic for
a goodness-of-fit test, which also suffers from the curse of dimensionality due
to the use of kernels in high dimensions, along with a computational cost
quadratic in the number of samples. \citet{jitkrittum2017linear} introduce a
linear-time discrepancy, finite-set Stein discrepancy (\textsc{fssd}).
The authors introduce an optimized version of \textsc{fssd} which allows one to
find features that best indicate the differences between the samples and the
target density. \textsc{fssd} while having a computational cost linear
in sample size, also leads to a test with lower power in high dimensions.


\section{Kernelized Stein Discrepancies}
Stein's method provides recipes for constructing expectation zero test functions
of distributions known up to a normalization constant. For a distribution $p$
with a integrable score function\footnote[1]{The score function in general is
the gradient of the log-likelihood with respect to the parameter vector. We
however refer to the gradient of the log-likelihood with respect to the
input \citep{hyvarinen2005estimation}.}, $\nabla_{\bm{x}} \log p(\bm{x})$,
we can create a \emph{Stein operator}, $\A_{p}$, that
acts on test functions $f: \euclid^d \rightarrow \euclid^d$ satisfying
regularity and boundary conditions (Proposition 1, \citep{gorham2015measuring}),
such that

\begin{align*}
  \E_{p(\bm{x})} \left[ \A_{p(\bm{x})} f(\bm{x}) \right] = 0 .
\end{align*}
This relation called \emph{Stein's identity} is used
to create \emph{Stein discrepancies} $\mathcal{S}(q, \A_{p}, \mathcal{H})$,
defined as
\begin{align*}
  \mathcal{S}(q, \A_{p}, \mathcal{H}) &= \sup_{f \in \mathcal{H}}
                                        \left| \E_{q(\bm{x})}[\A_{p(\bm{x})}f(\bm{x})] -
                                        \E_{p(\bm{x})}[\A_{p(\bm{x})}f(\bm{x})] \right| \\
   &= \sup_{f \in \mathcal{H}} \left|
  \E_{q(\bm{x})} \left[ \A_{p(\bm{x})} f(\bm{x}) \right] \right| \ ,
\end{align*}
where $\mathcal{H}$ is a function space known as the \emph{Stein set},
with its functions satisfying some boundary and regularity conditions. To make
the Stein discrepancy simpler to compute, \citet{chwialkowski2016kernel,
 oates2017control, gorham2017measuring, liu2016kernelized} used reproducing kernel
Hilbert spaces (\textsc{rkhs}) as the Stein set to
introduce kernelized Stein discrepancies (\textsc{ksd}). Let
$k: \euclid^{d} \times \euclid^{d} \rightarrow \euclid$ be the kernel of an
\textsc{rkhs} $\mathcal{K}_k$, the \textsc{rkhs} consists of functions, $g :
\euclid^d \rightarrow \euclid$, satisfying the reproducing property $g(\bm{x}) =
{\langle {g, k(\bm{x}, \cdot)} \rangle}_{\mathcal{K}_k}$. \textsc{ksd}s are defined
by the Stein set
\begin{align*}
  \mathcal{G}_k = \left \{g = (g_1, \dots, g_d): g_{i} \in
\mathcal{K}_k, \sum_{i=1}^{d} \norm{g_i}_{\mathcal{K}_k} \leq 1 \right \} \ .
\end{align*}
This construction of the Stein set using an \textsc{rkhs} ensures that the
Stein discrepancy has a closed form.
\begin{proposition}[Gorham and Mackey, 2017]
  Suppose $k \in C^{(1, 1)}$ and for each $j \in \{1, \dots, d\}$, define the
  Stein kernel as follows:
  \begin{align} \label{eq:ksd_closed_form}
    k_{0}^{j}(\bm{x}, \bm{y}) &= b_{j}(\bm{x}) b_j(\bm{y}) k(\bm{x}, \bm{y})
       + \nabla_{x_j} \nabla_{y_j} k(\bm{x}, \bm{y}) \\
    \nonumber
    & \quad +b_j(\bm{x}) \nabla_{y_j} k(\bm{x}, \bm{y}) +
       b_j(\bm{y}) \nabla_{x_j} k(\bm{x}, \bm{y}) \ ,
  \end{align}
  where $b_j(\bm{x}) = \nabla_{x_j} \log p(\bm{x})$.
  If $\sum_{j=1}^{d} \E_{q}[{k_{0}^{j}(\bm{x}, \bm{x})}^{1/2}]
  < \infty$, then \textsc{ksd} has a closed form. Given by
  $\mathcal{S}(q, \A_{p}, \mathcal{G}_k) = \norm{\bm{w}}_{2}$,
  where $ w_j^{2} \equiv
  \E_{q(\bm{x}) \times q(\bm{y})} \left[ k_0^{j}(\bm{x}, \bm{y}) \right]$
  with $\bm{x}, \bm{y} \overset{i.i.d}{\sim} q$.
\end{proposition}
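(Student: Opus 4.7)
The plan is to push the Stein operator off of $f$ and onto the reproducing kernel, so that $\A_p f(\bmx)$ becomes a sum of $\rkhs$ inner products between $f_j$ and a data-dependent feature. Once $\E_q[\A_p f(\bmx)]$ is written as a linear functional of $(f_1,\dots,f_d)$ in the product \textsc{rkhs} $\rkhs^{d}$, the supremum over $\mathcal{G}_k$ collapses to a standard Hilbert-space duality calculation, and the resulting squared norm is identified with the claimed $\E_{q \times q}[k_0^{j}]$.

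Concretely, writing the (Langevin) Stein operator on $f = (f_1, \dots, f_d)$ as
\begin{align*}
\A_p f(\bmx) = \sum_{j=1}^{d} \bigl[ b_j(\bmx)\, f_j(\bmx) + \nabla_{x_j} f_j(\bmx) \bigr],
\end{align*}
I would invoke the reproducing property and its derivative form (valid because $k \in C^{(1,1)}$), $f_j(\bmx) = \langle f_j, k(\bmx, \cdot)\rangle_{\rkhs}$ and $\nabla_{x_j} f_j(\bmx) = \langle f_j, \nabla_{x_j} k(\bmx, \cdot)\rangle_{\rkhs}$, to define the feature $\xi_j(\bmx) \equiv b_j(\bmx)\, k(\bmx, \cdot) + \nabla_{x_j} k(\bmx, \cdot) \in \rkhs$, giving $\A_p f(\bmx) = \sum_{j=1}^{d} \langle f_j, \xi_j(\bmx)\rangle_{\rkhs}$. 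A direct expansion of $\langle \xi_j(\bmx), \xi_j(\bmx)\rangle_{\rkhs}$ using the reproducing identities shows $\norm{\xi_j(\bmx)}_{\rkhs}^{2} = k_0^{j}(\bmx, \bmx)$, so the hypothesis $\sum_j \E_q[k_0^{j}(\bmx, \bmx)^{1/2}] < \infty$ is exactly $\sum_j \E_q[\norm{\xi_j(\bmx)}_{\rkhs}] < \infty$. This permits defining $\mu_j \equiv \E_{q(\bmx)}[\xi_j(\bmx)] \in \rkhs$ as a Bochner integral and pulling the expectation inside the inner product to obtain $\E_q[\A_p f(\bmx)] = \sum_{j=1}^{d} \langle f_j, \mu_j\rangle_{\rkhs}$.

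The supremum is then evaluated by Cauchy--Schwarz in $\rkhs^{d}$: $|\sum_j \langle f_j, \mu_j\rangle_{\rkhs}| \leq (\sum_j \norm{f_j}^{2})^{1/2} (\sum_j \norm{\mu_j}^{2})^{1/2}$, and the upper bound $(\sum_j \norm{\mu_j}^{2})^{1/2}$ is attained by aligning each $f_j$ with $\mu_j$. Finally, applying Fubini to the double expectation and using
$\langle k(\bmx, \cdot), k(\bmy, \cdot)\rangle_{\rkhs} = k(\bmx, \bmy)$,
$\langle k(\bmx, \cdot), \nabla_{y_j} k(\bmy, \cdot)\rangle_{\rkhs} = \nabla_{y_j} k(\bmx, \bmy)$,
and $\langle \nabla_{x_j} k(\bmx, \cdot), \nabla_{y_j} k(\bmy, \cdot)\rangle_{\rkhs} = \nabla_{x_j}\nabla_{y_j} k(\bmx, \bmy)$
identifies $\langle \xi_j(\bmx), \xi_j(\bmy)\rangle_{\rkhs}$ with $k_0^{j}(\bmx, \bmy)$, so $\norm{\mu_j}_{\rkhs}^{2} = w_j^{2}$ and the closed form $\norm{\bm{w}}_{2}$ follows.

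The main obstacle is the measure-theoretic step: legitimately pulling the expectation under $q$ inside the $\rkhs$ inner product requires Bochner integrability of $\bmx \mapsto \xi_j(\bmx)$, and the subsequent double-expectation manipulation for $\norm{\mu_j}^{2}$ requires Fubini. Both hold thanks to the stated moment condition together with strong measurability of $\xi_j(\cdot)$, which is immediate from continuity of $\bmx \mapsto k(\bmx, \cdot)$ and $\bmx \mapsto \nabla_{x_j} k(\bmx, \cdot)$ in the $\rkhs$ norm (guaranteed by $k \in C^{(1,1)}$) and measurability of the score $b_j$. This is the only place where the regularity assumptions on $k$ and $p$ are essentially used; everything else reduces to the algebraic identity $\norm{\mu_j}_{\rkhs}^{2} = \E_{q \times q}[k_0^{j}]$ and the standard dual-norm computation in a product Hilbert space.
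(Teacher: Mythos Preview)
Your approach is essentially the standard one and mirrors the technique the paper uses in Appendix~\ref{sec:closed_form} for its own Theorem~\ref{thm:closed_form}: push the operator onto the feature map, invoke Bochner integrability to form the mean embedding $\mu_j$, and read off the supremum by Hilbert-space duality. The paper does not give an independent proof of this proposition (it is cited from Gorham and Mackey), so there is nothing further to compare on method.

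There is, however, a genuine gap in your duality step. The Stein set $\mathcal{G}_k$ as written in the paper is the $\ell_1$-ball $\sum_{j} \norm{f_j}_{\rkhs} \leq 1$, not the $\ell_2$-ball $(\sum_j \norm{f_j}_{\rkhs}^{2})^{1/2} \leq 1$. Your Cauchy--Schwarz argument bounds $|\sum_j \langle f_j, \mu_j\rangle|$ by $(\sum_j \norm{f_j}^{2})^{1/2} (\sum_j \norm{\mu_j}^{2})^{1/2}$ and then implicitly uses the $\ell_2$ constraint to reach $\norm{\bm w}_2$. Under the $\ell_1$ constraint actually stated, the correct dual is $\max_j \norm{\mu_j} = \norm{\bm w}_{\infty}$, attained by concentrating all mass on the coordinate with largest $\norm{\mu_j}$; your ``align each $f_j$ with $\mu_j$'' maximizer does not lie in $\mathcal{G}_k$ once normalized to satisfy $\sum_j \norm{f_j} \leq 1$. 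The stated closed form $\norm{\bm w}_2$ matches Gorham and Mackey's result for the product-\textsc{rkhs} norm $\norm{g}_{\rkhs^{d}} = (\sum_j \norm{g_j}^{2})^{1/2}$, so the discrepancy is almost certainly a typo in the paper's definition of $\mathcal{G}_k$. Still, your write-up should flag this and make explicit which ball you are optimizing over, since otherwise the attainment claim is false as stated.
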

When the distribution $p$ lies in the class of distantly
dissipative distributions \citep{eberle2016reflection}, \textsc{ksd}s provably detect
convergence and non-convergence for $d=1$. That is $\mathcal{S}(q_n, \A_p,
\mathcal{G}_k) \rightarrow 0$ if and only if $q_n \Rightarrow p$ for sequences
$\{q_n\}$, using kernels like the radial basis
function or the inverse multi-quadratic (\textsc{imq}), \citep{gorham2017measuring}.
In $d > 2$, the \textsc{ksd} with thin
tailed kernels like the \textsc{rbf} does not detect non-convergence. But the
\textsc{ksd} with the \textsc{imq}
kernel with $\beta \in (0, 1)$ does detect
non-convergence. However, all of these kernels shrink as the $\norm{\cdot}_2$
grows, which means their associated \textsc{ksd}s
become less sensitive in higher dimensions.

Suppose $\bmx, \bmy \sim N(\bm{0},
I_d)$ then  $\E [\norm{\bmx - \bmy}^2] = 2d$,
so $k(\bmx, \bmy) = \exp(-\norm{\bmx - \bmy}^2 / 2\sigma^2)$ concentrates
around $\exp(-d/\sigma^2)$. The median heuristic, $\sigma =
\text{median}({\norm{\bmx_i - \bmx_j};i <j})$, can be used to deal with this
shrinkage.
However, \citep{ramdas2015decreasing} show that even with the median heuristic,
kernel based discrepancies can converge to zero as the dimension increases
even when the distributions are different.


\section{Kernelized Complete Conditional Stein Discrepancies.}
Complete conditionals are univariate conditional distributions,
$p(x_j|\bm{x}_{-j})$, where $\bm{x}_{-j} = \{x_1, \dots x_{j - 1}, x_{j + 1}, \dots x_{d} \}$.
Complete conditional distributions are the basis for many inference
procedures including the Gibbs sampler \citep{geman1984stochastic},
and coordinate ascent variational
inference \citep{ghahramani2001propagation}.

Using complete conditionals we construct complete conditional Stein
discrepancies (\textsc{cc-sd}s) and their kernelized versions
(\textsc{kcc-sd}s). In this work we focus on the Langevin-Stein operator
\citep{barbour1990diffusion, gorham2015measuring},
defined for differentiable functions $f: \euclid^d \rightarrow \euclid^{d}$ as follows:
\begin{align*}
  (\A_{p(\bm{x})}f)(\bm{x}) = {f(\bm{x})}^{T} \nabla_{\bm{x}} \log p(\bm{x})
  + \nabla_{\bm{x}} \cdot f(\bm{x}) = \sum_{j=1}^{d} \A^{j}_{p(\bmx)} f_{j}(\bmx) \ .
\end{align*}

\paragraph{Definition.} The score function of the complete conditional,
$\nabla_{x_j} \log p(x_j \mid \bm{x}_{-j})$, is the score function of the joint,
$\nabla_{x_j} \log p(\bm{x})$. So for $f_j : \euclid^d \rightarrow \euclid$,
\begin{align*}
  \A^{j}_{p(x_{j} \mid \bm{x}_{-j})} f_{j}(\bm{x}) &= f_{j}(\bm{x}) \nabla_{x_j}
  \log p(x_{j} \mid \bm{x}_{-j}) + \nabla_{x_{j}} f_{j}(\bm{x})
  = f_{j}(\bm{x}) \nabla_{x_j}
  \log p(\bm{x}) + \nabla_{x_{j}} f_{j}(\bm{x}) \\
  &= \A_{p(\bm{x})}^{j} f_{j}(\bm{x})
\end{align*}
Using this observation, and the fact that the complete conditionals of two
distributions $p, q$ match when the distributions match,
we define the complete conditional Stein discrepancy (\textsc{cc-sd}),
$\mathcal{S}(q, \A_{p}, \mathcal{C})$ as
\begin{align}\label{eq:ccsd}
  \sum_{j=1}^{d} \E_{q(\bm{x}_{-j})} \left[ \sup_{f_j \in \mathcal{C}^j} \E_{q(x_j \mid
  \bm{x}_{-j})}[\A^{j}_{p(x_j \mid \bm{x}_{-j})}f_j(\bm{x})] \right] \ .
\end{align}
The Stein set $\mathcal{C}$ is defined
as the set of functions, $f: \euclid^d \rightarrow \euclid^d$, with
each component $f_j(\bm{x})$ satisfying
$\max \left(\norm{f_j}_{\infty}, \norm{\nabla f_j}_{\infty}, Lip(f_j) \right) \leq
1$, where $Lip(f)$ is the Lipschitz constant of $f$.
Here, the supremum is taken inside the expectation, so we have to
solve optimization problems for each dimension and each conditional.
Similar to Stein discrepancies, \textsc{cc-sd}s can be hard to compute. In
the next section, we introduce the kernelized version which has a closed form.

\subsection{Kernelized Complete Conditional Stein Discrepancies.}
We now define the Stein set, $\steinset$, for the kernelized version of
\textsc{cc-sd}, such that we get a closed form discrepancy.

We use univariate integrally symmetric positive definite (\textsc{ispd}) kernels,
$k: \euclid \times \euclid \rightarrow \euclid$,
that satisfy the following, for $g: \euclid \rightarrow \euclid$:
\begin{align}\label{eq:ispd}
  \int_{u  \in \euclid} \int_{v \in \euclid} g(u) k(u, v) g(v) du dv > 0 \ ,
\end{align}
with $\norm{g}_{2} > 0$. Let $\rkhs$ denote the reproducing kernel Hilbert space
(\textsc{rkhs}) with kernel $k$. Functions $h \in \rkhs$ satisfy the
reproducing property, $h(x_j) = \langle {h, k(x_j, \cdot)} \rangle_{\rkhs}$
for $x_j \in \euclid$.
The \textsc{rkhs} also satisfies $\Phi_{x_j}(\cdot) = k(x_j, \cdot) \in \rkhs$.

We define $\mathcal{C}_{k}$ with a univariate kernel $k$, as
consisting of functions, $f: \euclid^d
\rightarrow \euclid^d$, whose component functions $f_j:\euclid^d
\rightarrow \euclid$ satisfy $f_{j, \bm{x}_{-j}} \equiv f_j(\cdot,
\bm{x}_{-j}) \in \rkhs$ for each $\bm{x}_{-j}$. So $f_j$ with a fixed
$\bm{x}_{-j}$ is in the \textsc{rkhs} defined by $k$. This means
\begin{align}\label{eq:stein_func_cond}
  f_{j, \bm{x}_{-j}} (x_j) = \langle{f_{j, \bm{x}_{-j}}, k(x_j, \cdot)}
  \rangle_{\rkhs} \ .
\end{align}
Let $\mathcal{C}_{k}^j$ denote the
set of functions satisfying \Cref{eq:stein_func_cond} with norm bounded by
\begin{align}\label{eq:norm_condition}
  \norm{f_{j, \bm{x}_{-j}}}_{\rkhs} \leq
\norm{\E_{q(x_j \mid \bm{x}_{-j})} \left[ \A^{j}_{p(x_j \mid \bm{x}_{-j})}
    \Phi_{x_j}  \right] }_{\rkhs} \ ,
\end{align}
for all $\bm{x}_{-j} \in \euclid^{d-1}$.

We define the kernelized complete conditional Stein
discrepancy (\textsc{kcc-sd}) $\mathcal{S}(q, \A_{p}, \steinset)$ as follows,
\begin{align}\label{eq:stein_sup}
  \sum_{j=1}^{d} \E_{q(\bm{x}_{-j})} \left[ \left| \sup_{f_j \in
  \steinset^{j}} \E_{q(x_{j} \mid \bm{x}_{-j})}
  \left[ \A^{j}_{p(x_j \mid \bm{x}_{-j})} f_j(\bm{x}) \right]
  \right| \right]
\end{align}

\paragraph{KCC-SDs admit a closed form.}
In our definition of the Stein set, we can change the kernel or the kernel
parameters in each dimension, however for clarity we do not focus on that here.
Note that the Stein set depends on both distributions $p$ and $q$. We
show that the \textsc{kcc-sd} defined in \cref{eq:stein_sup} has a closed form.

\begin{theorem}[Closed form]\label{thm:closed_form}
For a kernel $k$ which is differentiable in both arguments, we define the Stein kernel for each
  $j \in \{1, \dots, d\}$ as follows:
  \begin{align}\label{eq:closed_form}
    k^{j}_{cc}(x_j, y_j ; \bm{x}_{-j}) &= \A^{j}_{p(x_j \mid \bm{x}_{-j})} \A^{j}_{p(y_j
    \mid \bm{x}_{-j})} k(x_j, y_j) \\ \nonumber
                   &= b_j(x_j, \bm{x}_{-j})
                     b_j(y_j, \bm{x}_{-j}) k(x_j, y_j)
                  + b_j(x_j, \bm{x}_{-j})
                   \nabla_{y_{j}} k(x_j, y_j) \\ \nonumber
    & \qquad + b_j(y_j, \bm{x}_{-j}) \nabla_{x_{j}} k(x_j, y_j)
      + \nabla_{x_j} \nabla_{y_j} k(x_j, y_j) \ ,
  \end{align}
  where $b_j(\bm{x})$ is equal to $\nabla_{x_j} \log p(\bm{x})$  and
  if $\E_{q(\bm{x}_{-j})} \E_{q(x_j \mid \bm{x}_{-j})}
  \E_{q(y_j \mid \bm{x}_{-j})} \left[{k_{cc}^{j}(x_j, y_j;
        \bm{x}_{-j})}^{1/2}  \right] < \infty$,
  then the \textsc{kcc-sd} can be computed in closed form as
  $\kccsd = \norm{\bm{w}}_2^2$, where the weights, $w_{j}$ are defined as $w_j^{2} = \E_{q(\bm{x}_{-j})}
    \E_{q(x_j \mid \bm{x}_{-j})}
    \E_{q(y_j \mid \bm{x}_{-j})}
    {k_{cc}^{j}(x_j, y_j; \bm{x}_{-j})}$.
\end{theorem}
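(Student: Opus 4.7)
The plan is to reduce the inner supremum in \eqref{eq:stein_sup}---for each coordinate $j$ and each fixed conditioning $\bm{x}_{-j}$---to a Cauchy--Schwarz computation in $\rkhs$, and then evaluate the resulting squared RKHS norm in closed form using the reproducing property and its derivative.

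First, I would rewrite $\A^{j}_{p(x_j \mid \bm{x}_{-j})} f_j(\bm{x})$ as an inner product in $\rkhs$. The reproducing property $f_{j,\bm{x}_{-j}}(x_j) = \langle f_{j,\bm{x}_{-j}}, \Phi_{x_j}\rangle_{\rkhs}$, together with its derivative analogue $\nabla_{x_j} f_{j,\bm{x}_{-j}}(x_j) = \langle f_{j,\bm{x}_{-j}}, \nabla_{x_j}\Phi_{x_j}\rangle_{\rkhs}$ (valid because differentiability of $k$ in both arguments places $\nabla_{x_j}\Phi_{x_j}$ in $\rkhs$), gives
\[
\A^{j}_{p(x_j \mid \bm{x}_{-j})} f_j(\bm{x}) = \big\langle f_{j,\bm{x}_{-j}},\; b_j(\bm{x})\Phi_{x_j} + \nabla_{x_j}\Phi_{x_j} \big\rangle_{\rkhs} = \big\langle f_{j,\bm{x}_{-j}},\; \A^{j}_{p(x_j \mid \bm{x}_{-j})} \Phi_{x_j} \big\rangle_{\rkhs}.
\]
Pulling the expectation over $q(x_j \mid \bm{x}_{-j})$ inside the inner product as a Bochner integral---justified by the stated moment condition, which controls $\E\norm{\A^{j}_{p(x_j \mid \bm{x}_{-j})}\Phi_{x_j}}_{\rkhs}$---yields $\E_{q(x_j \mid \bm{x}_{-j})}[\A^{j}_{p(x_j \mid \bm{x}_{-j})}f_j(\bm{x})] = \langle f_{j,\bm{x}_{-j}},\xi_j(\bm{x}_{-j})\rangle_{\rkhs}$, where $\xi_j(\bm{x}_{-j}) := \E_{q(x_j \mid \bm{x}_{-j})}[\A^{j}_{p(x_j \mid \bm{x}_{-j})}\Phi_{x_j}]$.

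Next, I would combine Cauchy--Schwarz with the norm constraint \eqref{eq:norm_condition}, which states $\norm{f_{j,\bm{x}_{-j}}}_{\rkhs} \leq \norm{\xi_j(\bm{x}_{-j})}_{\rkhs}$. This bounds the inner supremum above by $\norm{\xi_j(\bm{x}_{-j})}_{\rkhs}^{2}$, and equality is achieved at $f_{j,\bm{x}_{-j}} = \xi_j(\bm{x}_{-j})$. Expanding this squared norm as a double Bochner integral and using $\langle \Phi_{x_j}, \Phi_{y_j}\rangle_{\rkhs} = k(x_j,y_j)$, $\langle \Phi_{x_j}, \nabla_{y_j}\Phi_{y_j}\rangle_{\rkhs} = \nabla_{y_j} k(x_j, y_j)$, and $\langle \nabla_{x_j}\Phi_{x_j}, \nabla_{y_j}\Phi_{y_j}\rangle_{\rkhs} = \nabla_{x_j}\nabla_{y_j} k(x_j, y_j)$ recovers exactly the formula for $k_{cc}^{j}$ in \eqref{eq:closed_form}. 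Because $\norm{\xi_j(\bm{x}_{-j})}_{\rkhs}^{2} \geq 0$, the outer absolute value in \eqref{eq:stein_sup} is vacuous; taking the expectation over $q(\bm{x}_{-j})$ and summing over $j$ produces $\kccsd = \sum_{j=1}^{d} w_j^{2} = \norm{\bm{w}}_2^2$.

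The main technical obstacle is the rigorous justification of the two expectation/inner-product interchanges: once for the inner conditional expectation defining $\xi_j$, and once when expanding $\norm{\xi_j(\bm{x}_{-j})}_{\rkhs}^{2}$ as a double expectation of $k_{cc}^{j}$. Both reduce to Bochner integrability of the $\rkhs$-valued random variable $\A^{j}_{p(x_j \mid \bm{x}_{-j})}\Phi_{x_j}$, whose squared RKHS norm equals $k_{cc}^{j}(x_j, x_j; \bm{x}_{-j})$; this is precisely what the assumed moment hypothesis controls. A secondary subtlety is ensuring $\nabla_{x_j}\Phi_{x_j} \in \rkhs$, a standard consequence of $k$ being differentiable in both arguments.
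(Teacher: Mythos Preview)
Your proposal is correct and follows essentially the same route as the paper: linearize $\A^{j}_{p(x_j\mid\bm{x}_{-j})}f_j$ via the reproducing property and its derivative, pull the conditional expectation inside the inner product as a Bochner integral, and then compute the supremum by Cauchy--Schwarz against the norm constraint \eqref{eq:norm_condition}, yielding $\norm{\xi_j(\bm{x}_{-j})}_{\rkhs}^2$ which expands to the double expectation of $k_{cc}^{j}$. Your treatment is in fact slightly more explicit than the paper's about why the absolute value is vacuous and about the Bochner-integrability justification for both interchanges.
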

The proof is in \cref{sec:closed_form}. \Cref{thm:closed_form} implies that the
functions, $f^{*}_{j}(x_j; \bm{x}_{-j})$, which achieve
the supremum in \Cref{eq:stein_sup} are
\begin{align}\label{eq:sup_function}
  f^{*}_j(x_j;\bm{x}_{-j}) &= \E_{q(y_{j} \mid \bm{x}_{-j})} \left[ \A^{j}_{p(y_j \mid
  \bm{x}_{-j})} \Phi_{x_j} \right] \\ \nonumber
  &= \E_{q(y_{j} \mid \bm{x}_{-j})} [ k(x_j, y_j) \nabla_{y_j} \log p(y_j \mid
    \bm{x}_{-j}) + \nabla_{y_j} k(x_j, y_j) ] \ ,
\end{align}
where $\nabla_{y_j} \log p(y_j \mid \bm{x}_{-j}) = \nabla_{y_j}
\log p(y_j, \bm{x}_{-j})$ and $\Phi_{x_j}(\cdot) = k(x_j, \cdot)$ is the feature
map.

We can also restrict to functions to the unit ball,
$\norm{f_{j, \bm{x}_{-j}}}_{\rkhs} \leq 1$, and still get a closed form for
the \textsc{kcc-sd}:
\begin{align}\label{eq:ugly_kccsd}
  \sum_j \E_{q(\bm{x}_{-j})} \sqrt{\E_{x_j, y_j \sim q(\cdot \mid
  \bm{x}_{-j})} {k_{cc}^{j}(x_j, y_j; \bm{x}_{-j})}} \ .
\end{align}
However, the closed form cannot be easily manipulated.

\paragraph{KCC-SDs can distinguish two distributions.}
We show that $\kccsd = 0$ if and only if $p = q$.
This proof relies on the \textsc{ispd} property of the kernel and an
equivalent form of the Stein operator when the score function
of $q$ exists. For
$f: \euclid^{d} \rightarrow \euclid^{d}$, note that
as $\E_{q(\bm{x})} \left[ \A_{q(\bm{x})} f(\bm{x}) \right] = 0$,
\begin{align*}
  \E_{q(\bm{x})} & \left[ \A_{p(\bm{x})} f (\bm{x}) \right] =
  \E_{q(\bm{x})} \left[ \A_{p(\bm{x})} f (\bm{x}) -  \A_{q(\bm{x})}
                   f(\bm{x}) \right]
  = \E_{q(\bm{x})} \left[ {f(\bm{x})}^{T} \nabla_{\bm{x}}
     \left( \log p(\bm{x})- \log q(\bm{x}) \right) \right] \ .
\end{align*}
Using this representation, we prove that if $p$ is equal to
$q$ in distribution, then \textsc{kcc-sd} is zero.

\begin{theorem}\label{thm:pisq}
Suppose $k$ is an \textsc{ispd} kernel and twice differentiable in both arguments, and
$\E_{q(\bm{x})}[\left\| \nabla_{\bm{x}} \log p(\bm{x})\right\|^2 ],
\E_{q(\bm{x})}[\left\| \nabla_{\bm{x}} \log q(\bm{x})\right\|^2 ]  < \infty$
where $p(\bm{x}),q(\bm{x}) > 0$ for all $\bm{x}\in \euclid^d$. If
$p\overset{d}{=}q$, then $ \kccsd=0$.
\end{theorem}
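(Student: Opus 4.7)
The plan is to show that each inner supremum in the \textsc{kcc-sd} formula \eqref{eq:stein_sup} vanishes whenever $p\overset{d}{=}q$, by adapting the add-and-subtract identity displayed just before the theorem to univariate conditionals. Since $p$ and $q$ have positive densities everywhere and are equal in distribution, we have $p(\bmx) = q(\bmx)$ for all $\bmx \in \euclid^d$. Consequently, for every coordinate $j$ and every $\bmx_{-j}$, the complete conditionals $p(x_j \mid \bmx_{-j})$ and $q(x_j \mid \bmx_{-j})$ coincide as densities in $x_j$, and in particular their scores satisfy $\nabla_{x_j}\log p(x_j \mid \bmx_{-j}) = \nabla_{x_j}\log q(x_j \mid \bmx_{-j})$.

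Next, I would apply the add-and-subtract trick to the univariate conditional rather than the joint. Fix $j$, $\bmx_{-j}$, and $f_j \in \steinset^j$ (so $f_{j,\bmx_{-j}} \in \rkhs$). The univariate Stein identity for $q(x_j \mid \bmx_{-j})$, namely $\E_{q(x_j \mid \bmx_{-j})}[\A^j_{q(x_j \mid \bmx_{-j})} f_j(\bmx)] = 0$, allows me to rewrite
\begin{align*}
  \E_{q(x_j \mid \bmx_{-j})}\!\left[\A^j_{p(x_j \mid \bmx_{-j})} f_j(\bmx)\right]
  &= \E_{q(x_j \mid \bmx_{-j})}\!\left[f_j(\bmx)\bigl(\nabla_{x_j}\log p(x_j \mid \bmx_{-j}) - \nabla_{x_j}\log q(x_j \mid \bmx_{-j})\bigr)\right],
\end{align*}
which is identically zero by the previous step. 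Hence the inner supremum in \eqref{eq:stein_sup} is zero for every $\bmx_{-j}$ and every $j$; taking the outer expectation over $q(\bmx_{-j})$ and summing over $j$ yields $\kccsd = 0$.

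The one non-algebraic step is justifying the univariate Stein identity $\E_{q(x_j \mid \bmx_{-j})}[\A^j_{q(x_j \mid \bmx_{-j})} f_j(\bmx)] = 0$, which reduces to vanishing boundary terms under integration by parts in $x_j$ with $\bmx_{-j}$ held fixed. Since $f_{j,\bmx_{-j}} \in \rkhs$ is pointwise bounded by $\norm{f_{j,\bmx_{-j}}}_{\rkhs}\sqrt{k(x_j, x_j)}$ via the reproducing property and Cauchy--Schwarz, and the hypotheses $\E_q[\norm{\nabla\log q}^2],\,\E_q[\norm{\nabla\log p}^2] < \infty$ together with twice-differentiability of $k$ supply the required integrability, this boundary argument is standard. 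It is the main (and rather minor) obstacle; everything else is algebraic substitution.
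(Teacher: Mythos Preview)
Your proposal is correct and follows essentially the same approach as the paper: both use the add--and--subtract identity $\E_q[\A_p f] = \E_q[f^T(\nabla\log p - \nabla\log q)]$ and observe that the score difference is identically zero when $p\overset{d}{=}q$. The only cosmetic difference is that you apply the identity at the conditional level for each $j$ and $\bmx_{-j}$, whereas the paper invokes it once for the joint operator; since the integrand vanishes pointwise, the two are equivalent, and your conditional phrasing is arguably cleaner given the structure of $\kccsd$.
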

This property can be see by noting that when both $p$
and $q$ have score functions, their difference will be zero
inside the operator.
The proof is available in \cref{sec:detect_non_convergence}. Similarly if $p$ is not equal to $q$ in distribution, \textsc{kcc-sd}
will be able to detect that.
\begin{theorem}\label{thm:pisnotq}
  Let $k$ be integrally strictly positive definite. Suppose if
  $\kccsd < \infty$, and $\E_{q(\bm{x})}[\left\|
    \nabla_{\bm{x}} \log p(\bm{x})\right\|^2 ],
  \E_{q(\bm{x})}[\left\| \nabla_{\bm{x}} \log q(\bm{x})\right\|^2 ]  < \infty $
  with $p(\bm{x}), q(\bm{x}) > 0$,
  then if $p$ is not equal to $q$ in distribution, then $\kccsd > 0$.
\end{theorem}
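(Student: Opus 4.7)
The plan is to establish the contrapositive: if $\kccsd = 0$, then $p \overset{d}{=} q$. By \Cref{thm:closed_form}, $\kccsd = \sum_{j=1}^d w_j^2$ with each $w_j^2 \ge 0$, so the hypothesis forces $w_j^2 = 0$ for every $j$. I would then identify $w_j^2$ with an expected squared RKHS norm: for each $\bm{x}_{-j}$, set
\[
 h_{\bm{x}_{-j}}(\cdot) \;=\; \E_{q(y_j \mid \bm{x}_{-j})}\!\left[ \A^{j}_{p(y_j \mid \bm{x}_{-j})} \Phi_{(\cdot)}(y_j) \right] \;\in\; \rkhs,
\]
which is exactly the function attaining the inner supremum (see \Cref{eq:sup_function}). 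By the reproducing property, as in the proof of \Cref{thm:closed_form}, $\|h_{\bm{x}_{-j}}\|_{\rkhs}^2$ equals the inner double expectation of $k^{j}_{cc}(x_j, y_j; \bm{x}_{-j})$, so $w_j^2 = \E_{q(\bm{x}_{-j})}[\|h_{\bm{x}_{-j}}\|_{\rkhs}^2]$. Because the integrand is non-negative, $w_j^2 = 0$ implies $\|h_{\bm{x}_{-j}}\|_{\rkhs} = 0$ for $q$-a.e.\ $\bm{x}_{-j}$.

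Next, I would invoke the integration-by-parts identity derived just before \Cref{thm:pisq}, namely $\E_{q(y_j \mid \bm{x}_{-j})}[\A^{j}_{p(y_j\mid\bm{x}_{-j})} g] = \E_{q(y_j \mid \bm{x}_{-j})}[g \cdot s_j]$ with $s_j(\bmx) := \nabla_{x_j}(\log p(\bmx) - \log q(\bmx))$, applied to $g = \Phi_{x_j}$. This yields
\[
 h_{\bm{x}_{-j}}(x_j) \;=\; \int k(x_j, y_j)\, u_{\bm{x}_{-j}}(y_j)\, dy_j, \qquad u_{\bm{x}_{-j}}(y_j) := s_j(y_j, \bm{x}_{-j})\, q(y_j\mid\bm{x}_{-j}),
\]
so $h_{\bm{x}_{-j}}$ is the kernel mean embedding of the signed density $u_{\bm{x}_{-j}}$, and
\[
 \|h_{\bm{x}_{-j}}\|_{\rkhs}^2 \;=\; \iint k(x_j, y_j)\, u_{\bm{x}_{-j}}(x_j)\, u_{\bm{x}_{-j}}(y_j)\, dx_j\, dy_j.
\]
By the \textsc{ispd} hypothesis (\Cref{eq:ispd}), this quantity is strictly positive whenever $u_{\bm{x}_{-j}} \not\equiv 0$ in $L^2$; since $q(y_j \mid \bm{x}_{-j}) > 0$ pointwise, $u_{\bm{x}_{-j}} \equiv 0$ is equivalent to $s_j(\cdot, \bm{x}_{-j}) = 0$ almost everywhere in $y_j$.

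Assembling the pieces, $\kccsd = 0$ implies that for each $j$, $s_j(\bmx) = 0$ for $q$-a.e.\ (equivalently, Lebesgue-a.e., since $q>0$) $\bmx \in \euclid^d$. Hence the distributional gradient $\nabla_{\bmx}(\log p - \log q)$ vanishes a.e.\ on the connected domain $\euclid^d$, so $\log p - \log q$ is a.e.\ equal to a constant $c$; integrating $p = e^c q$ against Lebesgue measure forces $c = 0$, giving $p = q$. The main obstacle will be the pointwise-in-$\bm{x}_{-j}$ justification of the integration-by-parts identity (i.e.\ vanishing boundary terms for $\E_{q(y_j\mid\bm{x}_{-j})}[\A^{j}_{q(y_j\mid\bm{x}_{-j})}\Phi_{x_j}] = 0$); the assumed finiteness of $\E_q[\|\nabla \log p\|^2]$ and $\E_q[\|\nabla \log q\|^2]$ propagates to $q(\bm{x}_{-j})$-a.e.\ conditional finiteness by Fubini, and combined with the boundedness/decay of $k$ and $\nabla k$ in one variable, this suffices. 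The measure-theoretic passage from "vanishing gradient a.e." to "constant", and then to $p=q$, is standard given that $p,q>0$ on all of $\euclid^d$.
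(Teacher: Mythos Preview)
Your proposal is correct and tracks the paper's argument closely: both represent each summand of $\kccsd$ as the quadratic form $\iint g_{\bm{x}_{-j}}(x_j)\,k(x_j,y_j)\,g_{\bm{x}_{-j}}(y_j)\,dx_j\,dy_j$ with $g_{\bm{x}_{-j}}(u)=q(u\mid\bm{x}_{-j})\,\nabla_u\log\tfrac{p}{q}$ (the paper's \Cref{eq:hispd}, your $u_{\bm{x}_{-j}}$), and both invoke the \textsc{ispd} property to force $g_{\bm{x}_{-j}}\equiv 0$. The one substantive difference is in the closing step: the paper argues directly (if $p\neq q$, \Cref{lemma:cc_match} produces a coordinate $j$ and a positive-measure set $B_{-j}$ on which the conditionals differ, hence $g_{\bm{x}_{-j}}\not\equiv 0$ and the quadratic form is strictly positive), whereas you take the contrapositive and conclude from $s_j\equiv 0$ a.e.\ for all $j$ that $\nabla(\log p-\log q)=0$ a.e., hence $\log p-\log q$ is constant and $p=q$. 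Your route is slightly more elementary in that it bypasses the inductive \Cref{lemma:cc_match}; the paper's route is marginally cleaner in that it avoids the real-analysis step ``gradient zero a.e.\ $\Rightarrow$ constant'', which strictly speaking needs a regularity assumption (e.g.\ local absolute continuity of $\log p-\log q$) beyond mere a.e.\ differentiability.
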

The proof is in \cref{sec:detect_non_convergence}.
Combined with the previous result, this shows that \textsc{kcc-sd}s
are non-negative and zero only when the two distributions are equal.


\section{\textsc{kcc-sd} in practice}\label{sec:kccsd_practice}
Computing the optimal test function in \textsc{kcc-sd}s,
$f^{*}_{j}(x_j; \bm{x}_{-j})$, requires sampling from the complete
conditionals, $y_j \sim q(\cdot \mid \bm{x}_{-j})$. In this section, we
detail how to compute \textsc{kcc-sd} when the complete conditionals
can be sampled. We also present a sampling procedure which can be
used to compute a lower bound of \textsc{kcc-sd} when the complete conditionals
cannot be exactly sampled.

\paragraph{Exact \textsc{kcc-sd}.}
In Algorithm~\ref{alg:exact_kccsd} in \Cref{sec:closed_form} we describe how to
compute \textsc{kcc-sd}s, given a dataset $\{ \bmx^{i} \}$
and complete conditionals $q(\cdot \mid \bm{x}_{-j})$ which
can be sampled. For instance, \textsc{kcc-sd}s can
be used to assess the sample quality of samples
from a Gibbs sampler.
Here the Gibbs sampler can be used to
generate multiple auxiliary coordinates
$y^{(i, k)}_{j} \sim p(\cdot \mid \bm{x}^{(i)}_{-j})$ using the sampling
procedure for the complete conditional used in the Gibbs sampler. The auxiliary coordinate variables
can be used to compute \textsc{kcc-sd} and can be used to assess the
quality of the empirical distribution $q_n$ defined by the samples
${\{\bm{x}^{(i)}\}}_{i=1}^n$.

\paragraph{Approximate \textsc{kcc-sd}.}

Sampling from the complete conditional can be infeasible
in several scenarios.
To resolve this, we introduce approximate \textsc{kcc-sd}s, $\mathcal{S}_{\lambda}(q, \A_{p}, \steinset)$.
Suppose $g_{j}(\bm{x}) = \E_{r_{\lambda_j}(y_j \mid \bm{x}_{-j})}
[\A^{j}_{p(y_j \mid \bm{x}_{-j})} \Phi_{x_j}]$, where $r_{\lambda_{j}}$ is a
conditional distribution, then we define approximate
\textsc{kcc-sd} as
\begin{align*}
  \mathcal{S}_{\lambda}(q, \A_{p}, \steinset) = \sum_{j=1}^d
  \E_{q(\bm{x}_{-j})} \E_{q(x_{j} \mid \bm{x}_{-j})} \A^{j}_{p(x_j \mid
  \bm{x}_{-j})} g_j(\bm{x}) .
\end{align*}

Algorithm~\ref{alg:approx_kccsd_alg} in \Cref{sec:appendix_approx_kccsd} summarizes how to compute
approximate \textsc{kcc-sd}. We split the dataset ${\{\bmx\}}_{i=1}^{n}$ into a
training, validation and test set. We train a sampler on the training set
and select the model based on the lowest loss on the validation set, and then
generate samples $y_j$ from that model. \textsc{kcc-sd} is then computed on the
test set.

The reduction to probabilistic regression can make use of powerful models, such
as conditional kernel density estimation \citep{hansen2004nonparametric}
or neural network based models.
The quality of approximate \textsc{kcc-sd} depends on the performance of
the learned sampler on held-out data; this performance can be checked
on a validation set.
Formally, if the distributions ${\{r_{\lambda_{j}}\}}_{j=1}^{d}$
satisfy a $\rho$-transport inequality (Definition 3.58,
\citep{wainwright2019high}) and satisfy
$\sup_{\bm{x}_{-j}} \textsc{kl}( q(\cdot | \bmx_{-j})
\mid \mid r_{\lambda_j}) < \epsilon_j$, then we can bound the
difference between approximate \textsc{kcc-sd} and \textsc{kcc-sd}.
\begin{lemma}\label{lemma:gen_bound}
  Suppose the model class $r_{\lambda_j}$ satisfies a $\rho$-transport
  inequality and $\nabla_{\bm{x}} \log p(\bm{x})$ is Lipschitz and
  $\E_{q}[\norm{\nabla_{\bm{x}} \log p(\bm{x})}],
  \E_{r_{\lambda_j}}[\norm{\nabla_{x_j} \log p(x_j \mid \bmx_{-j})}] < \infty$, and the kernel $k$
  is bounded with $\nabla_{x_j} k(x_j, y_j)$ Lipschitz, then
  \begin{align*}
    \left| \mathcal{S}(q, \A_p, \mathcal{C}_k) - \mathcal{S}_{\lambda}(q, \A_p,
    C_{k}) \right| \leq \sum_{j=1}^{d} K_{1, j} \sqrt{2\rho^{2} \epsilon_j} +
    \sqrt{K_{2, j} \sqrt{2 \rho^{2} \epsilon_j}}
  \end{align*}
  where $\sup_{\bmx_{-j}} \textsc{kl}( q(\cdot | \bmx_{-j}) \mid \mid r_{\lambda_j}) < \epsilon_j$ and $K_{1, j}, K_{2, j}$ are positive constants.

\end{lemma}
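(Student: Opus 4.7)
My approach begins by expressing the difference in a common RKHS form. Using the reproducing property and the identification in \Cref{thm:closed_form}, the maximizer is
\[
f_j^*(x_j; \bmx_{-j}) = \E_{q(y_j \mid \bmx_{-j})}[\A^j_{p(y_j \mid \bmx_{-j})} \Phi_{x_j}],
\]
and symmetrically $g_j(x_j;\bmx_{-j}) = \E_{r_{\lambda_j}(y_j \mid \bmx_{-j})}[\A^j_{p(y_j\mid\bmx_{-j})}\Phi_{x_j}]$, so a short calculation gives
\[
\mathcal{S}(q,\A_p,\steinset) - \mathcal{S}_\lambda(q,\A_p,\steinset) = \sum_{j=1}^d \E_{q(\bmx_{-j})}\E_{q(x_j\mid\bmx_{-j})}[\A^j_p h_j(\bmx)],
\]
where $h_j := f_j^* - g_j$ is a difference of Bochner integrals of the common integrand $\A^j_p\Phi_{x_j}$ against the two conditional measures $q(\cdot\mid\bmx_{-j})$ and $r_{\lambda_j}(\cdot\mid\bmx_{-j})$. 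The triangle inequality then reduces the task to bounding each coordinate summand.

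For each coordinate I would expand $\A^j_p h_j = h_j\,\nabla_{x_j}\log p + \nabla_{x_j}h_j$ and control each piece via Kantorovich--Rubinstein duality together with the $\rho$-transport inequality: for any $L$-Lipschitz $\phi(y_j)$,
\[
|\E_{q(y_j\mid\bmx_{-j})}[\phi] - \E_{r_{\lambda_j}}[\phi]| \leq L\,W_1(q,r_{\lambda_j}) \leq L\sqrt{2\rho^2\epsilon_j}.
\]
Applied to $\phi = \A^j_p\Phi_{x_j}$ this controls $|h_j|$; applied to $\phi = \nabla_{x_j}\A^j_p\Phi_{x_j}$ it controls $|\nabla_{x_j}h_j|$. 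The $y_j$-Lipschitz moduli of these integrands are extracted from boundedness of $k$, Lipschitzness of $\nabla_{x_j}k$, and the Lipschitz assumption on $\nabla_{\bmx}\log p$.

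The $\nabla_{x_j}h_j$ contribution is the easier of the two: its integrand $\nabla_{x_j}k(x_j,y_j)\,b_j(y_j,\bmx_{-j}) + \nabla_{x_j}\nabla_{y_j}k(x_j,y_j)$ has a $y_j$-Lipschitz modulus uniformly bounded by the stated hypotheses, yielding after integration over $x_j\sim q$ the linear-in-$\sqrt{\epsilon_j}$ term $K_{1,j}\sqrt{2\rho^2\epsilon_j}$, where $K_{1,j}$ absorbs the kernel moduli and $\E_q[\norm{\nabla\log p}]$. The $h_j\,\nabla_{x_j}\log p$ piece is the main obstacle: its integrand $k(x_j,y_j)\,b_j(y_j,\bmx_{-j}) + \nabla_{y_j}k(x_j,y_j)$ has a $y_j$-Lipschitz modulus that scales like $|b_j(y_j,\bmx_{-j})|$, which is unbounded under the assumptions (Lipschitzness of $\nabla_{\bmx}\log p$ bounds the Hessian, not the score itself). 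I would split off the uniformly Lipschitz summand $\nabla_{y_j}k$ (handled by KR directly) and bound the remaining $k\cdot b_j$ contribution by combining a pointwise Wasserstein estimate carrying an unbounded local-Lipschitz factor with H\"older against $\E_q[\norm{\nabla_{x_j}\log p}]$ and $\E_{r_{\lambda_j}}[\norm{\nabla_{x_j}\log p}]$.

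The interpolation between an $L^\infty$ bound and the $L^1$-Wasserstein bound needed to tame the unbounded local-Lipschitz factor produces a square root of $\sqrt{2\rho^2\epsilon_j}$, accounting for the $\epsilon_j^{1/4}$ rate in the second term $\sqrt{K_{2,j}\sqrt{2\rho^2\epsilon_j}}$. Summing the two contributions over $j$ and applying the triangle inequality yields the lemma, with $K_{1,j},K_{2,j}$ absorbing the kernel moduli, Lipschitz constant of $\nabla\log p$, and the hypothesized $L^1$ moments of the score. The core conceptual obstacle is precisely the unboundedness of the score: because only $\nabla_{\bmx}\log p$, and not $\log p$'s score itself, is bounded by the hypotheses, the clean transport bound that works for the $\nabla_{x_j}h_j$ piece is unavailable on the $h_j\,\nabla_{x_j}\log p$ piece, forcing the slower $\epsilon_j^{1/4}$ rate there.
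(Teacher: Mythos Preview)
Your decomposition is on the right track, but there is a concrete error in your treatment of the $\nabla_{x_j}h_j$ term. You claim its $y_j$-integrand
\[
\nabla_{x_j}k(x_j,y_j)\,b_j(y_j,\bmx_{-j}) + \nabla_{x_j}\nabla_{y_j}k(x_j,y_j)
\]
has uniformly bounded $y_j$-Lipschitz modulus, but differentiating the first summand in $y_j$ produces $\nabla_{y_j}\nabla_{x_j}k\cdot b_j(y_j,\bmx_{-j})$, which carries exactly the same unbounded factor $b_j(y_j,\bmx_{-j})$ you flagged in the other piece. So both halves of your split suffer the identical obstruction, and you cannot get the clean $K_{1,j}\sqrt{2\rho^2\epsilon_j}$ contribution from either one via Kantorovich--Rubinstein alone. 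Your remark that $K_{1,j}$ ``absorbs $\E_q[\norm{\nabla\log p}]$'' is a symptom of this: nothing in the $\nabla_{x_j}h_j$ calculation, done your way, brings in the score moment.

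The paper sidesteps this by applying Fubini \emph{before} splitting. Rather than bounding $h_j(x_j)=(\E_{q_{|\bmx_{-j}}}-\E_{r_{\lambda_j}})[\A^{y_j}_p k(x_j,y_j)]$ pointwise in $x_j$, it pushes the $x_j$-integral inside to form $h(y_j)=\E_{q(x_j|\bmx_{-j})}[\A^{x_j}_p k(x_j,y_j)]$, which is genuinely bounded with bounded Lipschitz derivative because the unbounded $b_j(x_j)$ has been integrated against $q$. The problem then becomes bounding $\bigl|\E_{q(y_j|\bmx_{-j})}[\A^{y_j}_p h(y_j)] - \E_{r_{\lambda_j}}[\A^{y_j}_p h(y_j)]\bigr|$, i.e.\ a difference of Stein-operator expectations of a \emph{fixed nice function} under two laws. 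This is handled by a separate coupling lemma: for any joint law with marginals $q,r$, one splits $h(y)s_p(y)-h(y')s_p(y')$ as $h(y)(s_p(y)-s_p(y'))+s_p(y')(h(y)-h(y'))$, bounds the first piece by $B(h)L(s_p)\norm{y-y'}$, and on the second uses $\norm{h(y)-h(y')}\le\min(2B(h),L(h)\norm{y-y'})\le\sqrt{2B(h)L(h)\norm{y-y'}}$ followed by Cauchy--Schwarz against $\E_r[\norm{s_p}^2]$. Optimizing over couplings gives $K_1 W_2(q,r)+\sqrt{K_2 W_2(q,r)}$, and the $\rho$-transport inequality converts $W_2$ to $\sqrt{2\rho^2\epsilon_j}$. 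The $\min(a,b)\le\sqrt{ab}$ step is precisely the ``interpolation'' producing the $\epsilon_j^{1/4}$ rate you intuited, but it lives in the coupling argument over $y_j$, not in a pointwise KR bound on $h_j$.
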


The proof is in \Cref{sec:gen_error}. This gives us a selection criterion for
selecting models, models with a lower validation loss have approximate
\textsc{kcc-sd} values closer to \textsc{kcc-sd}.

\paragraph{Goodness of Fit testing.}
To show the efficacy of \textsc{kcc-sd} and approximate \textsc{kcc-sd} in
distinguishing distributions, we introduce a goodness-of-fit test to test
whether a given set of samples come from a target distribution. Let
the null be $H_0: p = q$, and the alternate be $H_1: p \neq q$. We do not compute the asymptotic null
distribution of the normalized test statistic, instead we use the wild-bootstrap
technique \citep{shao2010dependent,fromont2012kernels,chwialkowski2014wild,chwialkowski2016kernel}.
Define the function $h$ as
\begin{align*}
  h(\bmx^{(i)}) = \sum_{j=1}^{d} \frac{1}{m} \sum_{k=1}^{m}
                  k_{cc}(x_j^{(i)}, y^{(i, k)}_j; \bmx^{(i)}_{-j}) ,
\end{align*}
where $y^{(i, k)}_j \sim q(\cdot \mid \bmx^{(i)}_{-j})$. The test statistic
$T_n$ and the bootstrapped statistic $R_n$ are defined as
\begin{align*}
  T_n &= \frac{1}{n} \sum_{i=1}^n h(\bmx^{(i)})  \text{ and } R_{n} =
        \frac{1}{n} \sum_{i=1}^n \epsilon_{i} h(\bmx^{(i)}) ,
\end{align*}
where $\epsilon_i$ are independent Rademacher random variables and $\bmx^{(i)}$
are independently and identically distributed from $q$.
Sampling from the
complete conditional is not always computationally feasible, therefore we
propose another test with approximate \textsc{kcc-sd} as the test statistic,
which samples $y^{(i, k)}_j$ from the model $r_{\lambda_j}$.

When the null hypothesis is true, the test statistic $T_n$ converges to zero
(see \Cref{thm:pisq} for \textsc{kcc-sd} and \Cref{lemma:approx_kccsd_null} in
\Cref{sec:gft_appendix} for approximate \textsc{kcc-sd}), while $R_n$ converges to zero
under both hypotheses. In \Cref{sec:gft_appendix}, we show that $\sqrt{n}R_n$ is a good
approximation of $\sqrt{n}T_n$, so we can sample $R_n$ and approximate the quantiles
of the null distribution.

When using \textsc{kcc-sd}, under the
alternate hypothesis, $T_n$ converges to a positive constant (see \Cref{thm:pisnotq})
while $R_n$ converges to $0$. Therefore, we reject
the null hypothesis almost surely. The test can be formulated as
\begin{enumerate}
\item Compute the test statistic $T_n$.
\item Compute the estimates ${\{R_{n, l}\}}_{l=1}^{L}$.
\item Estimate the $1 - \alpha$ empirical quantile of the samples.
\item Reject the null if $T_n$ exceeds the quantile.
\end{enumerate}

When using approximate \textsc{kcc-sd}, under the null $T_n \rightarrow 0$ due
to Stein's identity (see \Cref{lemma:approx_kccsd_null} in \Cref{sec:gft_appendix}) and
the $p$-values are uniform. However, under the
alternate the asymptotic behavior of approximate \textsc{kcc-sd} depends
on the model class $r_{\lambda_j}$.
We show in the experiments that approximate
\textsc{kcc-sd} has power $1$ in comparison to baselines such
as \textsc{ksd}, \textsc{r}$\Phi$\textsc{sd} and \textsc{fssd-opt}.



\section{Experiments}
We study \textsc{kcc-sd} and approximate \textsc{kcc-sd} on comparing distributions,
selecting parameters in samplers for Bayesian neural networks, and assessing the quality
of Gibbs samplers for probabilistic matrix factorization on movie ratings.

For computing \textsc{r}$\Phi$\textsc{sd}, we use the hyperbolic secant kernel with the
median heuristic \citep{huggins2018random}. For the rest, we use
the \textsc{rbf} kernel, $k(\bmx, \bmy) = \exp(- \norm{\bmx - \bm{y}}^2 / 2\sigma^2)$.
\textsc{kcc-sd} uses $\sigma = 1$, \textsc{ksd} uses the
median heuristic, and \textsc{fssd-opt} learns the optimal
$\sigma$ parameter. For \textsc{fssd-opt} we use the code and
settings used by the authors in \citet{jitkrittum2017linear}.

To compute approximate \textsc{kcc-sd} we use a model for $r_{\lambda_j}$ based on
histograms. Suppose the samples $x_{j}$ are in an interval $I$.
Divide the interval $I$ into $m$ bins with width $\frac{1}{m}$ and
learn a neural network $f_{\theta_j}(\bm{x}_{-j})$ which predicts the
bin of $x_j$ from $\bm{x}_{-j}$. Sampling proceeds by sampling
from the categorical distribution $b_k \sim Cat(f_{\theta_j}(\bm{x}_{-j}))$,
and returning the average of the bin corresponding to $b_k$, the
sample from the categorical distribution. See \Cref{sec:appendix_experiments}
for details.
\begin{figure}[t]
  \centering
\includegraphics[scale=0.15]{./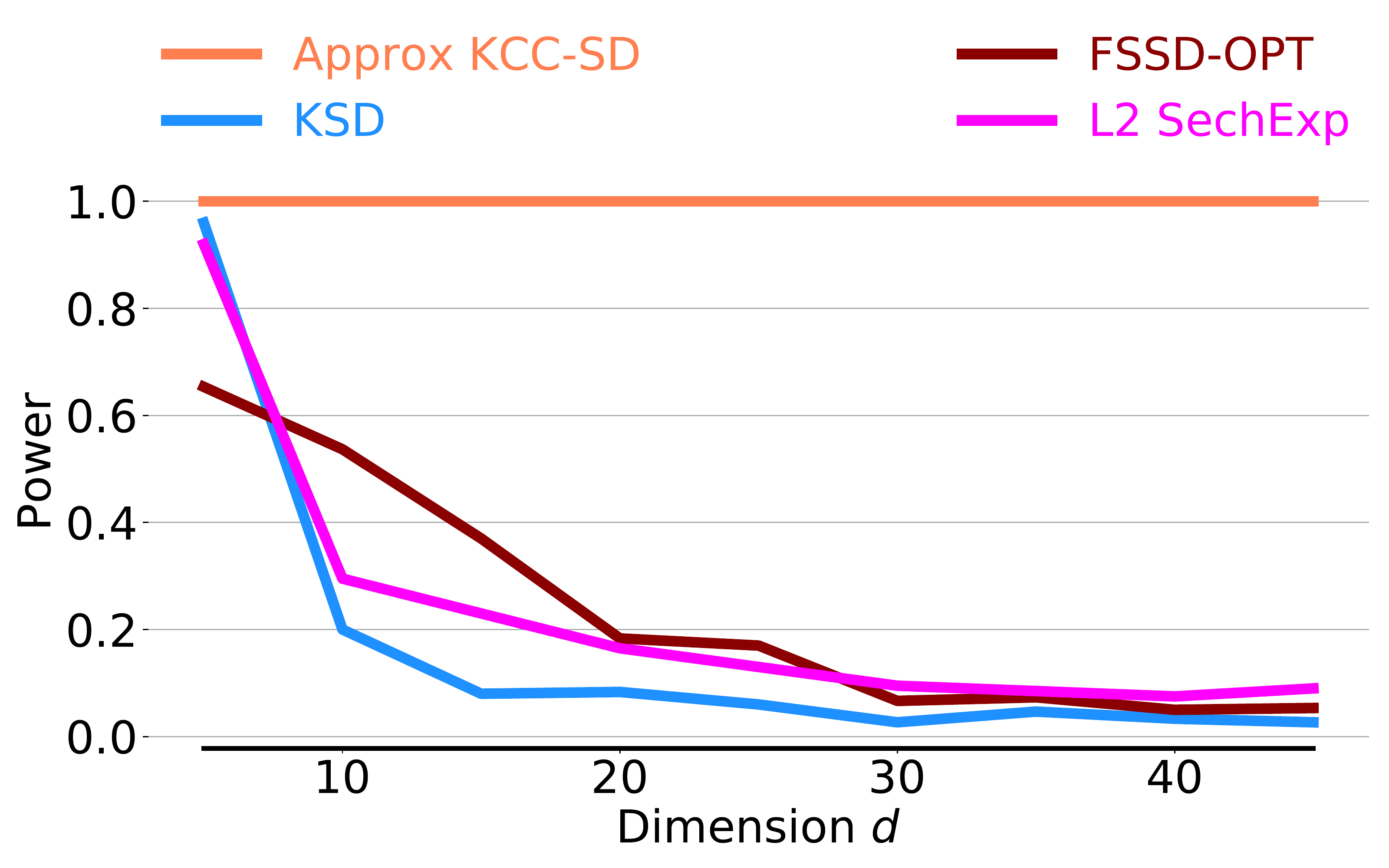}\includegraphics[scale=0.15]{./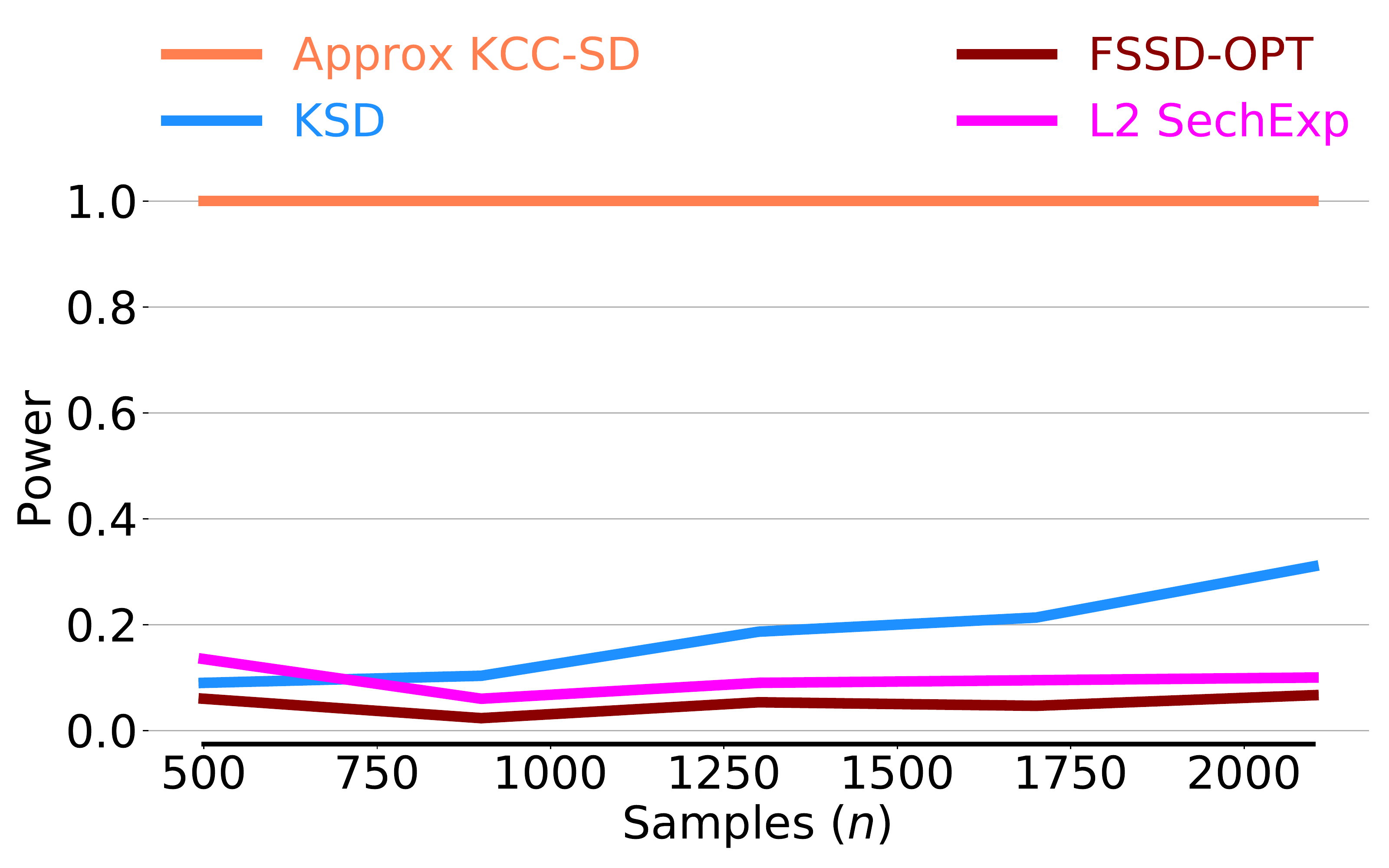}\includegraphics[scale=0.15]{./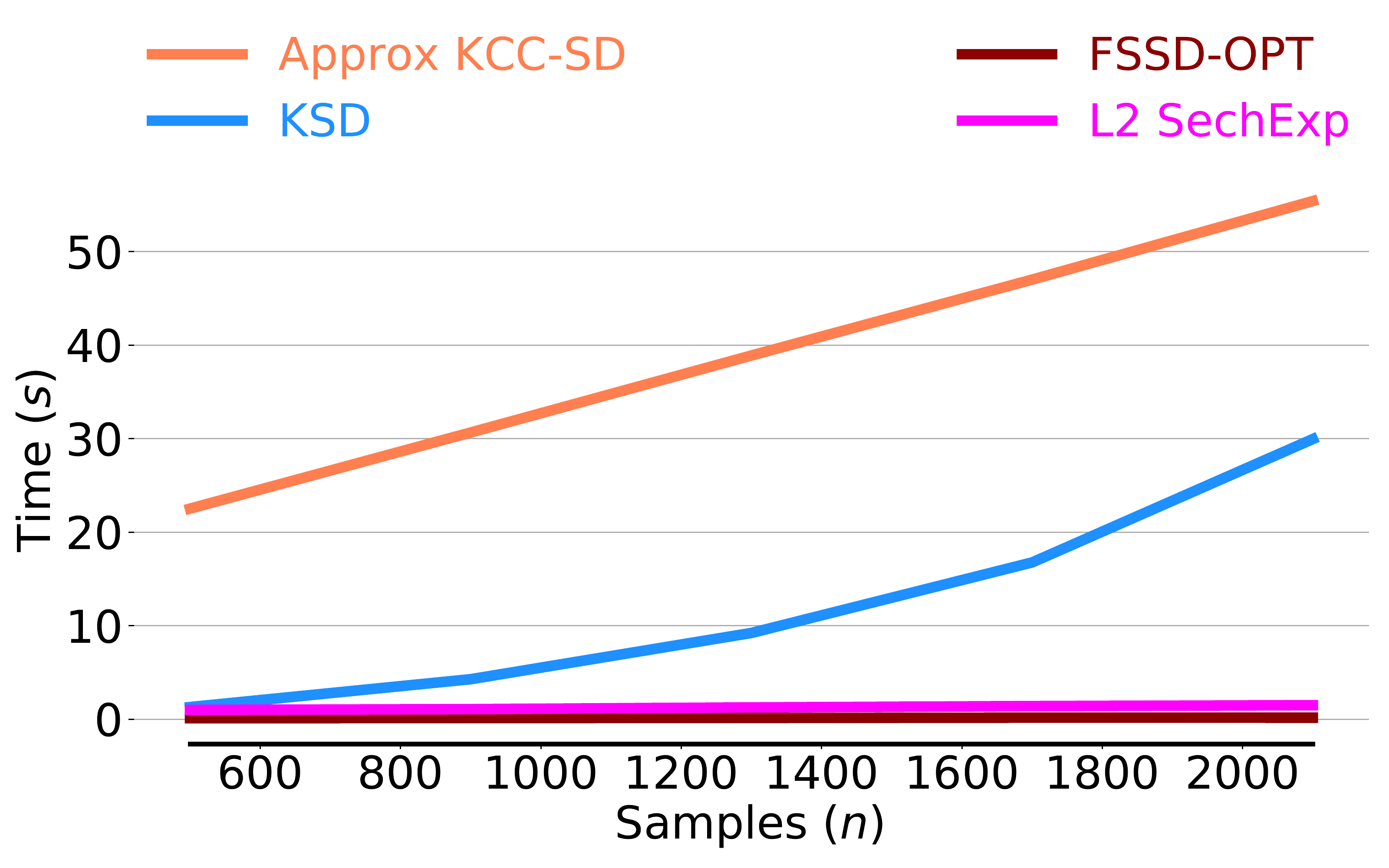}
  \caption{\textbf{\textsc{kcc-sd} has more power in high
      dimensions.} \textbf{Left:} Gaussian vs Laplace, with $n=1000$ and
    increasing dimension. Approximate \textsc{kcc-sd} has no loss in power
    compared to baseline methods. \textbf{Middle and
      Right:} Gaussian vs Laplace, with $d=30$ and increasing
    sample size. For all sample sizes studied, approximate
    \textsc{kcc-sd} has much higher power than the baseline methods, without
    requiring significantly more compute time.}\label{fig:intro}
\end{figure}

\begin{figure}[t]
  \centering
\includegraphics[scale=0.15]{./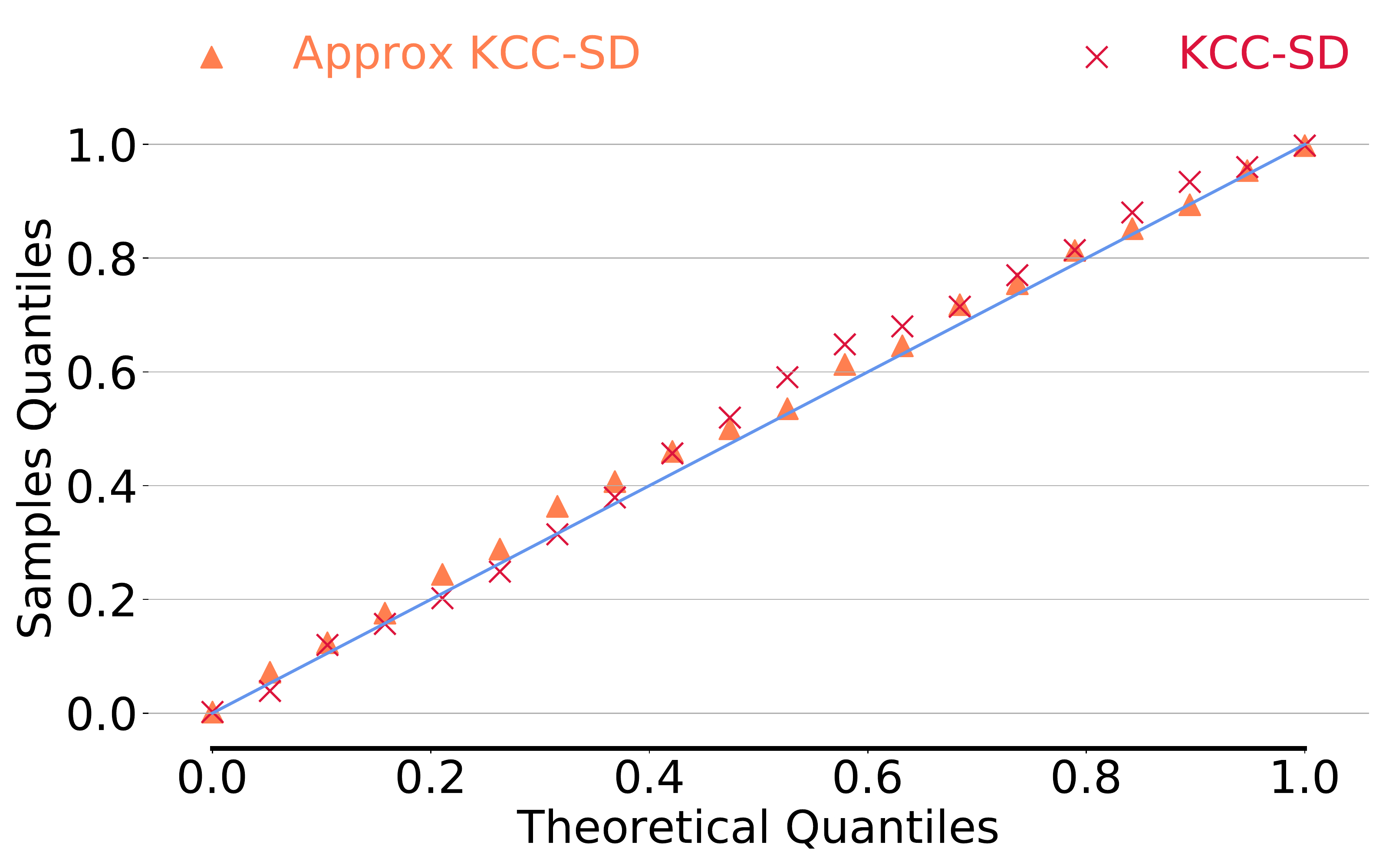}\includegraphics[scale=0.15]{./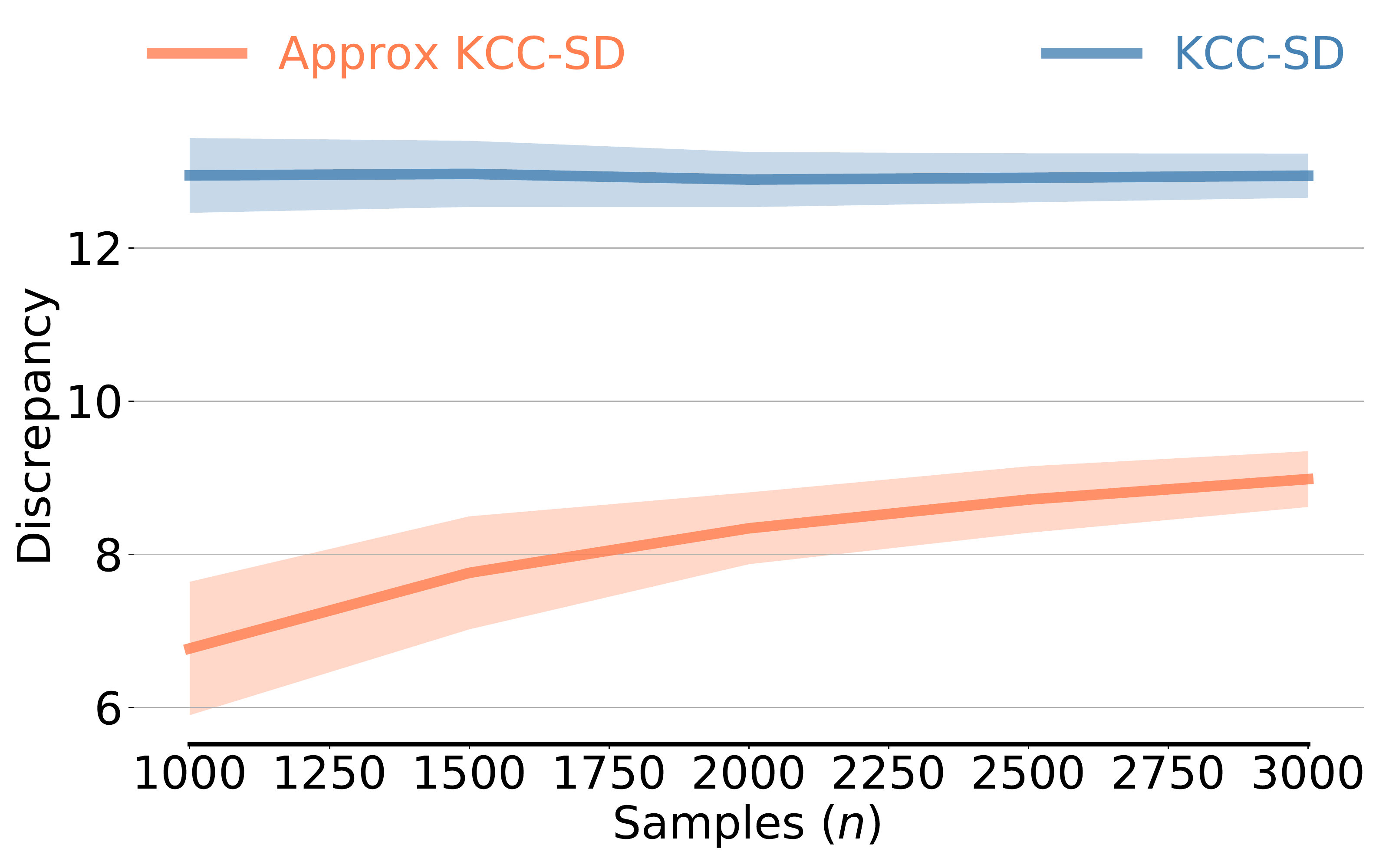}\includegraphics[scale=0.15]{./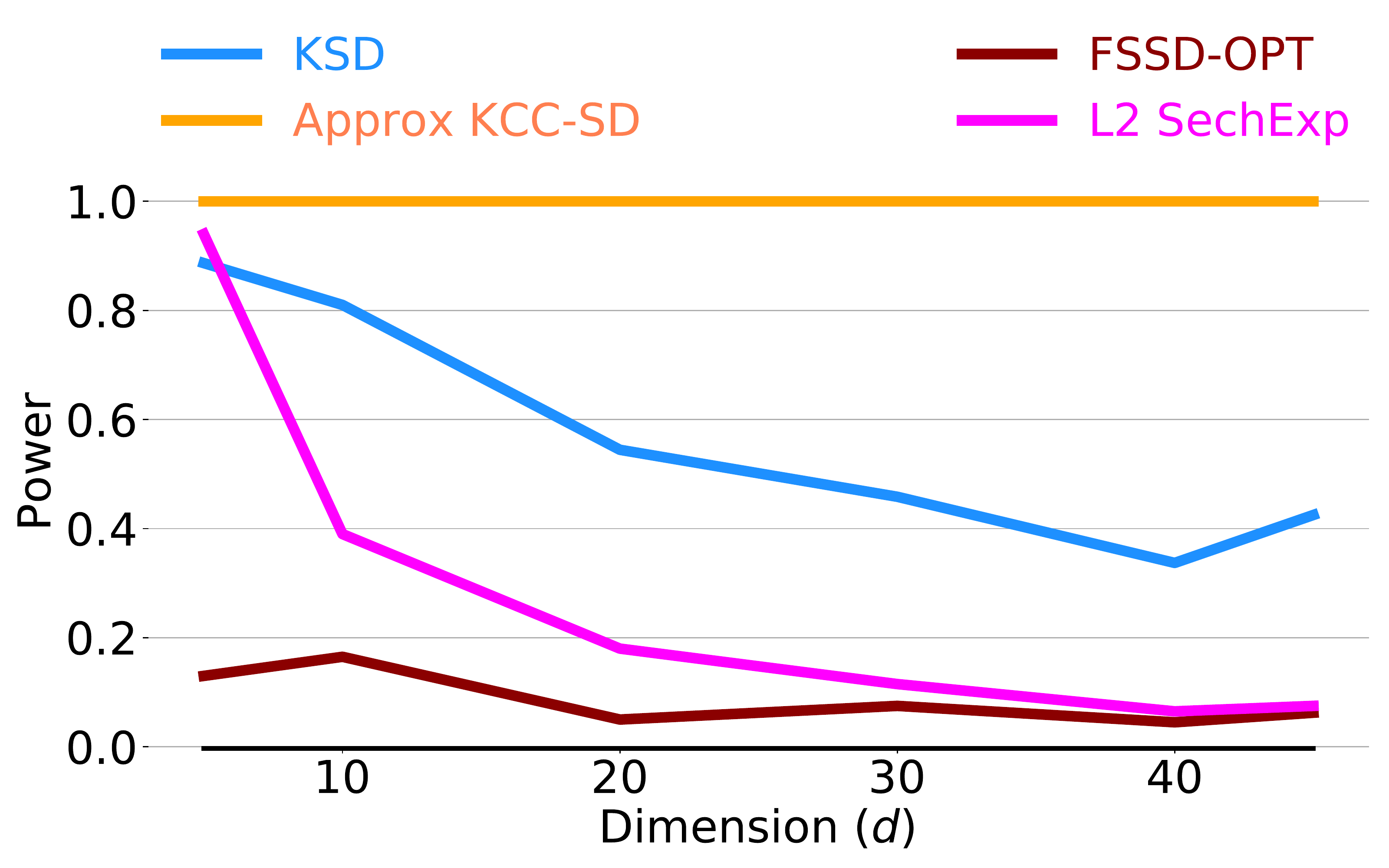}
  \caption{\textbf{Left:} Quantile-Quantile plot showing that \textsc{kcc-sd} and approximate
    \textsc{kcc-sd} have uniform $p$-values under the null, this was computed
    with $d=30$ and $n=3000$. \textbf{Middle:} Here we plot the value of
    \textsc{kcc-sd} and approximate \textsc{kcc-sd} with $p = N(0, I_d)$ and $q
    = N(0, \Sigma)$. Here the marginals match but $p \neq q$. As the number of
    samples increase, both discrepancies stay bounded away
    from zero. \textbf{Right:} Correlated Gaussian vs Correlated Gaussian with
    Laplace noise. As the dimension increases \textsc{kcc-sd} does not see a
    decrease in performance unlike the baseline methods.}\label{fig:intro2}
\end{figure}

\paragraph{Goodness-of-fit Tests.} \quad In the left panel of \Cref{fig:intro}
we compare samples from $q = \prod_{i=1}^{d} \text{Laplace}(0, 1 / \sqrt{2})$
and target density $p = N(\bm{0}, I_d)$ with increasing dimension. We generate $n=1000$ samples to
compute the test statistics, and compute the power of the test
over $300$ repetitions with a significance level $ \alpha = 0.05$. We then
observe that as the dimension increases, approximate \textsc{kcc-sd} has
power $1$ while other methods see a substantial decrease
in power as dimension increases. We show in
\Cref{sec:appendix_experiments} that similar results hold for the \textsc{imq} kernel.

In the middle panel of \Cref{fig:intro} we plot the power of the test
with $q = \prod_{i=1}^{d} \text{Laplace}(0, 1 / \sqrt{2})$ and $p = N(\bm{0}, I_d)$
with $d=30$. We then increase the number of samples used to
compute the test statistics. And in the right panel of \Cref{fig:intro} we show
the time used to compute approximate \textsc{kcc-sd}, \textsc{ksd}, \textsc{r}$\Phi$\textsc{sd} and
\textsc{fssd-opt}, the time for approximate \textsc{kcc-sd} also includes the
training time for the models. We observe that although approximate \textsc{kcc-sd} requires
more time to compute, it has more power than the baselines.

In the left panel of \Cref{fig:intro2} we compare $p = q =
N(\bm{0}, \Sigma)$ in $d=30$, with $\Sigma_{i, j} = 0.5$ for all
$i\neq j$ otherwise $\Sigma_{i, i} = 1.0$. We show that for $n=3000$ the
distribution of the $p$-values is uniform.

In the middle panel of \Cref{fig:intro2}, we
have $p = N(\bm{0}, I_d)$ and $q = N(\bm{0},
\Sigma)$ where $\Sigma_{i, j} = 0.5$ for $i \neq j$ and $\Sigma_{i, i} = 1$.
The figure shows that both \textsc{kcc-sd} and approximate \textsc{kcc-sd}
detect the differences between these
distributions.

In the right panel of \Cref{fig:intro2}, we have $p=N(\bm{0}, \Sigma)$ with
$\Sigma_{i, j} = 0.5$ and $\Sigma_{i, i} = 2$ and
samples $\bmx_i = \bm{z}_i + \bm{\epsilon}_i$, where $\bm{\epsilon}_i \sim \prod_{j=1}^d
\text{Laplace}(0, 1/\sqrt{2})$ and $\bm{z}_i \sim N(\bm{0},
\Sigma_1)$ with ${(\Sigma_{1})}_{i, j} = 0.5$ and ${(\Sigma_1)}_{i, i} = 1$,
and $\bm{z}_i$ and $\bm{\epsilon}_i$ are independent.
The samples from $q$ have the same mean and variance as $p$.
We compute $n=500$ samples and increase the
dimension. As the dimension increases, the power of the test with
approximate \textsc{kcc-sd} remains $1$, while the baseline methods
see a decline in power.

\begin{figure}[t]
  \centering
\includegraphics[scale=0.15]{./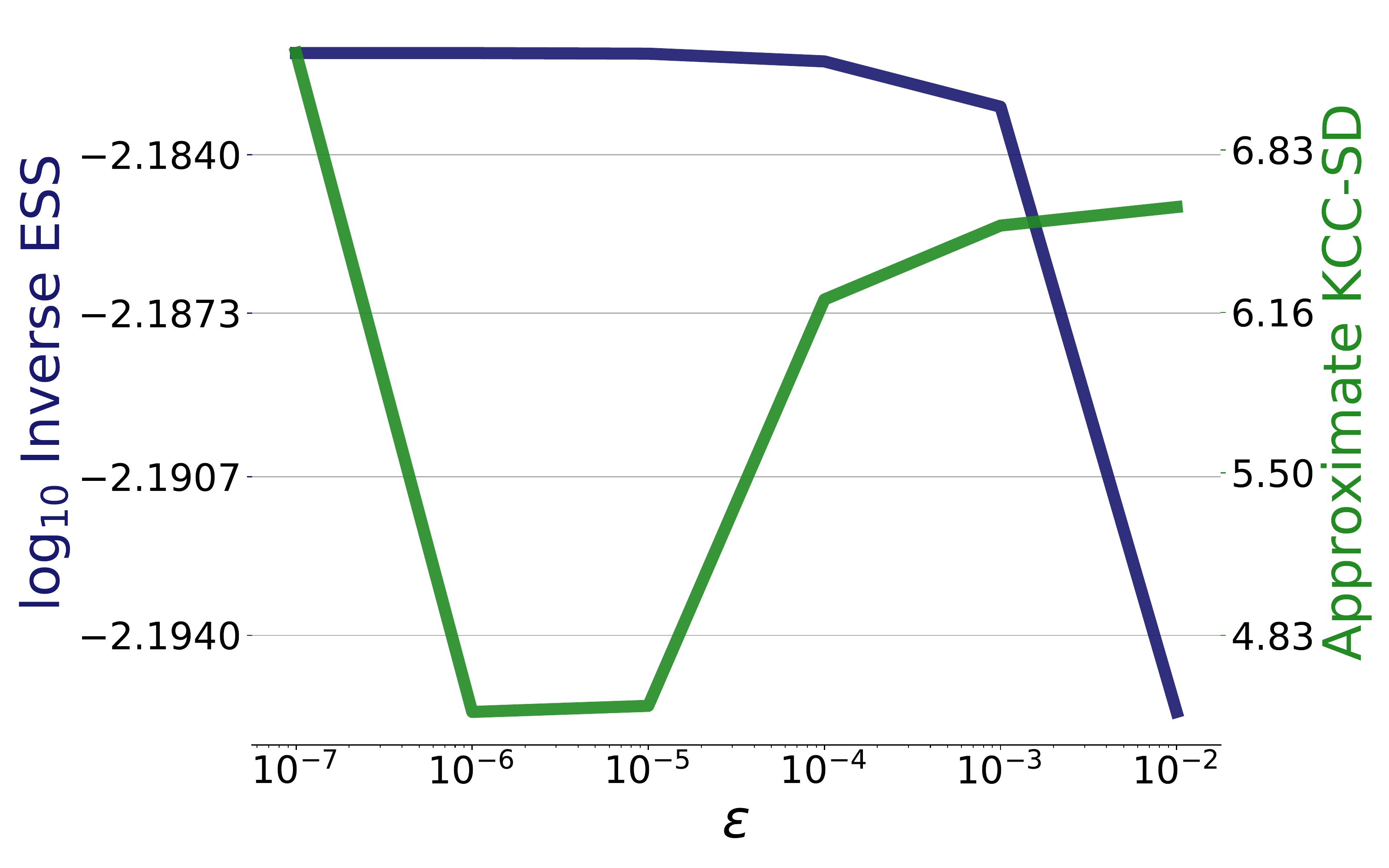}\includegraphics[scale=0.15]{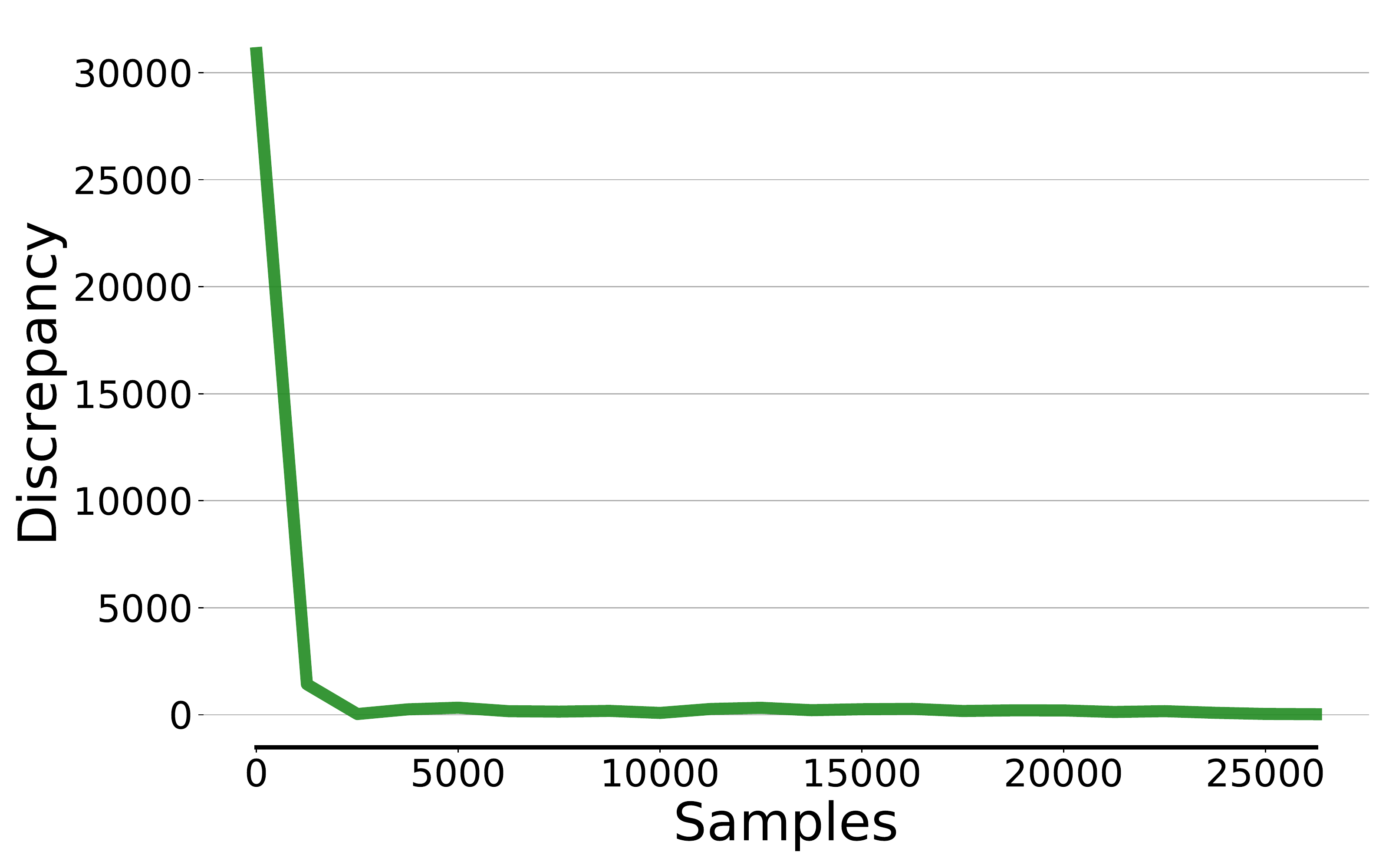}
  \caption{\textbf{Left:} Here, we plot the $\log$ inverse \textsc{ess} for
    comparison to approximate \textsc{kcc-sd} in assessing quality of samples
    from \textsc{sgld}. As we can see the inverse \textsc{ess} is
    minimized at $10^{-3}$, and \textsc{kcc-sd} is minimized at $10^{-5}$.
     \textbf{Right:} The value of block \textsc{kcc-sd}
    decreases when the number of iterations goes up in the Gibbs sampler used
    for Bayesian Probabilistic Matrix Factorization.}\label{fig:panel2}
\end{figure}

\paragraph{Selecting Biased Samplers.}

In this experiment we do posterior inference
for a three-layer neural network, with a sigmoid activation function, for
a regression task. The hidden dimensions are $40$ and $10$. We make use of
stochastic gradient Langevin dynamics (\textsc{sgld}), a
biased \textsc{mcmc} sampler \citep{welling2011bayesian}. We
used the yacht hydrodynamics dataset \citep{gerritsma1981geometry} from
the \textsc{uci} dataset repository. Since biased methods trade
sampling efficiency for asymptotic exactness,
standard \textsc{mcmc} diagnostics like effective sample size are not
applicable as they do not account for asymptotic bias.
Selecting the stepsize $\bm{\epsilon}$ is
an important task to ensure the samples are approximately
from the posterior \citep{welling2011bayesian}. For $\bm{\epsilon} \in
[10^{-8}, 10^{-3}]$ we run a chain generating 10,000
samples with a burnin phase of 50,000 samples, with
minibatch 256. We compare approximate \textsc{kcc-sd} to effective sample size.
The left panel in \Cref{fig:panel2} compares these two metrics.
While $\bm{\epsilon} = 10^{-6}$ has the lowest \textsc{kcc-sd}
value, the inverse effective sample size measure is minimized by the value
$\bm{\epsilon} = 10^{-2}$.

\paragraph{Detecting Convergence of a Gibbs Sampler for Matrix Factorization.}
We assess the convergence of a Gibbs sampler for
Bayesian probabilistic matrix factorization
\citep{salakhutdinov2008bayesian}. We focus on
a variant with two mean parameters
$\mu_V$ and $\mu_U$ for user and movie feature vectors $U_i \in
\euclid^{10}, V_j \in \euclid^{10}$ and
fixed the covariance matrix to the identity (see
\Cref{sec:appendix_experiments} for details).

In this experiment, we chose a subset of the Netflix Prize dataset, with $943$
users and $1682$ movies. We sampled the posterior
$p(\bm{\mu}_U, \bm{\mu}_V, \bm{U}, \bm{V} \mid \bm{R})$ in
blocks $\{\mu_U, \mu_V, U_1, \dots, U_N, V_1, \dots,V_M\}$ by a
Gibbs sampler. We ran the sampler for $26$K iterations with no
burnin. Since the Gibbs sampler samples blocks of variables together, using
these blocks of coordinates to compute \textsc{kcc-sd} is more efficient. In
\Cref{sec:appendix_approx_kccsd} we describe block \textsc{kcc-sd}.
We compute block \textsc{kcc-sd} by taking every $5^{th}$ sample
and show the results in the right panel of
\Cref{fig:panel2}. As the number of samples increases, block \textsc{kcc-sd}
goes down. The sample quality of the Gibbs sample increases with the number
of iterations.


\section{Discussion}\label{sec:discussion}

We developed kernelized complete conditional Stein discrepancies and
approximate \textsc{kcc-sd} and corresponding goodness-of-fit tests. We show that
these discrepancies can distinguish distributions
which have smooth and integrable score functions. We also showed empirically that
approximate \textsc{kcc-sd} provides a higher power test than those based on \textsc{ksd}. An interesting avenue of research would be relaxing the score
function requirement for $q$ and to compare the relative efficiency of the test
based on \textsc{kcc-sd} and approximate \textsc{kcc-sd} with baseline methods.

\section*{Broader Impact}
Our work focuses on comparing distributions where one is
known in functional form up to a constant. The primary application
of this method lies in probabilistic inference. Improvement in
inference could help in building models
in domains like healthcare and neuroscience especially
to propagate uncertainty about the measurements.
However, better inference could also mean better predictive models
which can have downsides like in surveillance.


\bibliographystyle{apa}
\bibliography{kccsd_paper}

\appendix
\onecolumn

\section{Closed Form}\label{sec:closed_form}

\begin{proof}
  Define the Stein operator $\A_{p(\bm{x})}$ as follows,
  \begin{align*}
     (\A_{p(\bm{x})}f)(\bm{x}) &= \sum_{j=1}^{d}(\A^{j}_{p(x_{j}
    \mid \bm{x}_{-j})}f_{j})(\bm{x}) = \sum_{j=1}^{d} f_j(\bm{x}) \nabla_{x_j}
 \log p(\bm{x}) + \nabla_{x_j} f_j(\bm{x})
  \end{align*}
  then if for all $j$, $f_{j, \bm{x}_{-j}}$ is in the \textsc{rkhs} of a
  univariate kernel, $k$, we can use the reproducing property,
  $f_{j, \bm{x}_{-j}}(x_j) = {\langle {f_{j, \bm{x}_{-j}}, k(x_j, \cdot)}
    \rangle}_{\rkhs}$
  (\cite{steinwart2008support}).
  Now, define the feature map for each kernel
  $k_j$, $\Phi_{x_j}(\cdot) = k(x_j, \cdot)$, then as
  \begin{align*}
    \partial_{x_j} f_{j, \bm{x}_{-j}}(x_j) &= \partial_{x_j} \langle
                              {f_{j, \bm{x}_{-j}}, k(x_j, \cdot)} \rangle_{\rkhs} \\
    &= \langle {f_{j, \bm{x}_{-j}}, \partial_{x_j} k(x_j, \cdot)} \rangle_{\rkhs} \\
    &= \langle {f_{j, \bm{x}_{-j}}, \partial_{x_j} \Phi_{x_j}} \rangle_{\rkhs}
  \end{align*}
  then note that we can use the reproducing property for general differential
  operators, $\A_{p(\bm{x})}^{j}$, to get
\begin{align*}
  (\A_{p(x_j \mid \bm{x}_{-j})} f_j)(\bm{x}) &= \A_{p(x_j \mid \bm{x}_{-j})}
                                               \langle {f_{j, \bm{x}_{-j}},
    k(x_j, \cdot)} \rangle_{\rkhs} \\
    &= \langle {f_{j, \bm{x}_{-j}}, \A^{j}_{p(x_j \mid \bm{x}_{-j})} \Phi_{x_j}}
      \rangle_{\rkhs}
  \end{align*}
  Then we can define the norm of $\A_{p(x_j \mid \bm{x}_{-j})} \Phi_{x_j}$, as
  follows:
  \begin{align} \nonumber
    \langle {\A_{p(x_j \mid \bm{x}_{-j})} \Phi_{x_j}, \A_{p(y_j \mid
      \bm{x}_{-j})} \Phi_{y_j}} \rangle_{\rkhs}
    &= b_j(x_j, \bm{x}_{-j}) b_j(y_j, \bm{x}_{-j}) k(x_j, y_j) + \nabla_{x_j}
       \nabla_{y_j} k(x_j,
      y_j) \\ \nonumber
    & \qquad  + b_j(x_j, \bm{x}_{-j}) \nabla_{y_j} k(x_j, y_j) +
      b_j(y_j, \bm{x}_{-j}) \nabla k(x_j, y_j) \\
    &= k^{j}_{cc}(x_j, y_j; \bm{x}_{-j})
  \end{align}
  where $b_j(u, \bm{x}_{-j}) = \nabla_{u} \log p(u | \bm{x}_{-j})$. Then we define the
  following
  \begin{align} \nonumber
    w^{2}_j &= \E_{q(x_j \mid \bm{x}_{-j})} \E_{q(y_j \mid \bm{x}_{-j})} \left[k_j^{cc}(x_j,
              y_j; \bm{x}_{-j})\right] \\ \nonumber
    &= \E_{q(x_j \mid \bm{x}_{-j})} \E_{q(y_j \mid \bm{x}_{-j})} \left[\langle {\A_{p(x_j \mid
      \bm{x}_{-j})} \Phi_{x_j}, \A_{p(y_j \mid \bm{x}_{-j})}
       \Phi_{y_j}} \rangle_{\rkhs} \right]
    \\ \label{eq:closed_form_1}
    &= \langle {\E_{q(x_j \mid \bm{x}_{-j})} \A_{p(x_j \mid \bm{x}_{-j})} \Phi_{x_j},
      \E_{q(y_j \mid \bm{x}_{-j})} \A_{p(y_j \mid
      \bm{x}_{-j})} \Phi_{y_j}} \rangle_{\rkhs} \\
    &= \norm{\E_{q(x_{j} \mid \bm{x}_{-j})} \A_{p(x_j \mid \bm{x}_{-j})}
        \Phi_{x_j}}^{2}_{\rkhs}
  \end{align}
  where $x_j, y_j \overset{i.i.d}{\sim} q(\cdot \mid \bm{x}_{-j})$
  and where we can interchange the inner product and expectation since
  $\A_{p(x_j \mid \bm{x}_{-j})} \Phi_{x_j}$ is $q$-Bochner integrable,
  (\citet{steinwart2008support}, Definition A.5.20).

  We can find the closed form for \textsc{kcc-sd}, where \textsc{kcc-sd} is
  defined as follows:
  \begin{align*}
    \kccsd &= \sum_{j=1}^{d} \E_{q(\bm{x}_{-j})} \left[ \sup_{f_{j} \in \steinset}
    \left| \E_{q(x_j \mid \bm{x}_{-j})} \left[
   \A_{p(x_j \mid \bm{x}_{-j})}^{j} f_j(\bm{x})
    \right] \right| \right] \\
  \end{align*}
  For each $j \in \{1, \dots, d\}$, and $\bm{x}_{-j}$
  \begin{align*}
    \sup_{f_j \in \steinset} \E_{q(x_j \mid \bm{x}_{-j})} \left[ \A_{p(x_j
    \mid \bm{x}_{-j})}^{j} f_j(x)  \right]
    &= \sup_{f_j: \norm{f_j} \leq w_j^{2}}
      \langle {f_j, \E_{q(x_j \mid \bm{x}_{-j})}} \left[ \A_{p(x_j \mid
      \bm{x}_{-j})} \Phi_{x_j} \right] \rangle_{\rkhs} \\
    &= \norm{\E_{q(x_{j} \mid \bm{x}_{-j})}
      \A_{p(x_j \mid \bm{x}_{-j})} \Phi_{x_j}}^{2}_{\rkhs} \\
    &= \E_{q(x_j \mid \bm{x}_{-j})} \E_{q(y_j \mid \bm{x}_{-j})}
      \left[k^j_{cc}(x_j, y_j; \bm{x}_{-j})\right]
  \end{align*}
  hence, \textsc{kcc-sd} can be written in closed form as
  \begin{align*}
    \kccsd = \sum_{j=1}^{d} \E_{q(\bm{x}_{-j})} \E_{q(x_j \mid \bm{x}_{-j})}
    \E_{q(y_j \mid \bm{x}_{-j})} \left[k^j_{cc}(x_j, y_j; \bm{x}_{-j}) \right]
  \end{align*}

\end{proof}
Here, we show that \textsc{kcc-sd}s can be expressed as an average of univariate \textsc{ksd}s. We can compute the Stein kernel for \textsc{kcc-sd} as
\begin{align*}
  k_{cc}^{j}(x_j, y_j; \bm{x}_{-j}) &=
  k(x_j, y_j) b_j(x_j, \bm{x}_{-j}) b_j(y_j, \bm{x}_{-j}) +
  \nabla_{x_j}k(x_j, y_j) b_j(y_j, \bm{x}_{-j}) \\ & \qquad
   + \nabla_{y_j}k(x_j, y_j) b_j(x_j, \bm{x}_{-j}) + \nabla_{x_j} \nabla_{y_j}
  k(x_j, y_j) , \\
  &= \left(\A_{p(x_j \mid \bm{x}_{-j})} \A_{p(y_j \mid \bm{x}_{-j})} k \right) (x_j, y_j)
\end{align*}
where $\bm{x}_{-j} \in \euclid^{d-1}$ is fixed,
$k: \euclid \times \euclid \rightarrow \euclid$, and
$b_j(x_j, \bm{x}_{-j}) = \nabla_{x_j} \log p(x_j \mid \bm{x}_{-j})$.
Using the Stein kernel defined above
we can compute \textsc{ksd} between $p(\cdot \mid
\bm{x}_{-j})$ and $q(\cdot \mid \bm{x}_{-j})$ as follows
\begin{align*}
  \mathcal{S} {\left(q(\cdot \mid \bm{x}_{-j}), \A_{p(\cdot \mid \bm{x}_{-j})},
  \mathcal{G}_k \right)}^{2} &= \E_{q(x_j \mid \bm{x}_{-j})} \E_{q(y_j \mid \bm{x}_{-j})}
  \left[
  \left(\A_{p(x_j \mid \bm{x}_{-j})} \A_{p(y_j \mid \bm{x}_{-j})} k
  \right)(x_j, y_j) \right] \\
  &= \E_{q(x_j \mid \bm{x}_{-j})} \E_{q(y_j \mid \bm{x}_{-j})}
    \left[ k_{cc}^{j}(x_j, y_j; \bm{x}_{-j}) \right] .
\end{align*}
Therefore, \textsc{kcc-sd} can also be computed as
\begin{align*}
  \kccsd &= \sum_{j=1}^d \E_{q(\bm{x}_{-j})} \E_{q(x_j \mid \bm{x}_{-j})}
    \E_{q(y_j \mid \bm{x}_{-j})} \left[k^j_{cc}(x_j, y_j; \bm{x}_{-j}) \right] \\
  &= \sum_{j=1}^d \E_{q(\bm{x}_{-j})} \left[ \mathcal{S} {\left(q(\cdot \mid
    \bm{x}_{-j}), \A_{p(\cdot \mid \bm{x}_{-j})}, \mathcal{G}_k \right)}^{2}
    \right] .
\end{align*}
In Algorithm~\ref{alg:exact_kccsd} we show how to compute \textsc{kcc-sd} exactly when
we have samples from the complete conditionals.
\begin{algorithm}
\caption{\textbf{Computing KCC-SDs with complete conditionals}}\label{alg:exact_kccsd}
\DontPrintSemicolon
\SetAlgoLined
\KwIn{Dataset ${\{\bm{x}^{(i)}\}}_{i=1}^{n}$, $d$: dimension
  of $\bm{x}$, $n_y$: number of
  $y_j$ samples and complete conditionals $q(\cdot \mid \bm{x}_{-j})$}
\KwOut{Estimated \textsc{kcc-sd} $\hat{S}_{n}(q, \A_{p}, \steinset)$}
\For{$j \in [d]$}{
\For{$i \in [n]$}{
  Sample $y^{(i, k)}_{j} \sim q(\cdot \mid \bm{x}^{(i)}_{-j})$ for $k \in [n_y]$ \;
}
Let $\hat{w}^{2}_{j} = \frac{1}{n n_y} \sum_{i=1}^{n}
\sum_{k=1}^{n_y} k_{cc}^{j}(x_j^{(i)}, y_j^{(i, k)} ; \bm{x}^{(i)}_{-j}) $ \;
}
Let $\hat{S}_{n}(q, \A_{p}, \steinset) = \sum_{j=1}^{d} \hat{w}^{2}_j$
\end{algorithm}

\section{\textsc{kcc-sd} in practice}\label{sec:appendix_approx_kccsd}
\begin{algorithm}
  \caption{\textbf{Computing approximate KCC-SDs}. Given model class
  $r_{\lambda_j}$, compute approximate \textsc{kcc-sd}.}\label{alg:approx_kccsd_alg}
\DontPrintSemicolon
\SetAlgoLined
\KwIn{Dataset $\mathcal{D} = {\{\bm{x}^{(i)}\}}_{i=1}^{n}$, $d$: dimension
  of $\bm{x}$, $n_y$: number of $y_j$ samples, and a model class
  $r_{\lambda_j}(\cdot \mid \bm{x}_{-j})$ for each complete conditional.}
\KwOut{Approximate \textsc{kcc-sd}}
Split the dataset into training, validation and test sets. \\ \;
\For{$j \in [d]$}{
  \ Train the sampler $r_{\lambda_{j}}$ on training set. \\
  \ Select the model $r_{\lambda_j}$ with lowest validation loss. \;
  \For{$i \in [n]$}{
    Sample $y^{(i, l)}_{j} \sim r_{\lambda_{j}}(\cdot \mid \bm{x}^{(i)}_{-j})$ for $l
    \in [n_y]$ \;
}
\ Let $\hat{w}^{2}_{j} = \frac{1}{n} \sum_{i=1}^{n} \frac{1}{n_y}
\sum_{l=1}^{n_y} k_{cc}(x_j^{(i)}, y_j^{(i, l)} ; \bm{x}^{(i)}_{-j})$. \;}
Let $\hat{S}_{\lambda}(q, \A_{p}, \steinset) = \sum_{j=1}^{d} \hat{w}^{2}_j$
\end{algorithm}

\paragraph{Block \textsc{kcc-sd}.}
In Gibbs sampling, when variables are sampled together,
using blocks of coordinates to compute \textsc{kcc-sd}
will be computationally more efficient than using single coordinates.
The complete conditional approach
still ensures that block \textsc{kcc-sd} distinguishes the
distributions $p$ and $q$. For instance, if $\bm{x} \in \euclid^d$, then let
$I_1, \dots, I_m$ be disjoint partitions of indices $\{1, \dots, d\}$ such that
$\cup_{j=1}^m I_j = \{1, \dots, d\}$, then we can
define block \textsc{kcc-sd} as
\begin{align*}
  \sum_{j=1}^{m} \E_{q(\bm{x}_{-I_j})} \sup_{f_{I_j}} \E_{q(\bm{x}_{I_j} \mid
  \bm{x}_{-I_j})} [\A^{j}_{p(\bm{x}_{I_j} \mid \bm{x}_{-I_j})}f_{I_j}(\bm{x})] \ ,
\end{align*}
here the the dimension of the kernel would depend on the block size, so $k_j:
\euclid^{I_j} \times \euclid^{I_j} \rightarrow \euclid$. The supremum of the
block \textsc{kcc-sd} is
\begin{align*}
 \sum_{j=1}^m  \E_{q(\bm{x}_{-I_{j}})} \E_{\bm{x}_{I_j}, \bm{y}_{I_j} \sim q(\cdot \mid
  \bm{x}_{-I_{j}})} \left[ k_{cc}^{j}(\bm{x}_{I_j}, \bm{y}_{I_j} ;
                                         \bm{x}_{-I_j}) \right] \ .
\end{align*}
Note that if we take all the coordinates as one block, block \textsc{kcc-sd}
is equivalent to \textsc{ksd}.

\section{Distinguishing Distributions}\label{sec:detect_non_convergence}
 Here, we rely on the \textsc{ispd} property of the kernel $k(x_j, y_j)$
  so that for any function $f: \euclid \rightarrow \euclid$, we obtain
  \begin{align*}
    \int_{u \in \euclid} \int_{v \in \euclid} f(u) k(u, v) f(v) du dv > 0
  \end{align*}
  for $\norm{f} > 0$.

  Note that we can write the Stein discrepancy as,
  \begin{align} \nonumber
    \E_{q(\bm{x})} \left[ \A_{p(\bm{x})}f (\bm{x}) \right] &=
     \E_{q(\bm{x})}\left[ \A_{p(\bm{x})}f(y) -
    \A_{q(\bm{x})}f(\bm{x}) \right] \\ \nonumber
    &= \E_{q(\bm{x})} \left[ {f(\bm{x})}^{T} \nabla_{\bm{x}} \log p(\bm{x})
       + \nabla_{\bm{x}} \cdot f(\bm{x})
      \right] - \E_{q(\bm{x})} \left[ {f(\bm{x})}^{T}
      \nabla_{\bm{x}} \log q(\bm{x}) + \nabla_{\bm{x}} \cdot f(\bm{x})
      \right] \\ \nonumber
    &= \E_{q(\bm{x})} \left[ {f(\bm{x})}^{T}
      \left( \nabla_{\bm{x}} \log p(\bm{x}) - \log q(\bm{x}) \right)
      \right] \\ \label{eq:stein_alternate}
    &= \E_{q(\bm{x})} \left[ {f(\bm{x})}^{T} \nabla_{\bm{x}} \log
       \frac{p(\bm{x})}{q(\bm{x})} \right] \ ,
  \end{align}
  using $\E_{q(\bm{x})} \left[ \A_{q(\bm{x})}f(\bm{x}) \right] = 0$.

  Using this representation for our test function,
  $f^{*}_j(\bm{x}) = \E_{q(y_j \mid \bm{x}_{-j})}[\A_{p(y_j | \bm{x}_{-j})}^{j}k(x_j, y_j)]$,
  where $y_j \sim q(\cdot \mid \bm{x}_{-j})$, we see that
  \begin{align} \nonumber
    f^{*}_j(\bm{x}) &= \E_{q(y_j \mid \bm{x}_{-j})}[\A_{p(y_j | \bm{x}_{-j})}^{j}k(x_j, y_j)]
    - \E_{q(y_j \mid \bm{x}_{-j})}[\A_{q(y_j | \bm{x}_{-j})}^{j}k(x_j, y_j)] \\ \nonumber
    &= \E_{q(y_j \mid \bm{x}_{-j})} \left[ k(x_j, y_j) \nabla_{y_j} \log \frac{p(y_j
      \mid \bm{x}_{-j})}{q(y_j \mid \bm{x}_{-j})} \right] \\ \label{eq:sup_alternate}
    &= \E_{q(y_j \mid \bm{x}_{-j})} \left[ k(x_j, y_j) \nabla_{y_j} \log \frac{p(y_j
      , \bm{x}_{-j})}{q(y_j , \bm{x}_{-j})} \right] \ ,
  \end{align}
  then using the fact that $\kccsd = \sum_{j=1}^{d} \E_{q(\bm{x})}[\A_{p(\bm{x})}^{j}
  f_{j}^{*}(\bm{x})]$, we obtain using \cref{eq:stein_alternate} and
  \cref{eq:sup_alternate}
  \begin{align*}
    \kccsd &= \E_{q(\bm{x})} \left[ {f^{*}(\bm{x})}^{T} \nabla_{\bm{x}} \log
    \frac{p(\bm{x})}{q(\bm{x})} \right] \\
    &= \sum_{j=1}^{d} \E_{q(\bm{x}_{-j})} \left[ \E_{q(x_j \mid \bm{x}_{-j})}
      \left[ f^{*}_{j}(\bm{x}) \nabla_{x_j}
      \log \frac{p(\bm{x})}{q(\bm{x})} \right] \right] \\
           &= \sum_{j=1}^{d} \E_{q(\bm{x}_{-j})} \left[ \E_{q(x_j \mid
             \bm{x}_{-j})} \E_{q(y_j \mid \bm{x}_{-j})}
             \left[ \nabla_{y_j} \log \frac{p(y_j, \bm{x}_{-j})}{q(y_j,
             \bm{x}_{-j})} k(x_j, y_j)
       \nabla_{x_j} \log \frac{p(\bm{x})}{q(\bm{x})} \right] \right] \ .
  \end{align*}
  Now, observe that for each $j \in \{1, \dots, d \}$, with $r(u, \bm{x}_{-j}) =
  \nabla_{u} \log \frac{p(u, \bm{x}_{-j})}{q(u, \bm{x}_{-j})}$, we define a
  function $h$ over $\bm{x}_{-j}$
  \begin{align}\nonumber
    h(\bm{x}_{-j}) &= \E_{q(x_j \mid \bm{x}_{-j})} \E_{q(y_j \mid \bm{x}_{-j})}
       \left[ \nabla_{y_j} \log \frac{p(y_j, \bm{x}_{-j})}{q(y_j, \bm{x}_{-j})} k(x_j, y_j)
       \nabla_{x_j} \log \frac{p(\bm{x})}{q(\bm{x})} \right] \\ \nonumber
    &= \E_{q(x_j \mid \bm{x}_{-j})} \E_{q(y_j \mid \bm{x}_{-j})}
       \left[ r(y_{j}, \bm{x}_{-j}) k(x_j, y_j) r(x_j, \bm{x}_{-j}) \right]\\ \nonumber
&= \int_{x_j} \int_{y_j} q(x_j \mid \bm{x}_{-j}) r(x_j, \bm{x}_{-j}) k(x_j,
    y_j) q(y_j \mid \bm{x}_{-j}) r(y_j, \bm{x}_{-j}) dx_j dy_{j} \\ \label{eq:hispd}
    &= \int_{x_j} \int_{y_j} g_{\bm{x}_{-j}}(x_j) k(x_j, y_j) g_{\bm{x}_{-j}}(y_j) dx_{j}
      dy_{j}
  \end{align}
  where $g_{\bm{x}_{-j}}(u) = q(u \mid \bm{x}_{-j})r(u, \bm{x}_{-j}) =
  q(u \mid \bm{x}_{-j}) \nabla_{u} \log
  \frac{p(u, \bm{x}_{-j})}{q(u, \bm{x}_{-j})}$.

The proofs in this section rely on the next lemma, which states that if the
complete conditionals match, then the distributions also match.

\begin{lemma}\label{lemma:cc_match}
If $p(\bm{x})$, $q(\bm{x})>0$ for all $\bm{x} \in \euclid ^d$ and $p(x_j|\bm{x}_{-j}) =
q(x_j|\bm{x}_{-j})$ for all $\bm{x}_{-j}$ and $j$, then $p(\bm{x}) = q(\bm{x})$.
\end{lemma}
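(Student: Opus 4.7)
The plan is to invoke Brook's lemma (also known as the factorization theorem via full conditionals), which shows that under strict positivity a joint density is determined, up to a normalizing constant, by its complete conditionals. Since $p$ and $q$ are both densities that integrate to one, this immediately forces $p = q$.

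Concretely, I would fix an arbitrary reference point $\bm{x}^* \in \euclid^d$ (positivity of $p$ and $q$ makes every ratio below well-defined) and, for a target point $\bm{x}$, construct the telescoping sequence $\bm{x}^{(0)} = \bm{x}^*$, $\bm{x}^{(1)}, \dots, \bm{x}^{(d)} = \bm{x}$, where $\bm{x}^{(j)}$ agrees with $\bm{x}$ on the first $j$ coordinates and with $\bm{x}^*$ on the remaining $d-j$. Writing
\begin{align*}
\frac{p(\bm{x})}{p(\bm{x}^*)} = \prod_{j=1}^{d} \frac{p(\bm{x}^{(j)})}{p(\bm{x}^{(j-1)})},
\end{align*}
I would observe that $\bm{x}^{(j)}$ and $\bm{x}^{(j-1)}$ differ only in the $j$-th coordinate, so dividing numerator and denominator by the common marginal $p(\bm{x}_{-j}^{(j)})$ converts each factor into a ratio of complete conditionals,
\begin{align*}
\frac{p(\bm{x}^{(j)})}{p(\bm{x}^{(j-1)})} = \frac{p(x_j \mid \bm{x}_{-j}^{(j)})}{p(x_j^* \mid \bm{x}_{-j}^{(j)})}.
\end{align*}
The identical identity holds for $q$. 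Applying the hypothesis that the complete conditionals of $p$ and $q$ coincide, every factor in the product agrees, giving $p(\bm{x})/p(\bm{x}^*) = q(\bm{x})/q(\bm{x}^*)$ for all $\bm{x}$.

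Hence $p(\bm{x}) = c\, q(\bm{x})$ with $c = p(\bm{x}^*)/q(\bm{x}^*)$ a constant independent of $\bm{x}$. Integrating both sides over $\euclid^d$ and using that $p$ and $q$ are densities forces $c=1$, so $p \equiv q$.

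The only step that is not purely algebraic is the telescoping construction and the bookkeeping of which coordinates come from $\bm{x}$ and which from $\bm{x}^*$; this is largely notational rather than conceptual. The substantive idea is the observation that a single one-step ratio $p(\bm{x}^{(j)})/p(\bm{x}^{(j-1)})$ depends on $p$ only through its $j$-th complete conditional, which is exactly the object assumed to match between $p$ and $q$. Strict positivity ensures that none of the conditionals or ratios can vanish, so the telescoping argument is valid at every point of $\euclid^d$.
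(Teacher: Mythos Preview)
Your argument is correct. It is the classical Brook's lemma route: fix a base point, telescope along one-coordinate moves, and recognize each factor as a ratio of complete conditionals. Positivity makes every ratio well defined, and the normalization step $\int p = \int q = 1$ pins down the constant.

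The paper proceeds differently. It argues by induction on the dimension $d$. For $d=2$ it uses the identity
\[
\int \frac{p(x_1\mid x_2)}{p(x_2\mid x_1)}\,dx_1 \;=\; \int \frac{p(x_1)}{p(x_2)}\,dx_1 \;=\; \frac{1}{p(x_2)}
\]
to recover the marginal $p(x_2)$ (and hence the joint) from the full conditionals. For the inductive step it shows that matching $d$-dimensional complete conditionals force the $(d{-}1)$-dimensional complete conditionals of $p(\bm{x}_{-i})$ and $q(\bm{x}_{-i})$ to match as well, then applies the induction hypothesis to conclude $p(\bm{x}_{-i})=q(\bm{x}_{-i})$ and hence $p=q$. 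Compared with your telescoping argument, the paper's proof is more indirect and leans on an integral identity plus induction, whereas yours gives a one-shot pointwise identity $p(\bm{x})/p(\bm{x}^{*})=q(\bm{x})/q(\bm{x}^{*})$ and needs integration only once at the very end. Your approach is shorter and arguably more transparent; the paper's approach has the minor advantage of simultaneously establishing that lower-dimensional marginals and conditionals agree, which is not something you need here.
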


\begin{proof}[Proof (\Cref{lemma:cc_match})]
We prove by induction. If dimension of $x$ is 2, then $p(x_1|x_2) = q(x_1|x_2)$ and $p(x_2|x_1) = q(x_2|x_1)$. Then we have
\begin{equation*}
\int \frac{p(x_1|x_2)}{p(x_2|x_1)} d x_1 = \int \frac{p(x_1)}{p(x_2)} d x_1= \frac{1}{p(x_2)},
\end{equation*}
and
\begin{equation*}
\int \frac{q(x_1|x_2)}{q(x_2|x_1)} d x_1 = \int \frac{q(x_1)}{q(x_2)} d x_1= \frac{1}{q(x_2)},
\end{equation*}
which implies
\begin{equation*}
\frac{1}{p(x_2)} = \int \frac{p(x_1|x_2)}{p(x_2|x_1)} d x_1 = \int \frac{q(x_1|x_2)}{q(x_2|x_1)} d x_1= \frac{1}{q(x_2)}.
\end{equation*}
 Therefore, $p(x_2) = q(x_2)$ for all $x_2$.$p(x_1,x_2) = p(x_1|x_2)p(x_2) = q(x_1|x_2)q(x_2) = q(x_1,x_2)$.

 Assume the dimension of $\bm{x}$ is $d$. Then we have
  \begin{equation*}
   \frac{p(\bm{x}_{-\{i,j\}})}{p(\bm{x}_{-i})} = \int
   \frac{p(\bm{x}_{-j})}{p(\bm{x}_{-i})} d x_i = \int \frac{p(x_i|\bm{x}_{-i})}{p(x_j|\bm{x}_{-j})} d x_i =     \int \frac{q(x_i|\bm{x}_{-i})}{q(x_j|\bm{x}_{-j})} d x_i = \int
   \frac{q(\bm{x}_{-j})}{q(\bm{x}_{-i})} d x_i =
   \frac{q(\bm{x}_{-\{i,j\}})}{q(\bm{x}_{-i})}
 \end{equation*}

for all $j$. Then $p(\bm{x}_{j}|\bm{x}_{-\{i,j\}}) = q(\bm{x}_{j}|\bm{x}_{-\{i,j\}})$ for all $j$. Since $\bm{x}_{-i}$ is a $(d-1)$ dimensional distribution, we can use the induction.  Since $p(\bm{x}_{j}|\bm{x}_{-\{i,j\}}) = q(\bm{x}_{j}|\bm{x}_{-\{i,j\}})$ for all $j$, by induction, we have $p(\bm{x}_{-i}) = q(\bm{x}_{-i})$. Therefore,
\begin{equation*}
p(\bm{x}) = p(x_i|\bm{x}_{-i})p(\bm{x}_{-i}) = 	q(x_i|\bm{x}_{-i})q(\bm{x}_{-i})=q(\bm{x}).
\end{equation*}

\end{proof}

Using \Cref{eq:stein_alternate} we can see that if $p \overset{d}{=} q$, then
$\E_{q}[\A_p f(\bm{x})] = 0$ for $f$ integrable and smooth. The Stein set for
\textsc{kcc-sd}, $\steinset$, consists of such functions. We restate
\Cref{thm:pisq} for clarity.

\begin{theorem*}
Suppose $k\in C^{2,2}(\euclid, \euclid)$ is an \textsc{ispd} kernel and
$\E_{q(\bm{x})}[\left\| \nabla_{\bm{x}} \log p(\bm{x})\right\|^2 ],
\E_{q(\bm{x})}[\left\| \nabla_{\bm{x}} \log q(\bm{x})\right\|^2 ]  < \infty$
where $p(\bm{x}),q(\bm{x}) > 0$ for all $\bm{x}\in \euclid^d$. If
$p\overset{d}{=}q$, then $ \kccsd=0$.
\end{theorem*}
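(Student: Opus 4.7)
The plan is to exploit the alternative representation of the Stein operator derived earlier in the section (equation starting with $\E_{q(\bm{x})}[\A_{p(\bm{x})} f(\bm{x})] = \E_{q(\bm{x})}[\A_{p(\bm{x})} f(\bm{x}) - \A_{q(\bm{x})} f(\bm{x})]$), specialized to the univariate conditional setting. For each coordinate $j$ and each fixed $\bm{x}_{-j}$, and for any $f_j \in \mathcal{C}_k^j$, I would write
\begin{align*}
\E_{q(x_j \mid \bm{x}_{-j})}[\A^{j}_{p(x_j \mid \bm{x}_{-j})} f_j(\bm{x})]
&= \E_{q(x_j \mid \bm{x}_{-j})}\left[f_j(\bm{x}) \nabla_{x_j} \log \frac{p(x_j \mid \bm{x}_{-j})}{q(x_j \mid \bm{x}_{-j})}\right],
\end{align*}
which uses that the univariate Stein identity $\E_{q(x_j \mid \bm{x}_{-j})}[\A^{j}_{q(x_j \mid \bm{x}_{-j})} f_j(\bm{x})] = 0$ holds under the assumed regularity.

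Next, since $p \overset{d}{=} q$ and both densities are strictly positive on $\euclid^d$, the full joints satisfy $p(\bm{x}) = q(\bm{x})$ for every $\bm{x}$, hence the complete conditionals agree, i.e.\ $p(x_j \mid \bm{x}_{-j}) = q(x_j \mid \bm{x}_{-j})$ for all $j$ and all $\bm{x}_{-j}$. Consequently $\nabla_{x_j} \log(p/q) = 0$, so the integrand above vanishes pointwise and the inner expectation is identically zero for every $f_j \in \mathcal{C}_k^j$. Taking the supremum, then the absolute value, then the outer expectation $\E_{q(\bm{x}_{-j})}$, and finally summing over $j$, yields $\mathcal{S}(q, \A_p, \mathcal{C}_k) = 0$, as desired.

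The only real content beyond this bookkeeping is justifying the univariate Stein identity for $q(\cdot \mid \bm{x}_{-j})$ applied to $f_j(\cdot, \bm{x}_{-j})$. This is where the hypotheses enter: integration by parts in $x_j$ produces no boundary term because $f_{j, \bm{x}_{-j}} \in \rkhs$ (with $k \in C^{2,2}$, so $f_{j, \bm{x}_{-j}}$ is smooth and suitably controlled), and Cauchy--Schwarz together with the assumption $\E_{q(\bm{x})}[\|\nabla_{\bm{x}} \log q(\bm{x})\|^2] < \infty$ (which controls the conditional score in an averaged sense via $\nabla_{x_j} \log q(x_j \mid \bm{x}_{-j}) = \nabla_{x_j} \log q(\bm{x})$) guarantees that $f_j \cdot \nabla_{x_j} \log q$ is $q$-integrable. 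The analogous bound using $\E_q[\|\nabla \log p\|^2] < \infty$ justifies writing the original expectation $\E_{q(x_j \mid \bm{x}_{-j})}[\A^j_{p(x_j\mid\bm{x}_{-j})} f_j]$ in the first place.

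The main obstacle is therefore not conceptual but purely technical: checking that the regularity of the RKHS functions in $\mathcal{C}_k^j$ combined with the finite second-moment conditions on the score functions is enough to legitimize both applications of Stein's identity (the one for $q$ used in the subtraction trick, and the implicit integrability required for the $\A_p$ expectation to exist). Once this integrability is in place, the substantive step --- that equality in distribution forces the log-ratio gradient to vanish --- is immediate from \Cref{lemma:cc_match} applied trivially (or simply from $p(\bm{x}) = q(\bm{x})$), and no further analysis is needed.
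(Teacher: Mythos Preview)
Your proposal is correct and follows essentially the same approach as the paper: both use the subtraction trick $\E_{q}[\A_{p} f] = \E_{q}[f^{T}\nabla\log(p/q)]$ (the paper's \Cref{eq:stein_alternate}) and then observe that $p\overset{d}{=}q$ forces the log-ratio gradient to vanish. The only cosmetic difference is that you carry out the argument at the conditional (coordinate-wise) level, which matches the definition of $\kccsd$ more precisely, whereas the paper applies the identity at the joint level and is somewhat terser about the integrability check.
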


\begin{proof}[\textbf{Proof (\Cref{thm:pisq})}]
  If $p\overset{d}{=}q$, then the score functions match and using \Cref{eq:stein_alternate},
  for all $f$ such that $\E_{q(\bm{x})} \|f(\bm{x})\|_2 < \infty$, then
  \begin{align*}
    \E_{q(\bm{x})} \left[ \A_{p(\bm{x})}f (\bm{x}) \right] &=
        \E_{q(\bm{x})} \left[ {f(\bm{x})}^{T} \nabla_{\bm{x}} \log
       \frac{p(\bm{x})}{q(\bm{x})} \right] \\&= 0
  \end{align*}
  Since all $f\in \steinset$ satisfy  $\E_{q(\bm{x})} \|f(\bm{x})\|_2 < \infty$,
  $\kccsd = 0$.

\end{proof}

Similarly, using \Cref{eq:stein_alternate} we can show that when $p \neq q$,
then \textsc{kcc-sd} will be strictly greater than zero. This relies on the
fact that if two measures are not equal, then on the set where they are not
equal, the complete conditionals will not match. We can exploit this property
to show that \textsc{kcc-sd} will not be zero for such distributions.
We restate \Cref{thm:pisnotq} for clarity.
\begin{theorem}
  Let $k$ be integrally strictly positive definite. Suppose if
  $\kccsd < \infty$, and $\E_{q(\bm{x})}[\left\|
    \nabla_{\bm{x}} \log p(\bm{x})\right\|^2 ],
  \E_{q(\bm{x})}[\left\| \nabla_{\bm{x}} \log q(\bm{x})\right\|^2 ]  < \infty $
  with $p(\bm{x}), q(\bm{x}) > 0$,
  then if $p$ is not equal to $q$ in distribution, then $\kccsd > 0$.
\end{theorem}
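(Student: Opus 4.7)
The plan is to argue by contrapositive: assume $\kccsd = 0$ and deduce $p = q$, then invoke \Cref{lemma:cc_match}. The preceding derivation already rewrites the discrepancy in a form tailor-made for this, so most of the work is tracing through integrability and applying the \textsc{ispd} property one coordinate at a time.

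First I would use the alternate form from \Cref{eq:stein_alternate,eq:sup_alternate,eq:hispd}, which gives
\begin{align*}
\kccsd = \sum_{j=1}^d \E_{q(\bm{x}_{-j})}\bigl[h_j(\bm{x}_{-j})\bigr],
\qquad
h_j(\bm{x}_{-j}) = \int\!\!\int g_{\bm{x}_{-j}}(u)\, k(u,v)\, g_{\bm{x}_{-j}}(v)\, du\, dv,
\end{align*}
with $g_{\bm{x}_{-j}}(u) = q(u\mid\bm{x}_{-j})\,\nabla_u\log\frac{p(u,\bm{x}_{-j})}{q(u,\bm{x}_{-j})}$. The \textsc{ispd} hypothesis on $k$ forces $h_j(\bm{x}_{-j}) \geq 0$, with equality iff $\|g_{\bm{x}_{-j}}\|_2 = 0$. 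Therefore if $\kccsd = 0$, each summand vanishes and, since each integrand is nonnegative, for every $j$ we have $h_j(\bm{x}_{-j}) = 0$ for $q$-almost every $\bm{x}_{-j}$.

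Next I would translate $\|g_{\bm{x}_{-j}}\|_2 = 0$ into a statement about conditionals. Because $q>0$ pointwise, $g_{\bm{x}_{-j}}(u) = 0$ for (Lebesgue) a.e.\ $u$ forces $\nabla_u\log\frac{p(u,\bm{x}_{-j})}{q(u,\bm{x}_{-j})} = 0$ a.e.\ in $u$. Hence $u \mapsto \log\frac{p(u,\bm{x}_{-j})}{q(u,\bm{x}_{-j})}$ is constant in $u$ for $q$-a.e.\ $\bm{x}_{-j}$. Normalizing, this yields $p(u\mid\bm{x}_{-j}) = q(u\mid\bm{x}_{-j})$ for a.e.\ $u$, and because these conditional densities are continuous (the score functions exist and are finite a.s.\ under our hypotheses), the equality extends to all $u$. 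Doing this for each $j$, I obtain matching complete conditionals on a set of full $q$-measure. Positivity of $q$ upgrades ``$q$-a.e.'' to ``everywhere''; \Cref{lemma:cc_match} then gives $p = q$, contradicting the assumption $p \neq q$. Thus $\kccsd > 0$.

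The main obstacle is the bookkeeping around the null sets: showing that the exceptional $\bm{x}_{-j}$ for different $j$ can be assembled into a single null set so that \Cref{lemma:cc_match} is applicable. This is handled by invoking positivity of $q$ together with smoothness of the score functions, which propagates a.e.\ equality to pointwise equality of the complete conditionals. The second delicate point is verifying that all interchanges of expectation and integration used in deriving \Cref{eq:hispd} are justified under the finite-second-moment hypotheses on $\nabla\log p$ and $\nabla\log q$ and the assumption $\kccsd < \infty$; this is where Bochner integrability of $\A_{p(x_j\mid\bm{x}_{-j})}\Phi_{x_j}$ (as in \Cref{sec:closed_form}) and standard dominated convergence are used.
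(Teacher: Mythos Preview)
Your proposal is correct and uses essentially the same ingredients as the paper: the alternate form \Cref{eq:hispd}, the \textsc{ispd} property of $k$, and \Cref{lemma:cc_match}. The only difference is orientation: the paper argues directly (from $p\neq q$, \Cref{lemma:cc_match} yields a coordinate $j$ and a set $B_{-j}$ of positive measure on which $g_{\bm{x}_{-j}}\not\equiv 0$, hence $h(\bm{x}_{-j})>0$ by \textsc{ispd}, hence $\kccsd>0$), whereas you run the contrapositive; the null-set bookkeeping you flag is present at the same level of rigor in both versions.
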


\begin{proof}[\textbf{Proof (\Cref{thm:pisnotq})}]
  Suppose $p \neq q$ in distribution, then by \Cref{lemma:cc_match} there exists a
  $j \in \{1, \dots, d\}$ and a set
  $B_{-j} \subset \euclid^{d-1}$, with $m_{d-1}(B_{-j}) > 0$ where $m_{d-1}$ is
  Lebesgue measure, such that
  for each $\bm{x}_{-j}\in B_{-j}$ there exists a set $A_{j, \bm{x}_{-j}} \subset \euclid$
  with $m_1(A_{j, \bm{x}_{-j}})>0$, where
  the complete conditional do not match. Then as the complete
  conditionals, $p(x_j \mid \bm{x}_{-j}), q(x_j \mid \bm{x}_{-j})$,
  do not match on $A_{j, \bm{x}_{-j}}$, the ratio of the score
  functions do not match, so for $\bm{x}_{-j} \in B_{-j}$ and $u \in A_{j, \bm{x}_{-j}}$,
  \begin{align*}
    g_{\bm{x}_{-j}}(u) = q(u \mid \bm{x}_{-j}) \nabla_{x_j} \log \frac{p(u,
    \bm{x}_{-j})}{q(u, \bm{x}_{-j})} \neq 0 \ .
  \end{align*}
  As $q$ has full support, for all $\bm{x}_{-j} \in B_{-j}$ we have $g_{\bm{x}_{-j}}(u) \neq 0$ on
  $A_{j, \bm{x}_{-j}}$, this implies that the $L_2$ norm of this function is
  not zero, $\norm{g_{\bm{x}_{-j}}}_2 \neq 0$. Thus, for $\bm{x}_{-j} \in
  B_{-j}$, by the \textsc{ispd} property of the kernel,
  \begin{align*}
    h(\bm{x}_{-j}) = \int_{x_j} \int_{y_j} g_{\bm{x}_{-j}}(x_j) k(x_j, y_j)
    g_{\bm{x}_{-j}}(y_j) dx_{j} dy_{j} > 0
  \end{align*}
  and since $m_{d-1}(B_{-j})$,
  $\E_{q(\bm{x}_{-j})}[h(\bm{x}_{-j})] > 0$. Thus, $\kccsd > 0$.
\end{proof}

\section{Proof of \Cref{lemma:gen_bound}: Bounding the gap in approximate \textsc{kcc-sd}}\label{sec:gen_error}

To prove \Cref{lemma:gen_bound} we make use of the following lemma
\citep{gorham2017measuring} to bound the difference between the expectation
of the Stein operator on different distributions.
\begin{lemma}\label{lemma:gen_bound_appendix}
  Suppose $\nabla_{\bm{x}} \log p(\bm{x})$ is Lipschitz and $L_2(q) \cap L_2(r)$, and $f$
  and $\nabla_{\bmx} f$ are uniformly bounded and Lipschitz, then we can show that
  \begin{align*}
    \left| \E_{q(\bmx)}[\A_{p(\bmx)}f(\bmx)] -
    \E_{r(\bmy)}[\A_{p(\bmy)}f(\bmy)] \right| \leq K_1 W_{2}(q, r) + \sqrt{K_2
    W_2(q, r)} ,
  \end{align*}

  where $K_1, K_2$ are positive constants.
\end{lemma}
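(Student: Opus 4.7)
The plan is to control the discrepancy via an optimal quadratic-Wasserstein coupling combined with an algebraic decomposition of the Langevin-Stein operator into Lipschitz and non-Lipschitz pieces. Expanding $\A_p f(\bmx) = f(\bmx)^{\top} \nabla_{\bmx}\log p(\bmx) + \nabla_{\bmx}\cdot f(\bmx)$ and letting $\pi$ denote an optimal coupling of $q$ and $r$ with $\E_\pi\|X-Y\|^2 = W_2(q,r)^2$, the triangle inequality gives $|\E_q[\A_p f(X)] - \E_r[\A_p f(Y)]| \leq \E_\pi|\A_p f(X) - \A_p f(Y)|$, reducing the question to pointwise control of the integrand under $\pi$.

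I would use the algebraic splitting $\A_p f(X) - \A_p f(Y) = A_1 + A_2 + A_3$ with $A_1 = [f(X)-f(Y)]^{\top}\nabla\log p(X)$, $A_2 = f(Y)^{\top}[\nabla\log p(X)-\nabla\log p(Y)]$, and $A_3 = \nabla\cdot f(X) - \nabla\cdot f(Y)$. For $A_2$, the uniform bound on $f$ and Lipschitzness of $\nabla\log p$ give $|A_2| \leq \|f\|_\infty \,\mathrm{Lip}(\nabla\log p)\,\|X-Y\|$. For $A_3$, Lipschitzness of $\nabla f$ yields Lipschitzness of $\nabla\cdot f$, hence $|A_3| \leq \mathrm{Lip}(\nabla\cdot f)\,\|X-Y\|$. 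Taking $\pi$-expectations and using $\E_\pi\|X-Y\| \leq W_1(q,r) \leq W_2(q,r)$ produces the $K_1 W_2(q,r)$ contribution, with $K_1$ absorbing the uniform norms and Lipschitz constants of $f$, $\nabla f$, and $\nabla\log p$.

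The delicate piece is $A_1$, where Lipschitzness of $f$ yields $|A_1| \leq L_f \|X-Y\|\,\|\nabla\log p(X)\|$ but $\nabla\log p$ is only assumed $L^2$-integrable, not bounded, so the standard Kantorovich-Rubinstein argument fails. My plan is to apply Cauchy-Schwarz in a two-stage fashion: split
\begin{align*}
\|X-Y\|\,\|\nabla\log p(X)\| = \sqrt{\|X-Y\|}\,\cdot\,\sqrt{\|X-Y\|}\,\|\nabla\log p(X)\|
\end{align*}
and apply Cauchy-Schwarz to obtain
\begin{align*}
\E_\pi\bigl[\|X-Y\|\,\|\nabla\log p(X)\|\bigr] \leq \sqrt{\E_\pi\|X-Y\|}\;\sqrt{\E_\pi\bigl[\|X-Y\|\,\|\nabla\log p(X)\|^2\bigr]}.
\end{align*}
The first factor is at most $\sqrt{W_2(q,r)}$ by Jensen. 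The second factor is bounded by a constant by combining the Lipschitz inequality $\|\nabla\log p(X)\|^2 \leq 2\|\nabla\log p(Y)\|^2 + 2\,\mathrm{Lip}(\nabla\log p)^2\|X-Y\|^2$ with the $L^2(q)\cap L^2(r)$ hypothesis and controlling the residual higher-order $\|X-Y\|$-moment via an additional Cauchy-Schwarz step, absorbing any subleading $W_2$-contribution into $K_1$. The net bound on $\E_\pi|A_1|$ is $L_f\sqrt{K_2}\,\sqrt{W_2(q,r)}$, which is the $\sqrt{K_2\,W_2(q,r)}$ piece in the claim.

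The main obstacle is precisely this term $A_1$: the unboundedness of the score $\nabla\log p$ rules out a direct Lipschitz-Wasserstein estimate, and the non-standard Cauchy-Schwarz split above is what produces the $\sqrt{W_2}$ rate while making essential use of the $L^2$-moment hypothesis. Summing the bounds on $\E_\pi|A_1|$, $\E_\pi|A_2|$, and $\E_\pi|A_3|$ and collecting constants yields the claimed inequality.
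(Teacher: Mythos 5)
Your decomposition into $A_1+A_2+A_3$ is exactly the paper's splitting (up to which of $X,Y$ carries the score), and your treatment of $A_2$ and $A_3$ matches the paper's. You have also correctly located the delicate term and the reason it is delicate. However, your handling of $A_1$ has a genuine gap. After bounding $|A_1|\le L_f\|X-Y\|\,\|\nabla\log p(X)\|$ you apply Cauchy--Schwarz to get the factor $\sqrt{\E_\pi\bigl[\|X-Y\|\,\|\nabla\log p(X)\|^2\bigr]}$, and you then need this to be a finite constant. It is not controlled by the hypotheses: expanding via $\|\nabla\log p(X)\|^2\le 2\|\nabla\log p(Y)\|^2+2\,\mathrm{Lip}(\nabla\log p)^2\|X-Y\|^2$ leaves you with $\E_\pi[\|X-Y\|\,\|\nabla\log p(Y)\|^2]$, which by any Cauchy--Schwarz or H\"older step demands a \emph{fourth} moment of the score (or a bounded coupling distance), and with $\E_\pi\|X-Y\|^3$, a third Wasserstein moment; neither is assumed. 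The ``additional Cauchy--Schwarz step'' you invoke cannot close this.

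The missing ingredient is that you discarded the boundedness of $f$ precisely where it is needed. The paper's move is the pointwise interpolation
\begin{align*}
\|f(X)-f(Y)\| \;\le\; \min\bigl(2B(f),\,L_f\|X-Y\|\bigr) \;\le\; \sqrt{2B(f)L_f}\,\|X-Y\|^{1/2},
\end{align*}
using $\min(a,b)\le\sqrt{ab}$. This puts the square root on $\|X-Y\|$ \emph{before} any expectation is taken, so that a single Cauchy--Schwarz gives $\E_\pi|A_1|\le \sqrt{2B(f)L_f}\,\sqrt{\E_\pi\|X-Y\|}\,\sqrt{\E\|\nabla\log p\|^2}$, which needs only the assumed second moment of the score. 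Your version tries to manufacture the same square root by splitting $\|X-Y\|$ inside the expectation, which merely relocates the unboundedness onto a moment you cannot control. With that one substitution your argument becomes the paper's proof.
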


\begin{proof}
  Suppose the score function, $s_p(\bm{x}) = \nabla_{\bmx} \log p (\bm{x})$, is
  Lipschitz and the function $f$ is bounded with
  a Lipschitz derivative then we can bound the approximation error as follows
  \begin{align*}
  \left| \E_{q(\bmx)} \left[\A_{p(\bmx)}f(\bmx) \right] -
    \E_{r(\bmy)} \left[\A_{p(\bmy)}f(\bmy)\right] \right|
   \leq & \left| \E_{q(\bmx)} \left[f(\bmx)^T s_p(\bmx)\right] - \E_{r(\bmy)}
    \left[f(\bmy)^T s_p(\bmy)\right] \right| \\
  & \qquad \qquad + \left| \E_{q(\bmx)} \nabla_{\bmx} \cdot f(\bmx) -
    \E_{r(\bmy)} \nabla_{\bmy} \cdot f(\bmy) \right|
\end{align*}
Now, assume that $f$ is bounded and $\nabla \log p$ is Lipschitz and so is
$\nabla_{\bmx} f$. Then, we can bound the second term above as follows
\begin{align*}
  \left| \E_{q(\bmx)} \nabla_{\bmx} \cdot f(\bmx) -
    \E_{r(\bmy)} \nabla_{\bmy} \cdot f(\bmy) \right| \leq L(\nabla f) \E [{\norm{\bmx -
  \bmy}_2}] ,
\end{align*}
where $L(h)$ is the Lipschitz constant of the function $h$ and $B(h) = \sup_{\bmx} \norm{h(\bmx)}_{2}$.

Similarly, we split the first term as follows
\begin{align*}
  \left| \E_{q(\bmx)} \left[f(\bmx)^T s_p(\bmx)\right] - \E_{r(\bmy)}
    \left[f(\bmy)^T s_p(\bmy)\right] \right|
  & \leq \left| \E \left[f(\bmx)^T \left( s_p(\bmx) -
    s_p(\bmy) \right)\right] \right|  \\  &\quad \quad + \left| \E [s_p(\bmy)^T
    (f(\bmy) - f(\bmx))] \right| .
\end{align*}
We can then bound the first term above using the fact that the function $f$ is
bounded and the score function is Lipschitz.
\begin{align*}
  \left| \E \left[f(\bmx)^T \left( s_p(\bmx) -
  s_p(\bmy) \right)\right] \right| \leq B(f) L(\nabla \log p)
  \E[\norm{\bmx - \bmy}_2]
\end{align*}
and similarly we can bound the second term by using the fact that the function
$f$ is bounded and Lipschitz and the the score function is square integrable,
\begin{align*}
  \left| \E [ s_p(\bmy)^{T}
  (f(\bmy) - f(\bmx))] \right| &\leq \E [\norm{f(\bmy) - f(\bmx)}_2
  \norm{s_p(\bmy)}_{2}] \\
  & \leq \E \left[ \min \left( 2 B(f), L(f) \norm{\bmx - \bmy}_2 \right)  \norm{s_p(\bmy)}\right]
\end{align*}
and then using the fact that $\min(a, b) \leq \sqrt{ab}$ for $a, b \geq 0$, and
applying Cauchy-Schwarz again we obtain
\begin{align*}
  \left| \E [ s_p(\bmy)^{T}
  (f(\bmy) - f(\bmx))] \right| & \leq \sqrt{2 B(f) L(f)} \E \left[ \norm{\bmx -
                                 \bmy}_2^{\frac{1}{2}} \norm{s_p(\bmy)}_2
                                 \right] \\
  & \leq \sqrt{2 B(f) L(f)} \sqrt{\E\left[ \norm{\bmx - \bmy}_2 \right] \E \left[
    \norm{s_p(\bmy)}_2^{2} \right] } .
\end{align*}
We can then bound the all the terms by
\begin{align*}
    \left| \E_{q(\bmx)}[\A_{p(\bmx)}f(\bmx)] -
    \E_{r(\bmy)}[\A_{p(\bmy)}f(\bmy)] \right| \leq K_1 \E \left[ \norm{\bmx -
  \bmy}_2 \right] + \sqrt{K_2 \E[\norm{\bmx - \bmy}_2]} ,
\end{align*}
where $K_1 = L(\nabla f) + B(f) L(\nabla \log p)$ and $K_2 = 2 B(f)L(f)
\E_{r}[\norm{s_{p}(\bmy)}_2]$ are constants. Now, by taking the infimum over
all joint distributions $P$ on $\bmx$ and $\bmy$, where the marginals match
with $q$ and $r$, we obtain
\begin{align*}
  \left| \E_{q(\bmx)}[\A_{p(\bmx)}f(\bmx)] -
    \E_{r(\bmy)}[\A_{p(\bmy)}f(\bmy)] \right| \leq K_1 W_2(q, r) + \sqrt{K_2 W_2(q, r)},
\end{align*}
where the Wasserstein distance is defined as $W_{2}(p, q) = \inf_{P, \bmx \sim
  p, \bmy \sim q}\E_{P} \left[ \norm{\bmx - \bmy}_2 \right]$.

\end{proof}

Suppose a distribution $q$ satisfies a $\rho$-transport inequality
(Definition 3.58, \citep{wainwright2019high}), then for any distribution $p$ we have the
following inequality
\begin{align}\label{eq:rho_transport}
  W_2 (q, p) \leq \sqrt{2 \rho^2 D(p, q)} ,
\end{align}
where $D$ is the \textsc{kl} divergence. We make use of the $\rho$-transport
inequality to bound the Wasserstein-$2$ in the bound proved in
\Cref{lemma:gen_bound_appendix}.

For brevity we refer to the complete conditional $q(x_j \mid \bm{x}_{-j})$ as
$q_{\mid \bm{x}_{-j}}(x_j)$. And we restate \Cref{lemma:gen_bound} below for reference.

\begin{lemma}\label{lemma:gen_bound}
  Suppose the model class $r_{\lambda_j}$ satisfies a $\rho$-transport
  inequality and $\nabla_{\bm{x}} \log p(\bm{x})$ is Lipschitz and
  $\E_{q}[\norm{\nabla_{\bm{x}} \log p(\bm{x})}],
  \E_{r_{\lambda_j}}[\norm{\nabla_{x_j} \log p(\bm{x} \mid \bmx_{-j})}] < \infty$, and the kernel $k$
  is bounded with $\nabla_{x_j} k(x_j, y_j)$ Lipschitz, then
  \begin{align*}
    \left| \mathcal{S}(q, \A_p, \mathcal{C}_k) - \mathcal{S}_{\lambda}(q, \A_p,
    C_{k}) \right| \leq \sum_{j=1}^{d} K_{1, j} \sqrt{2\rho^{2} \epsilon_j} +
    \sqrt{K_{2, j} \sqrt{2 \rho^{2} \epsilon_j}}
  \end{align*}
  where $\sup_{\bmx_{-j}} \textsc{kl}( q(\cdot | \bmx_{-j}) \mid \mid r_{\lambda_j}) < \epsilon_j$ and $K_{1, j}, K_{2, j}$ are positive constants.

\end{lemma}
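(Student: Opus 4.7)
The plan is to decompose the gap coordinate-wise and, for each coordinate, rewrite the mismatch in a form to which Lemma~\ref{lemma:gen_bound_appendix} applies, with the true conditional $q(\cdot \mid \bmx_{-j})$ playing the role of one distribution and the learned sampler $r_{\lambda_j}$ the other. By Theorem~\ref{thm:closed_form}, $\kccsd$ evaluates the Stein operator at the optimizer $f_j^{*}(\bmx) = \E_{q(y_j \mid \bmx_{-j})}[\A^{j}_{p(y_j \mid \bmx_{-j})}\Phi_{x_j}]$, while $\mathcal{S}_{\lambda}(q,\A_p,\steinset)$ is obtained by replacing this inner conditional with $r_{\lambda_j}$ to get $g_j$. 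Applying the triangle inequality reduces the claim to bounding, for each $j$ and each $\bmx_{-j}$, the quantity $\left|\E_{q(x_j \mid \bmx_{-j})}\A^{j}_{p(x_j \mid \bmx_{-j})}(f_j^{*} - g_j)(\bmx)\right|$, then taking $\E_{q(\bmx_{-j})}$ and summing over $j$.

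The key rewriting I would perform exploits the symmetry of the Stein kernel $k_{cc}^{j}(x_j, y_j; \bmx_{-j}) = \A^{j}_{p(x_j \mid \bmx_{-j})}\A^{j}_{p(y_j \mid \bmx_{-j})}k(x_j, y_j)$ in its arguments $x_j$ and $y_j$. Under the boundedness and integrability hypotheses, Fubini permits commuting the outer $x_j$-expectation and the operator $\A^{j}_{p(x_j \mid \bmx_{-j})}$ through the inner $y_j$-expectations in $f_j^{*}$ and $g_j$, yielding
\begin{align*}
\E_{q(x_j \mid \bmx_{-j})} \A^{j}_{p(x_j \mid \bmx_{-j})}(f_j^{*} - g_j) = \E_{q(y_j \mid \bmx_{-j})} \A^{j}_{p(y_j \mid \bmx_{-j})} \tilde f_j - \E_{r_{\lambda_j}(y_j \mid \bmx_{-j})} \A^{j}_{p(y_j \mid \bmx_{-j})} \tilde f_j,
\end{align*}
where $\tilde f_j(y_j;\bmx_{-j}) = \E_{q(x_j \mid \bmx_{-j})}[k(x_j, y_j)\nabla_{x_j}\log p(\bmx) + \nabla_{x_j} k(x_j, y_j)]$. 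This is precisely the form $\E_{q}\A_p \tilde f - \E_{r}\A_p \tilde f$ to which Lemma~\ref{lemma:gen_bound_appendix} applies, giving the bound $K_{1,j} W_2(q(\cdot \mid \bmx_{-j}), r_{\lambda_j}) + \sqrt{K_{2,j} W_2(q(\cdot \mid \bmx_{-j}), r_{\lambda_j})}$. Invoking the $\rho$-transport inequality~\eqref{eq:rho_transport} together with the uniform KL hypothesis delivers $W_2(q(\cdot \mid \bmx_{-j}), r_{\lambda_j}) \leq \sqrt{2\rho^{2}\epsilon_j}$, and because this bound is uniform in $\bmx_{-j}$ I can take $\E_{q(\bmx_{-j})}$ and sum over $j$ to recover the displayed inequality.

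The hard part will be verifying that $\tilde f_j$ and $\nabla_{y_j} \tilde f_j$ are uniformly bounded and Lipschitz (in $y_j$), so that the constants $K_{1,j}, K_{2,j}$ furnished by Lemma~\ref{lemma:gen_bound_appendix} are finite and can be taken independent of $\bmx_{-j}$. These regularity properties should follow by combining the boundedness of $k$, the Lipschitz assumption on $\nabla_{x_j} k$, the Lipschitz hypothesis on $\nabla_{\bmx}\log p$, and the moment bounds $\E_{q}[\|\nabla_{\bmx}\log p\|]$ and $\E_{r_{\lambda_j}}[\|\nabla_{x_j}\log p(x_j \mid \bmx_{-j})\|]$; the latter is exactly what is needed to control the $K_{2,j}$ term, which in the proof of Lemma~\ref{lemma:gen_bound_appendix} involves the first moment of the score of $p$ under $r_{\lambda_j}$. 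A secondary care point is the Fubini/operator-exchange step, which must be justified from the $q$-Bochner integrability of $\A^{j}_{p(y_j)}\Phi_{x_j}$ used earlier in the closed-form derivation.
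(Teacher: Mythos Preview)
Your proposal is correct and follows essentially the same route as the paper: the paper also rewrites the coordinate-wise gap as $\E_{q(y_j\mid\bmx_{-j})}\A^{j}_p h - \E_{r_{\lambda_j}(y_j\mid\bmx_{-j})}\A^{j}_p h$ with $h(y_j)=\E_{q(x_j\mid\bmx_{-j})}[\A_{p(x_j\mid\bmx_{-j})}k(x_j,y_j)]$ (your $\tilde f_j$), applies Lemma~\ref{lemma:gen_bound_appendix}, and then the $\rho$-transport inequality. Your explicit mention of the Fubini/symmetry step and of the need for the constants $K_{1,j},K_{2,j}$ to be uniform in $\bmx_{-j}$ makes your outline slightly more careful than the paper's own argument, which simply asserts that $h$ satisfies the required boundedness and Lipschitz conditions.
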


\begin{proof}[Proof of \Cref{lemma:gen_bound}]
  Suppose the complete conditionals $r_{\lambda_j}$ satisfy a
  $\rho$-transport inequality \Cref{eq:rho_transport}. Then suppose $f$ is
  bounded and has a Lipschitz derivative then using \Cref{lemma:gen_bound_appendix}
  we get the following bound for each $j$,
  \begin{align*}
    \left| \E_{q(x_j \mid \bm{x}_{-j})}[\A_{p(x_j \mid \bm{x}_{-j})}f(x_j)] -
  \E_{r_{\lambda_j}(y_j \mid \bm{x}_{-j})}[\A_{p(y_j \mid \bm{x}_{-j})}f(y_j)] \right|
    &\leq  K_{1, j} W_{2}(r_{\lambda_j}, q) + \sqrt{K_{2, j} W_{2}(r_{\lambda_j}, q)} \\
    & \leq  K_{1, j} \sqrt{2\rho^2 D(q_{\mid \bm{x}_{-j}}, r_{\lambda_j})} \\
    & \quad \quad + \sqrt{K_{2, j}
    \sqrt{2 \rho^2 D(q_{\mid \bm{x}_{-j}}, r_{\lambda_j})}} .
  \end{align*}

Note \Cref{lemma:gen_bound} follows from \Cref{lemma:gen_bound_appendix} as the
function $h(y_j) = \E_{q(x_j \mid \bm{x}_{-j})}[\A_{p(x_j \mid \bm{x}_{-j})}
k(x_j, y_j)]$ satisfies the boundedness and Lipschitz assumption for
\Cref{lemma:gen_bound_appendix}. Therefore, we can show that if
$\epsilon_j = \sup_{\bmx_{-j}} \textsc{kl}(q_{\mid \bm{x}_{-j}},
r_{\lambda_j})$ then
\begin{align*}
  \left| \mathcal{S}(q, \A_p, \mathcal{C}_k) - \mathcal{S}_{\lambda}(q, \A_p,
    C_{k}) \right| &\leq \sum_{j=1}^{d} \E_{q(\bm{x}_{-j})} \left|
  \E_{r_{\lambda_j}(z_j \mid \bm{x}_{-j})} \A_p h(z_j)  - \E_{q(y_j \mid
  \bm{x}_{-j})} \A_p h(y_j) \right| \\
  & \leq \sum_{j=1}^{d} K_{1, j} \sqrt{\epsilon_j} + \sqrt{K_{2, j} \sqrt{2 \rho^{2}
    \epsilon_j}} .
\end{align*}

\end{proof}

\section{Goodness of fit Testing}\label{sec:gft_appendix}
In this section we show that
\begin{enumerate}
\item When the null hypothesis is true, the bootstrapped statistics can be
  used to approximate quantile of the null distribution, so
  \begin{align*}
    \sup_{\beta} \left| \prob \left[ \sqrt{n}T_n > \beta
    \right] - \prob \left[\sqrt{n}R_n > \beta \mid
  {\{\bmx^{(i)}\}}_{i \leq n} \right]  \right| \rightarrow 0
  \end{align*}
   as $n \rightarrow \infty$.
  This holds when the test statistic is computed using either \textsc{kcc-sd} or
  approximate \textsc{kcc-sd}.
\item When the alternate hypothesis is true, the test statistic computed using
  \textsc{kcc-sd} converges to a positive constant almost surely
  (\Cref{thm:pisnotq}), that is $\prob[T_n > 0] \rightarrow 1$. And
  this leads to an almost sure rejection of the null asymptotically.
\item When the alternate hypothesis holds, the asymptotic behaviour of the test with
  approximate \textsc{kcc-sd} depends on the model $r_{\lambda_j}$.
\end{enumerate}

The goodness-of-fit test using approximate \textsc{kcc-sd} makes use of the
fact that approximate \textsc{kcc-sd} converges to zero as the number of
samples increases, this can be seen immediately using Stein's identity. Stein's
identity states that for bounded functions $f$ with a bounded
derivative (Proposition 1, \citet{gorham2015measuring}), which vanish at infinity,
\begin{align*}
  \E_{p(\bmx)} \left[ \A_{p(\bmx)} f(\bmx) \right] = 0 .
\end{align*}
Using Stein's identity we show that when $p = q$, approximate \textsc{kcc-sd}, $S_{\lambda}(q, \A_p,
\mathcal{C}_k)$, is zero.

\begin{lemma}\label{lemma:approx_kccsd_null}
Suppose $k$ is bounded and twice differentiable in both arguments with bounded
derivatives and both $k(x_j, y_j)$ and $\nabla_{x_j}k(x_j, y_j), \nabla_{y_j}k(x_j, y_j)$ vanish at
infinity, and the score function $\nabla_{y_j} \log p(y_j
\mid \bmx_{-j}) \in L_2(r_{\lambda_j})$ for $q(\bmx_{-j})$ almost
surely and $r_{\lambda_j}$ is a density and $\E_{q(\bmx)}[\norm{\nabla_{\bmx} \log p(\bmx)}^2_2],
\E_{q(\bmx)}[\norm{\nabla_{\bmx} \log q(\bmx)}^2_2] < \infty$. Then
if $p = q$, we have the following
\begin{align*}
  \mathcal{S}_{\lambda}(q, \A_{p}, \steinset) = \sum_{j=1}^d
  \E_{q(\bm{x}_{-j})} \E_{q(x_{j} \mid \bm{x}_{-j})} \A^{j}_{p(x_j \mid
  \bm{x}_{-j})} g_j(\bm{x}) = 0 ,
\end{align*}
where $g_{j}(\bm{x}) = \E_{r_{\lambda_j}(y_j \mid \bm{x}_{-j})}
[\A^{j}_{p(y_j \mid \bm{x}_{-j})} k(x_j, y_j)]$.
\end{lemma}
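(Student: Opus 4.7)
The plan is to condition on $\bm{x}_{-j}$ for each $j$ and apply Stein's identity to the inner (one-dimensional) expectation. When $p=q$, the complete conditionals agree, so $q(x_j\mid\bm{x}_{-j}) = p(x_j\mid\bm{x}_{-j})$, and the inner expectation becomes $\E_{p(x_j\mid\bm{x}_{-j})}[\A^j_{p(x_j\mid\bm{x}_{-j})} g_j(\bm{x})]$. By Stein's identity (Proposition 1 of Gorham and Mackey, 2015), this vanishes as soon as $g_j(\cdot,\bm{x}_{-j})\colon\euclid\to\euclid$ satisfies the required boundary and regularity conditions for the one-dimensional Langevin-Stein operator with respect to $p(\cdot\mid\bm{x}_{-j})$. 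Summing the resulting zeros over $j$ gives the claim.

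First I would write $g_j(\bm{x}) = \E_{r_{\lambda_j}(y_j\mid\bm{x}_{-j})}[b_j(y_j,\bm{x}_{-j})k(x_j,y_j) + \nabla_{y_j} k(x_j,y_j)]$, making its dependence on $x_j$ explicit. Then I would verify the three regularity properties as a function of $x_j$ with $\bm{x}_{-j}$ held fixed. Boundedness of $g_j$ in $x_j$ follows by the triangle inequality and Cauchy--Schwarz under the integral in $y_j$: the first term is controlled by $\|k\|_\infty \cdot \E_{r_{\lambda_j}}[\|\nabla_{y_j}\log p(y_j\mid\bm{x}_{-j})\|]$, which is finite by the assumption $\nabla_{y_j}\log p(y_j\mid\bm{x}_{-j})\in L_2(r_{\lambda_j})$ (and the fact that $r_{\lambda_j}$ is a probability density), while the second is bounded by $\|\nabla_{y_j}k\|_\infty$. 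Differentiating under the integral using the boundedness of $\nabla_{x_j}k$ and $\nabla_{x_j}\nabla_{y_j}k$ yields that $\nabla_{x_j} g_j$ is also bounded. Vanishing at infinity in $x_j$ follows by dominated convergence from the hypothesis that $k(x_j,y_j)$ and its derivatives vanish at infinity in $x_j$.

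Once $g_j(\cdot,\bm{x}_{-j})$ is verified to lie in the class of admissible test functions for the Langevin-Stein operator in one dimension with target $p(\cdot\mid\bm{x}_{-j})$, Stein's identity gives
\begin{align*}
\E_{p(x_j\mid\bm{x}_{-j})}\left[\A^j_{p(x_j\mid\bm{x}_{-j})} g_j(\bm{x})\right] = 0
\end{align*}
for $q(\bm{x}_{-j})$-almost every $\bm{x}_{-j}$. Substituting $q(x_j\mid\bm{x}_{-j}) = p(x_j\mid\bm{x}_{-j})$ (which holds because $p = q$), then integrating over $q(\bm{x}_{-j})$ and summing over $j$ yields $\mathcal{S}_\lambda(q,\A_p,\steinset)=0$.

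The main technical obstacle is the verification that $g_j(\cdot,\bm{x}_{-j})$ meets the boundary/vanishing hypotheses of Stein's identity uniformly enough in $\bm{x}_{-j}$ to apply Fubini and take the outer expectation. The hypotheses of the lemma are precisely tailored to make the interchange of differentiation with the expectation against $r_{\lambda_j}$ valid (boundedness of $k$ and its derivatives) and to make the score-integral finite ($L_2$ condition on $\nabla_{y_j}\log p$ under $r_{\lambda_j}$); the only subtle point is to confirm that these pointwise estimates in $\bm{x}_{-j}$ are integrable against $q(\bm{x}_{-j})$, which follows from the stated $L_2(q)$ assumptions on $\nabla_{\bm{x}}\log p$ and $\nabla_{\bm{x}}\log q$. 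No ISPD property of $k$ is needed here---the result is purely a consequence of Stein's identity applied conditionally.
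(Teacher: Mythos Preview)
Your proposal is correct and follows essentially the same argument as the paper: verify that for each fixed $\bm{x}_{-j}$ the function $x_j\mapsto g_j(x_j,\bm{x}_{-j})$ is bounded with bounded derivative and vanishes at infinity (via Cauchy--Schwarz and the $L_2(r_{\lambda_j})$ assumption on the score), then apply the one-dimensional Stein identity to the inner expectation when $p=q$, and sum over $j$. The paper's proof is identical in structure and in the bounding steps; your remark about the outer $q(\bm{x}_{-j})$-integrability is slightly more explicit than the paper, but otherwise there is no difference in approach.
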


\begin{proof}[Proof of \Cref{lemma:approx_kccsd_null}]
We show that approximate
\textsc{kcc-sd} is zero when $p = q$ by using Stein's identity, which
states that for bounded functions $f$ with a bounded derivative and which
vanish at infinity, we have the following
\begin{align*}
  \E_{p(\bmx)} \left[ \A_{p(\bmx)} f(\bmx) \right] = 0
\end{align*}

We show that the function $g_{j}(x_j; \bmx_{-j})
= \E_{r_{\lambda_j}(y_j \mid \bm{x}_{-j})}
[\A^{j}_{p(y_j \mid \bm{x}_{-j})} k(x_j, y_j)]$ is bounded, with a bounded derivative
and vanishes at infinity in $x_j$ with a fixed $\bmx_{-j}$. Now, using
Cauchy-Schwarz we have
\begin{align}\label{eq:g_j_ineq}
  \left| g_j(x_j; \bmx_{-j}) \right| \leq&
  \left| \E_{r_{\lambda_j}(y_j \mid \bm{x}_{-j})}
  k(x_j, y_j) \nabla_{y_j} \log p(y_j, \bmx_{-j}) \right| +
  \E_{r_{\lambda_j}(y_j \mid \bm{x}_{-j})} \left| \nabla_{y_j} k(x_j, y_j)
  \right| \\ \nonumber
  & \leq \sqrt{\E_{r_{\lambda_j}(y_j \mid \bm{x}_{-j})} [k(x_j, y_j)^2]
    \E_{r_{\lambda_j}(y_j \mid \bm{x}_{-j})} [\nabla_{y_j} \log p(y_j,
    \bmx_{-j})^{2}]} \\ \nonumber
    &  \quad \quad + \E_{r_{\lambda_j}(y_j \mid \bm{x}_{-j})} \left| \nabla_{y_j} k(x_j, y_j)
    \right| .
\end{align}
As the kernel $k$ and its derivative are bounded and both vanish at infinity and
$\nabla_{y_j} \log p(y_j \mid \bmx_{-j}) \in L_2(r_{\lambda_j})$ for
$q(\bmx_{-j})$ almost surely, we have that the function $g_j$ is
bounded and as $x_j \rightarrow
\infty$ the function $g_j$ converges to zero.

We also show that the
function $g_j$ has a bounded derivative with respect to $x_j$. Using the
inequality from \Cref{eq:g_j_ineq} we obtain
\begin{align*}
  \left| \nabla_{x_j} g_j(x_j; \bmx_{-j}) \right| &\leq  \left| \E_{r_{\lambda_j}(y_j \mid \bm{x}_{-j})}
  \nabla_{x_j} k(x_j, y_j) \nabla_{y_j} \log p(y_j, \bmx_{-j}) \right| +
  \E_{r_{\lambda_j}(y_j \mid \bm{x}_{-j})} \left| \nabla_{y_j} \nabla_{x_j} k(x_j, y_j)
  \right| \\
  & \leq \sqrt{\E_{r_{\lambda_j}(y_j \mid \bm{x}_{-j})} [ \nabla_{x_j} k(x_j, y_j)^2]
    \E_{r_{\lambda_j}(y_j \mid \bm{x}_{-j})} [\nabla_{y_j} \log p(y_j,
    \bmx_{-j})^{2}]} \\ \nonumber & \quad \quad + \E_{r_{\lambda_j}(y_j \mid \bm{x}_{-j})}
    \left|  \nabla_{x_j} \nabla_{y_j} k(x_j, y_j) \right| .
\end{align*}
Therefore, as the function is bounded and vanishes at infinity with a bounded derivative,
using Stein's identity (Proposition 1, \citet{gorham2015measuring}) for the
univariate complete conditionals, we can show that for $q(\bmx_{-j})$ almost
surely the following holds
\begin{align*}
  \E_{q(x_j \mid \bmx_{-j})} \A^{j}_{p(x_j \mid \bmx_{-j})}g_j(x_j; \bmx_{-j}) = 0 .
\end{align*}
as $p=q$.

This implies that $\mathcal{S}_{\lambda}(q, \A_{p}, \steinset) =
\sum_{j=1}^{d}\E_{q(\bmx_{-j})} \E_{q(x_j \mid \bmx_{-j})} \A^{j}_{p(x_j \mid
  \bmx_{-j})} g_j(x_j; \bmx_{-j})= 0$.

\end{proof}

Now, we show that under the null, the bootstrapped statistics $\sqrt{n}R_n$ can
be used to estimate the quantiles of the null distribution. Define the test statistic $T_n$ as follows
\begin{align*}
  T_n &= \frac{1}{n} \sum_{i=1}^n h(\bmx^{(i)}) \\
  h(\bmx^{(i)}) &= \sum_{j=1}^{d} \frac{1}{m} \sum_{k=1}^{m} k_{cc}(x_j^{(i)}, y^{(i,
                  k)}_j; \bmx_{-j}^{(i)}) ,
\end{align*}
where $y^{(i, k)}_j \sim q(\cdot \mid \bmx_{-j}^{(i)})$ for \textsc{kcc-sd} and
$y^{(i, k)}_j \sim r_{\lambda_j}(\cdot \mid \bmx_{-j}^{(i)})$ for approximate
\textsc{kcc-sd}. Under the null we
have that $T_n \rightarrow 0$ (\Cref{thm:pisq} for \textsc{kcc-sd} and
\Cref{lemma:approx_kccsd_null} for approximate \textsc{kcc-sd}) almost surely.
Then assuming that $\bmx^{(i)} \overset{i.i.d}{\sim} q$ we have
\begin{align*}
  \sqrt{n} T_n = \frac{1}{\sqrt{n}} \sum_{i=1}^n h(\bmx^{(i)}) \Rightarrow
  N(0, \sigma^{2}_{H_0}) .
\end{align*}
as $\E[h(\bmx^{(i)})] = 0$ and $\E \left[h{(\bmx^{(i)})}^2 \right] < \infty$.

In this work, we do not compute the variance and therefore we use the wild
bootstrap procedure
\citep{shao2010dependent,fromont2012kernels,chwialkowski2014wild,
  chwialkowski2016kernel}. We then define the bootstrapped statistic $R_n$ as
\begin{align*}
  R_n &= \frac{1}{n} \sum_{i = 1}^{n} \epsilon^{i} h(\bmx^{(i)}) ,
\end{align*}
where $\epsilon_i$ are independent Rademacher random variables.
Then note that under the null and under the alternate $R_n \rightarrow 0$
almost surely. We also observe that under the null
\begin{align*}
  \sqrt{n} R_n = \frac{1}{\sqrt{n}} \sum_{i=1}^{n} \epsilon^{i} h(\bmx^{(i)}) \Rightarrow N(0,
  \sigma^{2}_{H_0}) .
\end{align*}
Note, the mean
and variance of the normalized bootstrapped statistics match that of the
normalized test statistic $\sqrt{n} T_{n}$,
\begin{align*}
  \E[\epsilon_i h(\bmx)] = 0, \text{and } \E[\epsilon_i
    h{(\bmx)}^2] = \E [h{(\bmx^{(i)})}^2] .
\end{align*}
Therefore, under the null we have
$ \sup_{\beta} \left| \prob \left[ \sqrt{n}T_n > \beta \right] - \prob
    \left[\sqrt{n}R_n > \beta \mid
  {\{\bmx^{(i)}\}}_{i=1}^{n} \right]  \right| \rightarrow 0$.

Under the alternative hypothesis, we note that by \Cref{thm:pisnotq}, $T_{n} \rightarrow
C > 0$, when using \textsc{kcc-sd}. While, $R_n \rightarrow 0$ almost surely,
therefore as $\prob[T_n > 0] \rightarrow 1$ we reject the null almost surely.

When using approximate \textsc{kcc-sd} as a test statistic, the
probability of rejection asymptotically is controlled by the quality of the
model $r_{\lambda_j}$ as can be seen using \Cref{lemma:gen_bound},
\begin{align*}
  \left| \mathcal{S}_{\lambda}(q, \A_{p}, \steinset) - \mathcal{S}(q, \A_{p},
  \steinset) \right| \leq \sum_{j=1}^{d} K_{1, j} \sqrt{\epsilon_j} + \sqrt{K_{2, j} \sqrt{2 \rho^{2}
    \epsilon_j}} .
\end{align*}
where $\epsilon_j = \sup_{\bmx_{-j}} \textsc{kl}(q_{\mid \bm{x}_{-j}},
r_{\lambda_j})$.

\section{Experiments}\label{sec:appendix_experiments}
\begin{figure}[t]
  \centering
\includegraphics[scale=0.15]{./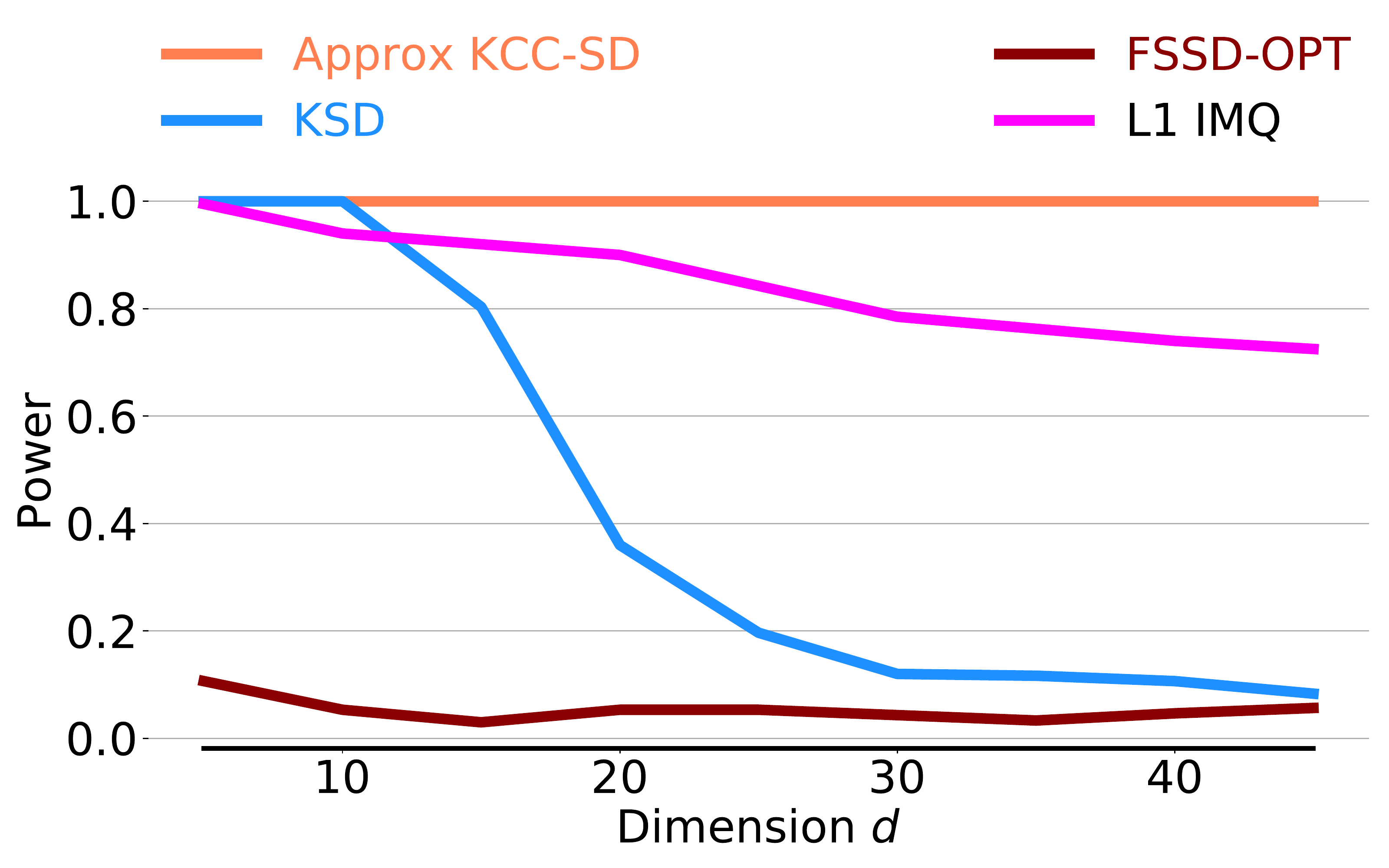}\includegraphics[scale=0.15]{./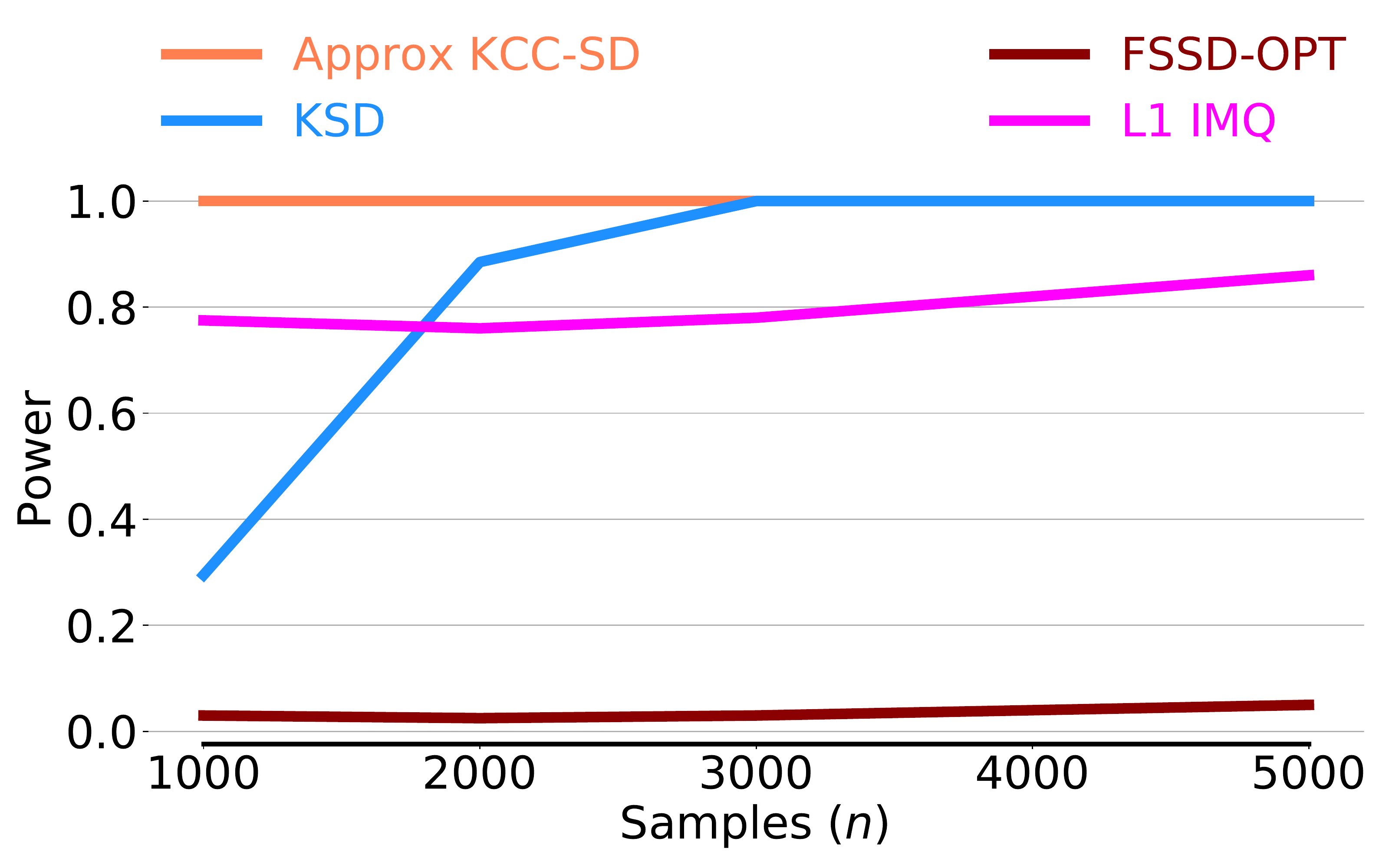}\includegraphics[scale=0.15]{./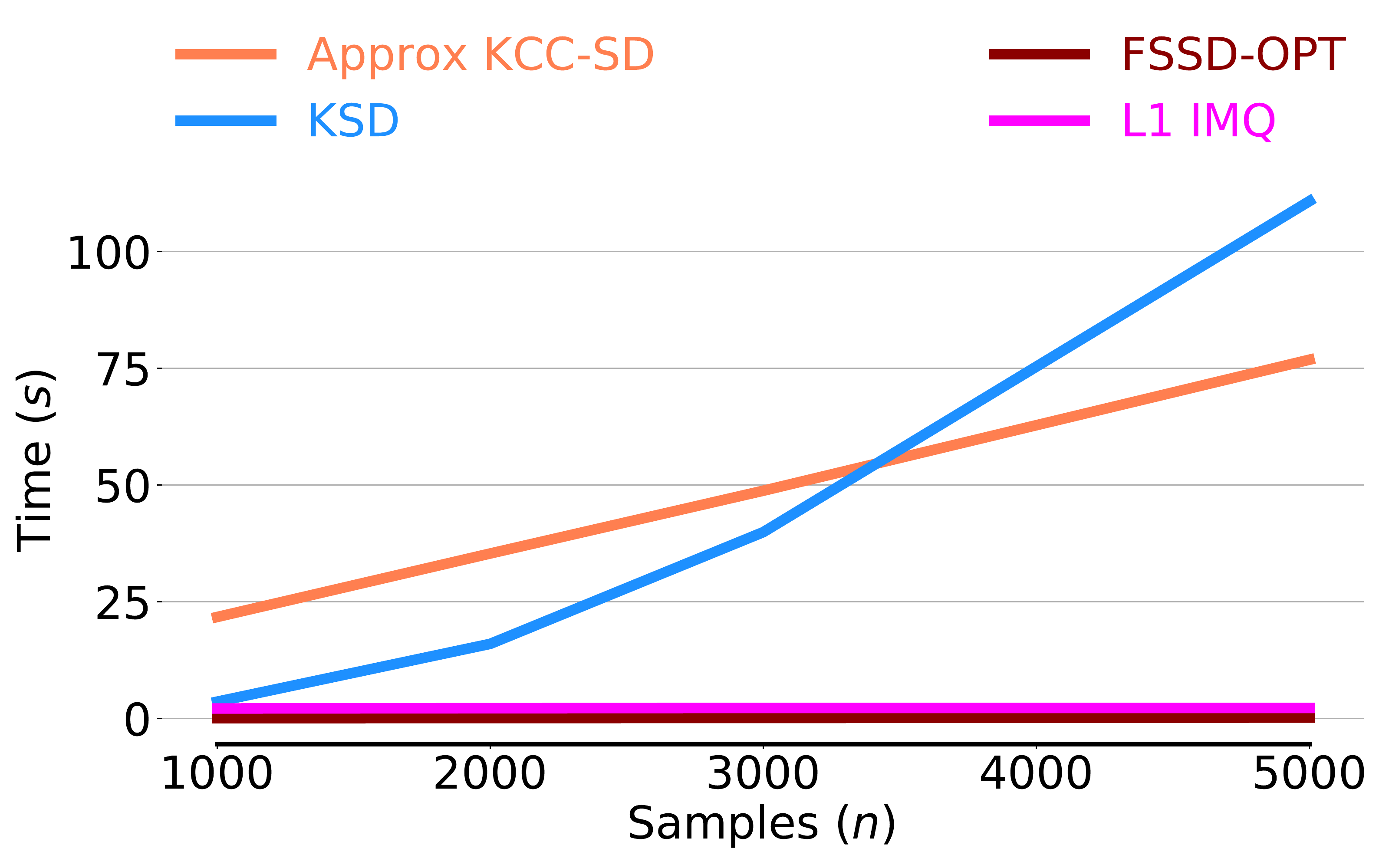}
  \caption{\textbf{\textsc{kcc-sd} has more power than baseline methods with
  the \textsc{imq} kernel.} \textbf{Left:} We compute $n=1000$ samples from
 $q = \prod_{i=1}^d \text{Laplace}(0, 1/\sqrt{2})$ with target density
 $p=N(0, I_d)$, and plot the power with increasing dimension. \textbf{Middle
   and Right:} Here we have samples from the same $p$ and $q$ distributions as
  before, but for fixed $d=30$ we increase the number of samples $n$ to show the
  number of samples required and computation time for baseline methods to achieve
  similar power as \textsc{kcc-sd}.}\label{fig:gft_imq_appendix}
\end{figure}

For the histogram-based sampler we use in our experiments, we use a two-layer
neural network with $15$-dimensional hidden-layer with a sigmoid
activation function. We train the model with gradient descent
for $500$ epochs. We select the model with the lowest validation loss.

For \textsc{fssd-opt} we use $20\%$ of the samples for training and in
approximate \textsc{kcc-sd} we use $20\%$ for training and $10\%$ for validation.

The $p$-value is computed as the proportion of the bootstrapped statistics, $R_n$,
greater than the test statistic, $T_n$. And the power is computed as
the proportion of $p$-values less than the significance level, in other words
the power is the rejection rate of the null when the alternate hypothesis
is true.

\paragraph{Choice of Kernel and Goodness-of-Fit tests.}
All the experiments done with approximate \textsc{kcc-sd}, \textsc{ksd} and
\textsc{fssd-opt} were done using the \textsc{rbf} kernel. The \textsc{rbf} kernel, a $C_0$ kernel (Definition 4.1,
\citet{carmeli2010vector}), suffices in defining consistent goodness-of-fit tests
when comparing independent samples from a distribution
$q$ (Theorem 2.2, \citet{chwialkowski2016kernel}).

\citet{gorham2017measuring} construct
a sequence of empirical distributions, $q_n$, which does not converge to
any distribution, a non-tight sequence. However, they prove that when comparing
sequences $q_n$ with $p=N(\bm{0}, I_d)$ \textsc{ksd} with the \textsc{rbf} kernel still converges to zero.
For this purpose they show that if \textsc{ksd} is computed with the
\textsc{imq} kernel then \textsc{ksd} can enforce
uniform tightness (Theorem 6, \citet{gorham2017measuring}).

However, as we have independent
samples from a distribution $q$, this situation does not
arise and we can use the \textsc{rbf} kernel.
In \Cref{fig:gft_imq_appendix} we repeat the experiments from \Cref{fig:intro}
with \textsc{imq} kernel. In the left panel, we compare the power of the test
using \textsc{kcc-sd}, \textsc{r}$\Phi$\textsc{sd}, \textsc{ksd} and
\textsc{fssd-opt} with $q = \prod_{i=1}^{n}
\text{Laplace}(0, 1/\sqrt{2})$ and $p = N(\bm{0}, I_d)$. We compute $n=1000$
samples and then increase the dimension.

In the center and right panel of \Cref{fig:gft_imq_appendix}, we compare the
same distribution as above in $d=30$ with an increasing number of samples. We
observe that \textsc{kcc-sd} requires less samples than the baseline methods to
have power $1$, and for the baseline methods to have the same amount of power
requires a similar amount of computation.

In the left panel of \Cref{fig:panel_appendix}, we have $p=N(\bm{0}, \Sigma)$ with
$\Sigma_{i, j} = 0.5$ and $\Sigma_{i, i} = 2$ and
samples $\bmx_i = \bm{z}_i + \bm{\epsilon}_i$, where $\bm{\epsilon}_i \sim \prod_{j=1}^d
\text{Laplace}(0, 1/\sqrt{2})$ and $\bm{z}_i \sim N(\bm{0},
\Sigma_1)$ with ${(\Sigma_{1})}_{i, j} = 0.5$ and ${(\Sigma_1)}_{i, i} = 1$,
and $\bm{z}_i$ and $\bm{\epsilon}_i$ are independent.
The samples from $q$ have the same mean and variance as $p$.
We compute $n=500$ samples and increase the
dimension. As the dimension increases, the power of the
\textsc{kcc-sd} test with the \textsc{imq} kernels remains $1$, while
the baseline methods with the \textsc{imq} kernels see a decline in power.

\begin{figure}[t]
  \centering
\includegraphics[scale=0.15]{./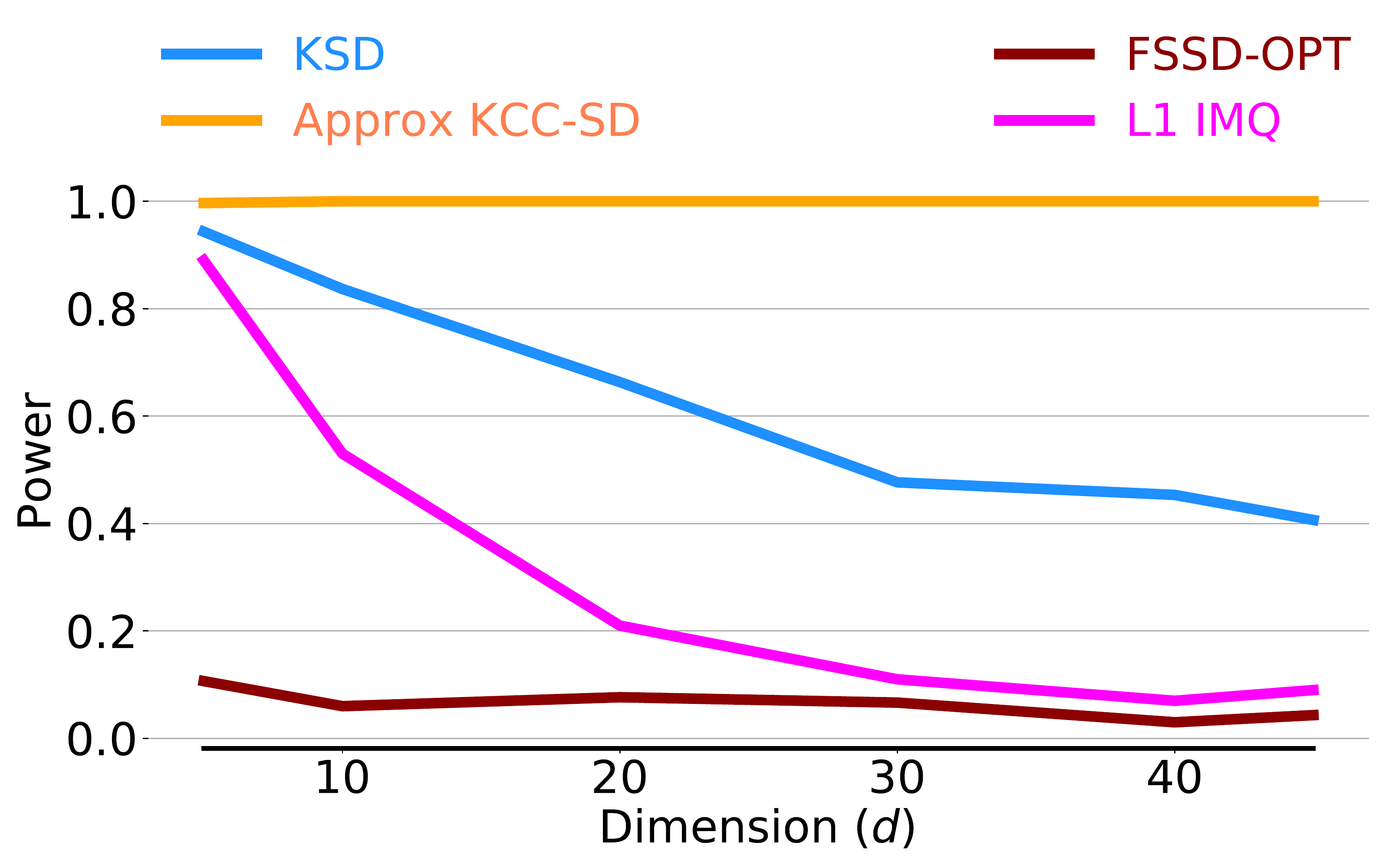}\includegraphics[scale=0.15]{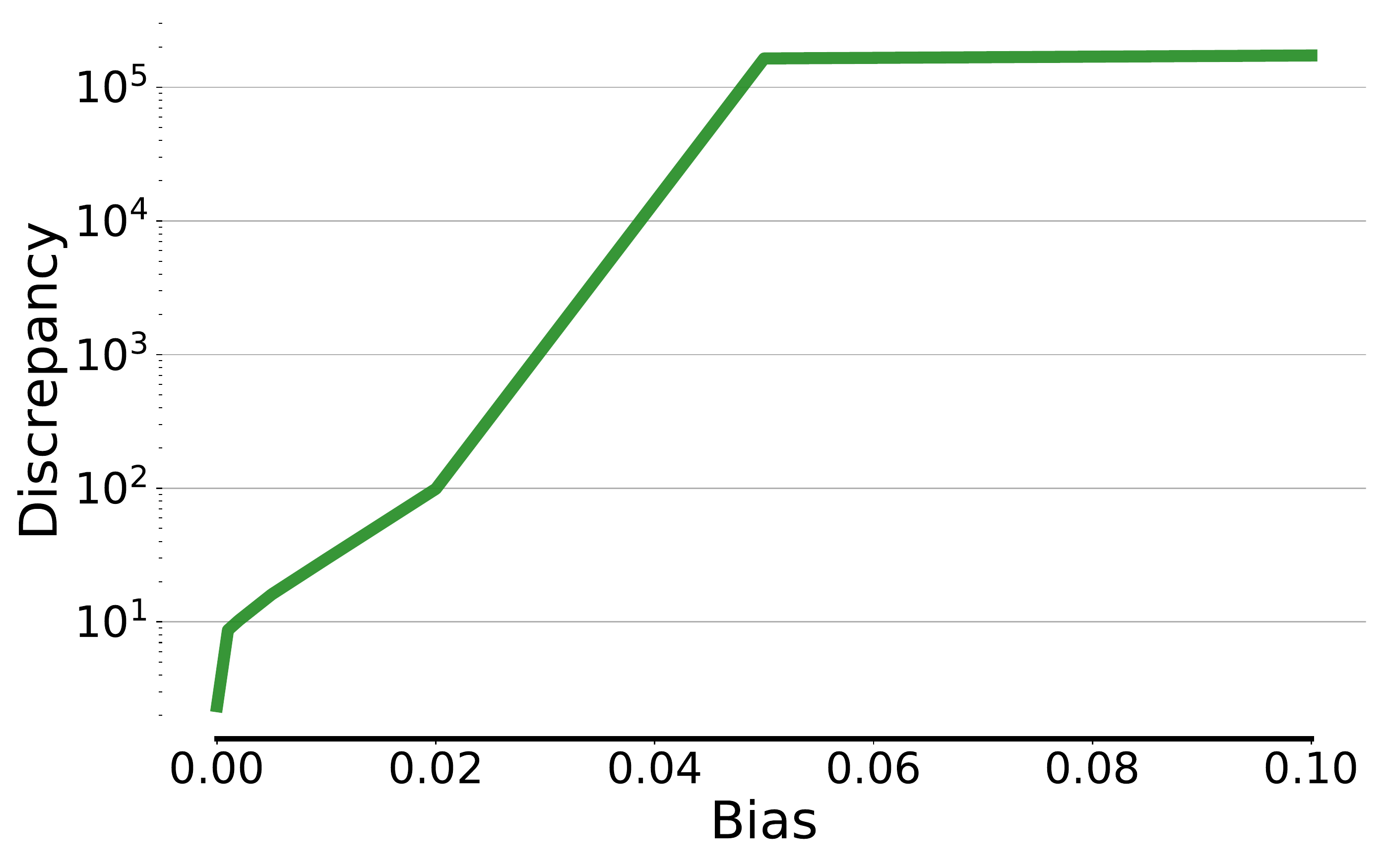}
  \caption{\textbf{Left:} Correlated Gaussian vs Correlated Gaussian with
    Laplace noise. As the dimension increases \textsc{kcc-sd} does not see a
    decrease in performance unlike the baseline methods. We use the
    \textsc{imq} kernel here. \textbf{Right:} As we add
  larger bias terms to the acceptance probability in the inner Metropolis
  sampler, samples from Metropolis-within-Gibbs sampler give larger
  \textsc{kcc-sd}. }\label{fig:panel_appendix}
\end{figure}

\paragraph{Detecting Convergence of a Gibbs Sampler for Matrix Factorization.}
Here we provide details of the probabilistic model considered in the
experiments section for assessing the convergence of a Gibbs sampler for
Bayesian probabilistic matrix factorization
\citep{salakhutdinov2008bayesian}. We focus on
a variant with two mean parameters
$\mu_V$ and $\mu_U$ for user and movie feature vectors $U_i \in
\euclid^{10}, V_j \in \euclid^{10}$ and
fixed the covariance matrix to the identity.
\begin{align*}
  p(\bm{U}|\bm{\mu}_U) &= \prod_{i=1}^N N(U_i|\bm{\mu}_U, I),
                         \quad p(\bm{\mu}_U) = N(0,I) \\
  p(\bm{V}|\bm{\mu}_V) &= \prod_{j=1}^M N(V_j|\bm{\mu}_V, I),
                         \quad p(\bm{\mu}_V) = N(0,I) \\
  p(\bm{R} \mid \bm{U}, \bm{V}) &= \prod_{i=1}^{N} \prod_{j=1}^M {\left[ N(R_{ij}
                                  \mid U_{i}^{T} V_{j}, I) \right]}^{I_{ij}}
\end{align*}
where $U_i, V_j$ have normal priors and $I_{ij}$ is the indicator variable
that is one if user $i$ rated movie $j$ and $0$ otherwise (see
\Cref{sec:appendix_experiments}).

\paragraph{Selecting Biased Samplers.}
We use a simple bimodal Gaussian mixture model to demonstrate the power of
\textsc{kcc-sd} in distinguishing biased samplers,
\begin{align*}
  x_i \sim \frac{1}{2} N(\theta_1, 2) + \frac{1}{2}(\theta_2,2) \ ,
\end{align*}
where $\theta_1, \theta_2$ have standard normal priors. We draw 100 samples
of $x_{i}$ from the model with $(\theta_1,\theta_2)=(1,-1)$. We choose
Metropolis-within-Gibbs to sample from the posterior over $\bm{\theta}$.
This sampler uses a Metropolis sampler to sample each complete conditional
inside the Gibbs sampler. We also use the Metropolis step to generate auxiliary
variables used to calculate \textsc{kcc-sd}. Denote $q(\bm{\theta})$ to be
the target distribution. The inner Metropolis step accepts the candidate
$\bm{\theta}_{new}$ with probability $\min \left(1,
  q(\bm{\theta}_{new})/q(\bm{\theta}_{old})\right)$. Then we add a bias
term to the acceptance probability, $\min \left(1,
  q(\bm{\theta}_{new})/q(\bm{\theta}_{old})+bias\right)$, thus the sampler
is not unbiased anymore. We run for 60,000 iterations in total
and drop the first 50,000 for burn-in. We show
\textsc{kcc-sd}s versus size of the bias terms in the right panel of
\Cref{fig:panel_appendix}. \textsc{kcc-sd} increases with the size of the bias.


\end{document}